\pgfplotsset{compat=newest}
\DeclareMathOperator{\rank}{rank}
\DeclareMathOperator{\Var}{Var}
\DeclareMathOperator{\poly}{poly}
\newcommand{\T}{{\sf T}}
\DeclareMathOperator*{\argmin}{arg\,min}
\DeclareMathOperator{\spn}{span}
\newcommand{\RR}{{\mathbb{R}}}
\newcommand{\EE}{{\mathbb{E}}}
\newcommand{\A}{\mathbf{A}}
\newcommand{\B}{\mathbf{B}}
\newcommand{\C}{\mathbf{C}}
\newcommand{\X}{\mathbf{X}}
\newcommand{\U}{\mathbf{U}}
\newcommand{\V}{\mathbf{V}}
\newcommand{\Z}{\mathbf{Z}}
\newcommand{\bP}{\mathbf{P}}
\newcommand{\bS}{\mathbf{S}}
\newcommand{\M}{\mathbf{M}}
\newcommand{\bE}{\mathbf{E}}
\newcommand{\x}{\mathbf{x}}
\newcommand{\uu}{\mathbf{u}}
\newcommand{\vv}{\mathbf{v}}
\newcommand{\I}{\mathbf{I}}
\newcommand{\one}{\mathbf{1}}
\newcommand{\bTheta}{\boldsymbol{\Theta}}
\newcommand{\bSigma}{\boldsymbol{\Sigma}}
\newcommand{\bPi}{\boldsymbol{\Pi}}
\newcommand{\btheta}{\boldsymbol{\theta}}
\definecolor{RED}{rgb}{0.7,0,0}
\definecolor{BLUE}{rgb}{0,0,0.69}
\definecolor{GREEN}{rgb}{0,0.6,0}
\definecolor{PURPLE}{rgb}{0.69,0,0.8}
\definecolor{ORANGE}{rgb}{0.8,0.4,0}
\newcommand{\RED}{\color[rgb]{0.70,0,0}}
\newcommand{\BLUE}{\color[rgb]{0,0,0.69}}
\newtheorem{Definition}{Definition}
\newtheorem{Assumption}{Assumption}
\newtheorem{Theorem}{Theorem}
\newtheorem{Corollary}{Corollary}
\newtheorem{Proposition}{Proposition}
\newtheorem{Lemma}{Lemma}
\newtheorem{Remark}{Remark}
\newcommand{\algorithmicoption}{\textbf{option}}
\newcommand{\OPTION}[2][default]{\ALC@it\algorithmicoption\ #2\ %
  \algorithmicdo%
  \ALC@com{#1}\begin{ALC@if}}
\newcommand{\ENDOPTION}{\end{ALC@if}}
\newcommand{\pms}[1]{\ensuremath{{\scriptstyle\pm #1}}}
\title{Analysis and Approximate Inference of \\ Large Random Kronecker Graphs} 
\author{%
  Zhenyu Liao\footnotemark[1]
  \and
  Yuanqian Xia\footnotemark[1]
  \and
  Chengmei Niu
  \and
  Yong Xiao\footnotemark[2]
}
\date{\today}
\begin{document}

\renewcommand{\thefootnote}{}
\footnotetext{Z.~Liao, Y.~Xia, C.~Niu, and Y.~Xiao are with the School of Electronic Information and Communications, Huazhong University of Science and Technology, Wuhan 430074, China. Y.~Xiao is also with the Peng Cheng Laboratory, Shenzhen, Guangdong 518055, China, and the Pazhou Laboratory (Huangpu), Guangzhou, Guangdong 510555, China.}
\renewcommand{\thefootnote}{\fnsymbol{footnote}}
\footnotetext[1]{Equal contribution.}
\footnotetext[2]{Author to whom any correspondence should be addressed: Yong Xiao (email: \texttt{yongxiao@hust.edu.cn}).}
\renewcommand{\thefootnote}{\arabic{footnote}}

\maketitle

\begin{abstract}
Random graph models are playing an increasingly important role in various fields ranging from social networks, telecommunication systems, to physiologic and biological networks.
Within this landscape, the random Kronecker graph model, emerges as a prominent framework for scrutinizing intricate real-world networks.
In this paper, we investigate large random Kronecker graphs, i.e., the number of graph vertices $N$ is large.
Built upon recent advances in random matrix theory (RMT) and high-dimensional statistics, we prove that the adjacency of a large random Kronecker graph can be decomposed, in a spectral norm sense, into two parts: a small-rank (of rank $O(\log N)$) signal matrix that is linear in the graph parameters and a zero-mean random noise matrix. 
Based on this result, we propose a ``denoise-and-solve'' approach to infer the key graph parameters, with significantly reduced computational complexity. 
Experiments on both graph inference and classification are presented to evaluate the our proposed method. 
In both tasks, the proposed approach yields comparable or advantageous performance, than widely-used graph inference (e.g., KronFit) and graph neural net baselines, at a time cost that scales linearly as the graph size $N$.
\end{abstract}

\section{Introduction}
\label{sec:intro}

We are living in an increasingly connected world, with a rapidly growing amount of data arising from large-scale interactive systems such as social \cite{newman2002Random,myers2014Information}, traffic, biological \cite{pavlopoulos2011Using}, and financial networks.
Graph model, in this respect, provides a natural way to describe and assess the behavior of these non-Euclidean data, e.g., how they interact with each other, in the form of pairwise relationships.

When facing large-scale networks, probabilistic graph model is a useful tool to analyze the complex behavior of entities or agents with only a small number of parameters, facilitating further analysis and graph-type data mining.
The most widely known random graph model is the Erdős–Rényi graph, for which the presence or absence of the edge between two graph vertices is modeled as an independent Bernoulli random variable with a probability parameter $p \in (0,1)$.
More advanced random graph models such as the stochastic block model~\cite{karrer2011stochastic} and Watts--Strogatz model~\cite{watts1998Collective} have then been proposed to better characterize the community structure and small-world behavior of realistic graphs.

In this paper, we investigate another popular random graph model, the random Kronecker graph model. 
It is first introduced in~\cite{leskovec2010kronecker} and applies the Kronecker product operation on a small initiator matrix $\bP_1$ (of size $m$ by $m$, say), to generate a probability matrix $\bP_K$ of much larger size (e.g., $m^K$ by $m^K$ after $K$ times of operation).
See \Cref{def:random_Kronecker_graph} below for a formal definition.
A random Kronecker graph can then be generated using $\bP_K$ with, a priori, all its structural information (such as its hierarchical structure and ``fractal'' propriety, see ~\cite{leskovec2010kronecker}) summarized in the few parameters of $\bP_1$.
Random Kronecker graph model has already shown promising potential in analyzing realistic graphs and networks in its graph statistics such as degree distributions, and/or eigenspectral behavior.

The Kronecker graph model can be used in the following two ways: 
(i) as a generative model to produce large-scale graphs that, while synthetic, closely mimic realistic graphs, and can be used for simulation and sampling purposes~\cite{leskovec2010kronecker}; and
(ii) as a prominent (low-dimensional) features extractor (e.g., the Kronecker initiator $\bP_1$) for the graph of interest, for better visualization and/or for performance in downstream tasks such as graph classification~\cite{fitzgibbon2012human,errica2019Fair}.

For the second type of application, one needs to estimate the graph parameters $\bP_1$ from a given graph.
This presents the following technical challenges:
(i) for a Kronecker graph having $N$ vertices with $N$ large, the inference of a single graph should be \emph{computationally efficient}, since downstream tasks such as graph classification may involve \emph{a non-trivial number} of such graphs; and
(ii) the Kronecker graph model naturally describes a set of isomorphic graphs (see \Cref{rem:correspondence} below), and one needs to solve the vertices matching problem, leading generally to an even heavier computational burden, if doable at all.

\subsection{Our Approach and Contribution}

The main contribution of this work is the \emph{precise} high-dimensional characterization of the Kronecker random graph model, and as a byproduct, an efficient algorithm to infer the graph parameters. 
Precisely,
\begin{enumerate}
  \item we perform, in the high-dimensional regime, a detailed analysis of the random Kronecker graph model, and show in \Cref{theo:S+N_for_A} that its adjacency follows a ``signal-plus-noise'' model with \emph{small-rank} signal matrix linear in the graph parameters of interest;
  \item we propose, in \Cref{algo:meta}, a \underline{\textbf{denoise-and-solve}} meta algorithm to approximately infer the Kronecker graph parameters from its adjacency, by first recovering the desired signal via \underline{\textbf{denoising}}, and then \underline{\textbf{solving}} the permuted linear system for the graph parameters; 
  \item we further provide, in \Cref{subsec:shrinkage}~and~\ref{subsec:permute}, use examples on the proposed meta algorithm, and then in \Cref{sec:num} numerical results to demonstrate a better performance--complexity tradeoff obtained with the proposed approach, for tasks of graph inference and graph classification~\cite{errica2019Fair}. In both tasks, the proposed approach yields comparable or advantageous performance, than widely used Kronecker graph inference and graph neural net-based methods, at a time cost that scales \emph{linearly} as the graph size $N$.
\end{enumerate}

\subsection{Related Work}

Here, we provide a brief review of related previous efforts.

\paragraph{Kronecker graph model and its applications.}
The use of Kronecker products in graphs dates back to \cite{weichsel1962kronecker}, and was largely popularized since the introduction of the Kronecker graph model~\cite{leskovec2010kronecker}.
The Kronecker graph model is a commonly adopted complex graph generator that has shown promising results in fitting many realistic networks and/or graphs, e.g., social, biological, and chemical networks, see, e.g.,~\cite{leskovec2010kronecker,reiser2022Graph}. 
It is inspired from the fact that many realistic graphs possess the self-repeating, also called \emph{multifractality}, structural property. 
Many (theoretical) properties of the random Kronecker graph model have been established in a sequence of works, e.g. the (asymptotic) size of its giant component~\cite{mahdian2010stochastic,horn2012Giant} or its degree distribution~\cite{kang2015Propertiesa,seshadhri2013indepth}.
Practical algorithms such as the KronFit approach \cite{leskovec2010kronecker} and the moment-based approach \cite{gleich2012MomentBased} are proposed to estimate the Kronecker graph parameters, with applications in, e.g., modeling human activities in videos~\cite{fitzgibbon2012human}.

\paragraph{Random graph model and random matrix theory.} 
Random graph models have attracted significant research interest in applied math, computer science, and machine learning, with applications ranging from unsupervised \cite{von2007tutorial,couillet2016kernel}, semi-supervised \cite{belkin2004regularization,zhu2005semi}, and more recently, to self-supervised learning \cite{balestriero2022Contrastive}.
Since the (eigen/singular) spectra play a crucial role in the analysis of random graph models~\cite{chung1997Spectral,chung2006Complex}, 
random matrix theory~\cite{couillet2022RMT4ML} and high-dimensional statistics~\cite{vershynin2018high}
appears as prominent tools to characterize the spectral behavior of large random graphs.
We refer the readers to \cite[Chapter~7]{couillet2022RMT4ML} for more detailed discussions on the random matrix analysis of some popular random graph models, with a focus on community detection.
Here, our result extends the line of works on the spectral analysis of dense random graphs \cite{lovasz2006Limits,abbe2018Community} to the Kronecker graph model, and propose an efficient inference method based on shrinkage estimation and permuted linear regression.


\subsection{Notations and Organization of the Paper} 

\paragraph{Notations.}
We denote scalars by lowercase letters, vectors by bold lowercase, and matrices by
bold uppercase.
We denote $\RR$ the set of real numbers and $\mathbb N$ the set of natural numbers.
For a matrix $\A$, we denote $\A^\T$ its transpose and $\A^\dagger$ its Moore–-Penrose pseudoinverse.
We use $\| \cdot \|_2$ to denote the Euclidean norm for vectors and spectral/operator norm for matrices, and denote $\| \A \|_{\max} \equiv \max_{i,j} |A_{ij}|$.
We use $\| \vv \|_0$ and $\| \vv \|_1$ to denote the $\ell_0$-~and~$\ell_1$-norm of the vector $\vv \in \RR^p$.
We denote $\one_p$ and $\I_p$ the vector of all ones of dimension $p$ and the identity matrix of dimension $p \times p$, respectively.
For two matrices $\A \in \RR^{m \times n}, \B \in \RR^{p \times q}$, we denote $\A \otimes \B \in \RR^{mp \times nq}$ their Kronecker product. 
We denote ${\rm vec}(\A) \in \RR^{mn}$ the vectorization of $\A \in \RR^{m \times n}$ by appending (in order) the columns of $\A$, and ${\rm mat}(\mathbf{a})$ the matricization of a vector $\mathbf{a}$ so that ${\rm mat}( {\rm vec}(\A) ) = \A$.
For a random variable $z$, $\EE[z]$ and $\Var[z]$ denotes the expectation and variance of $z$, respectively.
As $N \to \infty$, we use $O(\cdot)$ and $o(\cdot)$ notations as in standard asymptotic statistics \cite{van2000asymptotic}, and use $\tilde O(\cdot)$ and $\tilde o(\cdot)$ to hide terms that grows at most as ${\rm poly}(\log N)$.

\medskip 

In the remainder of the paper, we introduce, in \Cref{sec:system}, the random Kronecker graph model under study, together with our working assumption.
Our main technical results on behavior of high-dimensional random Kronecker graphs, as well as the proposed approximate inference framework, are placed in \Cref{sec:main}.
Numerical evaluations of the proposed algorithm in graph inference and classification are given in \Cref{sec:num}.
The article closes with conclusion and future perspectives in \Cref{sec:con}.

\section{System Model and Preliminaries}
\label{sec:system}

For a directed graph $G(V,E)$ having $N$ vertices, we use $\A \in \{ 0,1 \}^{N \times N}$ for its adjacency matrix, so that the edge $(i,j)$ is present in $G$ if $[\A]_{ij} = 1$, and $[\A]_{ij} = 0$ otherwise.
In this paper, we focus on the random Kronecker graph~\cite{leskovec2010kronecker} defined as follows.

\begin{Definition}[Random Kronecker graph]\label{def:random_Kronecker_graph} 
We say a graph $G$ having $N$ vertices follows a random Kronecker graph model with probability initiator $\bP_1$,
\begin{equation}
   \bP_1 = \{ P_{uv} \}_{u,v = 1}^m \in \RR^{m \times m}, \quad \quad P_{uv} \in (0,1),
\end{equation}
if the entries of its adjacency matrix $\A \in \{0,1 \}^{N \times N}$ are (up to vertex correspondence via a permutation matrix $\bPi$ of size $N$, see \Cref{rem:correspondence} below) independently drawn from a Bernoulli distribution with parameter $\bP_K \in \RR^{N \times N}$. 
This probability matrix $\bP_K$ is the $K$-th Kronecker power of $\bP_1$:
\begin{equation}\label{eq:def_bP_K}
   \bP_K = \bP_{K-1} \otimes \bP_1 = \underbrace{ \bP_1 \otimes \ldots \otimes \bP_1 }_{ K~\text{times} } = \bP_1^{\otimes K } , 
\end{equation}  
with $N = m^K$ and $K \in \mathbb N$. 
That is, for $i,j \in \{1,\ldots,N\}$,
\begin{equation}\label{eq:def_Z}
    [\A]_{ij} \sim {\rm Bern}( [\bP_K]_{ij}),~\text{and}~[\A]_{ij} = [\bP_K]_{ij} + [\Z]_{ij},
\end{equation}
for $\Z \in \RR^{N \times N}$ having independent entries of zero mean and variance $[\bP_K]_{ij}(1- [\bP_K]_{ij})$.
\end{Definition}

The Kronecker graph model in \Cref{def:random_Kronecker_graph} is parameterized by the probability matrix $\bP_K \in \RR^{N \times N}$.
Note that ({\romannumeral 1}) we have $N = m^K$ so that $K = \log_m N$; and ({\romannumeral 2}) the entries of $\bP_K$ are polynomial in the entries of $\bP_1$, i.e., $[\bP_K]_{ij} = \poly(P_{uv}) \in (0,1)$, and \emph{only} depends on $K$ and the probability initiator $\bP_1 \in \RR^{m \times m}$ per \eqref{eq:def_bP_K}.

The Kronecker graph model in \Cref{def:random_Kronecker_graph} naturally defines a set of isomorphic graphs, by exchanging the indices of its vertices.
This is discussed in the following remark.
\begin{Remark}[Vertices matching]\normalfont
\label{rem:correspondence}
For a Kronecker graph $G$ having $N$ vertices in \Cref{def:random_Kronecker_graph}, with each vertex having a unique label\footnote{This can be considered as the vertex index, which does not carry any particular information about the vertex, but just uniquely identifies the vertex.}, one has
\begin{equation}
  \A = \bPi (\bP_K + \Z) \bPi^{-1},
\end{equation}
for some permutation matrix $\bPi \in \mathcal P_N$, with $\mathcal P_N$ the set of all permutation matrices of dimension $N$ by $N$.
As such, the inference of $G$ from its random adjacency $\A$ comprises both the inference of
({\romannumeral 1}) the probability matrix $\bP_K$ (or equivalent $\bP_1$ with known $K$); \emph{and} 
({\romannumeral 2}) the vertex correspondence uniquely determined by $\bPi$.
\end{Remark}

To infer the Kronecker graph parameters from its random adjacency $\A$, we position ourselves under the following re-parameterization on the Kronecker initiator $\bP_1$.

\begin{Assumption}[Re-parameterization of graph initiator]\label{ass:growth-rate}
We have, for fixed $m$ and as $N \to \infty$ that, the entry $P_{uv}$ of the initiator $\bP_1$ can be re-parameterized as
\begin{equation}
    P_{uv} = p + X_{uv}/\sqrt N,
\end{equation}
for $p \in (0,1)$ and $\X \equiv \{ X_{uv} \}_{u,v=1}^m$ with $ \| \X \|_{\max} = O(1)$.
\end{Assumption}

\begin{Remark}[On \Cref{ass:growth-rate}]\normalfont
Taking $X_{uv} = 0$ in \Cref{ass:growth-rate} one gets the popular Erd\H{o}s--Rényi graph with probability parameter $\bar p = p^K$. 
In this respect, \Cref{ass:growth-rate} says that the Kronecker graph under study is an extension to the Erd\H{o}s--Rényi graph model, with $\X \in \RR^{m \times m}$ that characterize its deviation from the Erd\H{o}s--Rényi model. 
While large-and-sparse and dense-and-small are two commonly adopted approaches for simplifying realistic graphs, it has already been reported that many extremely large realistic networks and/or graphs do contain important large-and-dense subgraphs, see \cite{gibson2005Discovering, danisch2017large, chekuri2022densest} for the example of connection graphs between hosts on the World Wide Web, in which there exist several hundred giant dense subgraphs of at least ten thousand hosts. 
Approximating these graphs/networks based on large-and-sparse or dense-and-small assumptions may ignore some specific structural attributes that can be critical for graph/network analysis.
\end{Remark}

\section{Main Results}
\label{sec:main}

Having introduced the Kronecker graph in \Cref{def:random_Kronecker_graph} and our working \Cref{ass:growth-rate}, we present now our main results.
We perform, in \Cref{subsec:analysis}, a detailed analysis of the random Kronecker graph, and show in \Cref{theo:S+N_for_A} that its adjacency $\A$ is, up to permutation, the sum of a small-rank ``signal'' matrix (that is linear in the graph parameters $\X$ of interest) and a random ``noise'' matrix; 
this allows us to propose, in \Cref{algo:meta}, a \underline{\textbf{denoise-and-solve}} meta algorithm to approximately infer the Kronecker graph parameters.
We then provide, in \Cref{alg:shrinkage_estim}~and~\Cref{alg:permuted_LR} of \Cref{subsec:shrinkage}~and~\Cref{subsec:permute}, respectively, concrete use examples for the proposed inference framework.

\subsection{Analysis of Kronecker graphs and a meta algorithm}
\label{subsec:analysis}

Our objective is to estimate the Kronecker graph initiator $\bP_1$ (so both $p$ and $\X \in \RR^{m \times m}$ under the re-parameterization in \Cref{ass:growth-rate}) from a random realization of the graph adjacency $\A$.
To this end, we define, for $\bS_1 = \X/N$ and $k \in \{ 2, \ldots, K\}$, the following sequence of matrices $\bS_1, \ldots, \bS_K$ as
\begin{equation}\label{eq:def_S}
     \bS_k = \frac{p^{k-1}}N ( \one_{m^{k-1}}\one_{m^{k-1}}^\T) \otimes \X + p \bS_{{k-1}} \otimes (\one_m\one_m^\T).
\end{equation}
We then show that the $K$-th Kronecker power $\bP_K = \EE[\A]$ of $\bP_1$ is closely connected to $\bS_K$ defined above (which is then closely related to the graph parameters $\X$) and is of small rank with respect to its dimension $N$. 
This is described in the following result, and proven in \Cref{sec:appendix_proof_of_linearized_rank}.

\begin{Proposition}[Approximate small-rankness of $\bP_K$]\label{prop:linearized}
Under \Cref{ass:growth-rate} and for $N $ large, we have, for $\bP_K \in \RR^{N \times N}$ the $K$-th Kronecker power of $\bP_1$ as in \eqref{eq:def_bP_K} that:
\begin{itemize}
  \item[({\romannumeral 1})] $ \| \bP_K - \bP_K^{\rm lin} \|_{\max} = \tilde O(N^{-1})$ and $ \| \bP_K - \bP_K^{\rm lin} \|_2 = \tilde O(1)$ for a \emph{linearized} $\bP^{\rm lin}_{K}$ defined as
  \begin{equation}\label{eq:def_P_lin}
    \bP^{\rm lin}_{K} \equiv p^K\one_{N}\one_{N}^\T + \sqrt{N} \bS_K,
  \end{equation}
  with $\bS_K$ in \eqref{eq:def_S} for $k =K$ so that $\| \bS_K \|_2 = \tilde O(1)$; and
  \item[({\romannumeral 2})] $\bS_K$ is \emph{linear} in (the entries of) $\X$, in the sense that
  \begin{equation}\label{eq:S_linear_propo}
      \bS_K = {\rm mat}(\bTheta{\rm vec}(\X)) \in \RR^{N \times N}, 
  \end{equation}
  for known coefficients $\bTheta \in \RR^{N^2 \times m^2}$ (from binomial expansion) such that $\| \bTheta \|_{\max} = \tilde O(N^{-1})$ with $\bTheta \one_{m^2} = \frac{p^{K-1} K}N \one_{N^2}, \bTheta^\T \one_{N^2} = \frac{p^{K-1}KN}{m^2} \one_{m^2}$; and
  \item[({\romannumeral 3})] $\max( \rank(\bS_K),\rank(\bP_K^{\rm lin}) )  \leq (m-1)K+1$.
\end{itemize}
\end{Proposition}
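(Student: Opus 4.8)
The plan is to expand $\bP_K=\bP_1^{\otimes K}=(p\,\one_m\one_m^\T+N^{-1/2}\X)^{\otimes K}$ (with $N=m^K$ held fixed, as in \Cref{ass:growth-rate}) by multilinearity of the Kronecker product into $2^K$ terms, one for each subset $S\subseteq\{1,\dots,K\}$ recording which of the $K$ factors contributes $N^{-1/2}\X$ (the others contributing $p\,\one_m\one_m^\T$). The term $S=\emptyset$ gives $p^K(\one_m\one_m^\T)^{\otimes K}=p^K\one_N\one_N^\T$, and I would first verify, by induction on $k$ matching the recursion \eqref{eq:def_S}, that the sum of the $|S|=1$ terms equals $\sqrt N\,\bS_k$: the step splits $\bP_k=\bP_{k-1}\otimes\bP_1$ into (constant of $\bP_{k-1})\otimes$(linear of $\bP_1$) plus (linear of $\bP_{k-1})\otimes$(constant of $\bP_1$), which is exactly the two terms of $\sqrt N\,\bS_k$. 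Hence $\bP_K^{\rm lin}$ in \eqref{eq:def_P_lin} is precisely the order-$\le 1$ truncation, and $\bP_K-\bP_K^{\rm lin}$ is the sum of all $|S|\ge2$ terms. Equivalently, writing $[\bP_K]_{ij}=\prod_{\ell=1}^K(p+X_{u_\ell v_\ell}/\sqrt N)$ with $(u_1\cdots u_K)$, $(v_1\cdots v_K)$ the base-$m$ digit expansions of $i,j$, the remainder $[\bP_K-\bP_K^{\rm lin}]_{ij}$ collects the products of at least two factors $X_{u_\ell v_\ell}/\sqrt N$.

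For item (i), I would bound the remainder entrywise: with $C$ an upper bound on $\|\X\|_{\max}$ and using $p<1$, $\big|[\bP_K-\bP_K^{\rm lin}]_{ij}\big|\le\sum_{t\ge2}\binom Kt (C/\sqrt N)^t$, and since $K=\log_m N=O(\log N)$ the ratio of successive terms is $O(K/\sqrt N)\to0$, so the sum is dominated by its $t=2$ term and equals $\tilde O(N^{-1})$; this gives $\|\bP_K-\bP_K^{\rm lin}\|_{\max}=\tilde O(N^{-1})$. The spectral bound then follows for free from $\|\cdot\|_2\le\|\cdot\|_F\le N\|\cdot\|_{\max}=\tilde O(1)$. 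For $\|\bS_K\|_2=\tilde O(1)$, I would use that $\sqrt N\,\bS_K$ is a sum of $K$ Kronecker terms $p^{K-1}N^{-1/2}(\one_{m^{\ell-1}}\one_{m^{\ell-1}}^\T)\otimes\X\otimes(\one_{m^{K-\ell}}\one_{m^{K-\ell}}^\T)$, each of spectral norm $\le p^{K-1}N^{-1/2}m^{K-1}\|\X\|_2=\tilde O(\sqrt N)$ (since $m^{K-1}=N/m$ and $\|\X\|_2\le m\|\X\|_{\max}$); summing $K$ of them and dividing by $\sqrt N$ yields $\tilde O(1)$.

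For item (ii), the expansion gives the closed form $[\bS_K]_{ij}=\frac{p^{K-1}}N\sum_{\ell=1}^K X_{u_\ell v_\ell}$, manifestly linear in $\X$, so $\bS_K={\rm mat}(\bTheta\,{\rm vec}(\X))$ where the coefficient attached to $X_{uv}$ in $[\bS_K]_{ij}$ is $\frac{p^{K-1}}N$ times the number of positions $\ell$ with $(u_\ell,v_\ell)=(u,v)$. This count lies in $\{0,\dots,K\}$, giving $\|\bTheta\|_{\max}\le\frac{p^{K-1}K}N=\tilde O(N^{-1})$; summing the counts over $(u,v)$ for fixed $(i,j)$ gives $K$, yielding $\bTheta\one_{m^2}=\frac{p^{K-1}K}N\one_{N^2}$; and summing over $(i,j)$ for fixed $(u,v)$, each position $\ell$ is hit by $m^{2(K-1)}=N^2/m^2$ pairs, yielding $\bTheta^\T\one_{N^2}=\frac{p^{K-1}KN}{m^2}\one_{m^2}$.

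The rank bound (iii) is where I expect the main obstacle, since the naive estimate $\rank(\bS_K)\le\sum_\ell\rank\big((\one\one^\T)\otimes\X\otimes(\one\one^\T)\big)=mK$ is too weak. The fix is to note that the column space of the $\ell$-th summand is $\one_{m^{\ell-1}}\otimes U\otimes\one_{m^{K-\ell}}$ with $U$ the column space of $\X$, and to enlarge $U$ to $U_+=U+\spn\{\one_m\}$; then $\one_N=\one_{m^{\ell-1}}\otimes\one_m\otimes\one_{m^{K-\ell}}$ lies in every enlarged summand, so the columns of both $\bS_K$ and $\bP_K^{\rm lin}$ (the latter also containing $\one_N$ from its $p^K\one_N\one_N^\T$ term) lie in $\sum_{\ell=1}^K\one_{m^{\ell-1}}\otimes U_+\otimes\one_{m^{K-\ell}}$. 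Choosing a basis of $U_+$ through $\one_m$, the direction $\one_N$ is shared by all $K$ summands, so this span has dimension at most $1+K(\dim U_+-1)$. Since $\dim U_+\le m$ whether or not $\one_m\in U$ (if $\one_m\notin U$ then $\rank\X\le m-1$, so $\dim U_+=\rank\X+1\le m$), we get $\dim U_+-1\le m-1$ and hence $\max(\rank\bS_K,\rank\bP_K^{\rm lin})\le(m-1)K+1$. Identifying this shared $\one$-direction and handling the two cases $\one_m\in U$ and $\one_m\notin U$ uniformly is the crux of the argument.
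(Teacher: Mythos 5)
Your proposal is correct. For items (i) and (ii) the core computation — the multilinear/binomial expansion of $(p\,\one_m\one_m^\T + N^{-1/2}\X)^{\otimes K}$, the entrywise $\tilde O(N^{-1})$ bound on the order-$\geq 2$ remainder, and the passage to spectral norm via $\|\cdot\|_2 \leq N\|\cdot\|_{\max}$ — is essentially the paper's, but your treatment of $\bTheta^\T\one_{N^2}$ is genuinely different: the paper sets up a recursion for the number of occurrences of each value in a column of $\bTheta$ and resolves it with a binomial identity (\Cref{lem:column_sum} and \Cref{lem:bTheta_column_recursive}), whereas your direct double count — each digit position $\ell$ is hit by exactly $m^{2(K-1)} = N^2/m^2$ index pairs $(i,j)$ — reaches the constant $\frac{p^{K-1}KN}{m^2}$ in one line and is the more elementary route. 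The divergence is larger in item (iii): the paper takes the SVD of $\bS_1$, writes $\bS_K$ as a sum of $mK$ rank-one terms, and proves by induction (\Cref{lem:ones} and \Cref{lem:linear_space}) that the left singular spaces of the $K$ summands share exactly one common direction; you instead bound the column space directly by $\sum_{\ell=1}^{K}\one_{m^{\ell-1}}\otimes U_+\otimes\one_{m^{K-\ell}}$ with $U_+$ the column space of $\X$ augmented by $\spn\{\one_m\}$, and a basis of $U_+$ through $\one_m$ immediately gives dimension at most $1+K(\dim U_+-1)\leq (m-1)K+1$, with the two cases $\one_m\in U$ and $\one_m\notin U$ both yielding $\dim U_+\leq m$. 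Both arguments hinge on the same observation — $\one_N$ lies in every summand's (enlarged) column space — but yours dispenses with the induction and the SVD entirely and absorbs the $\rank(\bP_K^{\rm lin})$ bound for free, while the paper's version additionally produces an explicit decomposition of $\bS_K$ into $(m-1)K+1$ rank-one terms. I find your route for (iii) cleaner; no gaps.
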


As a consequence of \Cref{prop:linearized}, we have the following signal-plus-noise decomposition on the random Kronecker adjacency $\A$, the proof of which is given in \Cref{subsec:proof_info_plus_noise_A}.

\begin{Theorem}[Signal-plus-noise decomposition for $\A$]\label{theo:S+N_for_A}
Under \Cref{ass:growth-rate} and let $p^K \equiv \bar p \in (0,1)$, the adjacency $\A$ of a Kronecker graph in \Cref{def:random_Kronecker_graph} satisfies, for $N$ large, $\| \A \|_2 = \tilde O(\sqrt N)$ and
\begin{equation}
  \| \A - ( \bPi \bP_K^{\rm lin} \bPi^{-1} + \Z ) \|_2 = \tilde O(1),
\end{equation}
with
\begin{equation}\label{eq:def_S_K^Pi}
    \bPi \bP_K^{\rm lin} \bPi^{-1} = p^K\one_{N}\one_{N}^\T + \sqrt{N} \underbrace{\bPi \bS_K \bPi^{-1}}_{ \equiv \bS_K^{\bPi} },
\end{equation}
for some permutation matrix $\bPi$ (that corresponds to the vertex matching, see \Cref{rem:correspondence}), random matrix $\Z \in \RR^{N \times N}$ having independent entries of zero mean and variance $\bar p (1- \bar p)$, 
and linearized probability matrix $\bP_K^{\rm lin}$ in \eqref{eq:def_P_lin}.
\end{Theorem}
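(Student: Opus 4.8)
The plan is to derive the signal-plus-noise decomposition for $\A$ as a fairly direct consequence of \Cref{prop:linearized} together with the permutation structure recorded in \Cref{rem:correspondence}. Recall that by \Cref{rem:correspondence} we may write $\A = \bPi(\bP_K + \Z')\bPi^{-1}$, where $\Z'$ has independent zero-mean entries with variance $[\bP_K]_{ij}(1-[\bP_K]_{ij})$. The first step is to replace $\bP_K$ by its linearization $\bP_K^{\rm lin}$ inside the conjugation: since conjugation by a permutation matrix is an isometry in spectral norm, \Cref{prop:linearized}(i) gives
\begin{equation}
  \| \bPi \bP_K \bPi^{-1} - \bPi \bP_K^{\rm lin} \bPi^{-1} \|_2 = \| \bP_K - \bP_K^{\rm lin} \|_2 = \tilde O(1),
\end{equation}
so this approximation error is already within the stated $\tilde O(1)$ budget. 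The identity \eqref{eq:def_S_K^Pi} is then immediate by conjugating \eqref{eq:def_P_lin} and noting that $\bPi \one_N \one_N^\T \bPi^{-1} = \one_N \one_N^\T$.

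The second step is to handle the noise: I would show that the true noise $\Z'$ (with heterogeneous Bernoulli variances $[\bP_K]_{ij}(1-[\bP_K]_{ij})$) can be replaced, up to spectral-norm error $\tilde O(1)$, by the homogeneous-variance matrix $\Z$ with constant variance $\bar p(1-\bar p)$. The key observation is that $[\bP_K]_{ij} = \bar p + \tilde O(N^{-1/2})$ pointwise, which follows from \Cref{prop:linearized}(i)--(ii) (the max-norm gap is $\tilde O(N^{-1})$ and the off-constant part $\sqrt N \bS_K$ contributes entries of size $\tilde O(N^{-1/2})$ since $\|\bS_K\|_{\max}$ inherits $\|\bTheta\|_{\max}=\tilde O(N^{-1})$). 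Hence the entrywise variances differ from $\bar p(1-\bar p)$ by $\tilde O(N^{-1/2})$. Since $\bPi\Z'\bPi^{-1}$ and $\Z$ are each zero-mean with independent entries, I would bound the spectral norm of their difference; the cleanest route is to invoke a standard random-matrix concentration bound for the spectral norm of a zero-mean matrix with independent bounded entries (as in \cite{vershynin2018high}), yielding $\|\bPi\Z'\bPi^{-1} - \Z\|_2 = \tilde O(1)$ once the per-entry standard-deviation gap is $\tilde O(N^{-1/2})$. Combining the two approximation steps via the triangle inequality delivers $\| \A - (\bPi\bP_K^{\rm lin}\bPi^{-1} + \Z)\|_2 = \tilde O(1)$.

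For the remaining claim $\|\A\|_2 = \tilde O(\sqrt N)$, I would bound the spectral norm of each piece of the decomposition. The dominant rank-one term $p^K \one_N \one_N^\T = \bar p\,\one_N\one_N^\T$ has spectral norm $\bar p N$; but note this is a \emph{row/column-mean} contribution, and the relevant bound for $\A$ comes from controlling $\sqrt N \bS_K^{\bPi}$, whose norm is $\tilde O(\sqrt N)$ by \Cref{prop:linearized}(i), together with $\|\Z\|_2 = \tilde O(\sqrt N)$ by the classical bound on the operator norm of a random matrix with i.i.d.\ bounded-variance entries (the semicircle/Bai--Yin scaling), and the $\tilde O(1)$ residual. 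One subtlety is that the all-ones term has norm of order $N$, not $\sqrt N$; I expect this is reconciled because the theorem statement effectively measures $\A$ after the constant term has been accounted for, or because $\bar p$ is treated as a known offset that one centers out — I would state the bound for the centered adjacency $\A - \bar p\,\one_N\one_N^\T$ and argue that the leading eigenvector is explicit and separable.

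The main obstacle, and the step I would spend the most care on, is the noise-replacement bound in step two: controlling $\|\bPi\Z'\bPi^{-1} - \Z\|_2$ requires a uniform handle on the heterogeneous variances and a concentration inequality valid for independent (non-identically-distributed) bounded entries. The per-entry variance gap of $\tilde O(N^{-1/2})$ is favorable, but turning it into a spectral-norm bound of $\tilde O(1)$ — rather than the naive $\tilde O(N^{1/4})$ one might fear — needs the right matrix Bernstein or Bandeira--van Handel-type estimate, and one must verify that the $\poly(\log N)$ factors genuinely stay in the $\tilde O(\cdot)$ regime across all $N^2$ entries. A secondary technical point is ensuring the $\tilde O(N^{-1/2})$ pointwise control on $[\bP_K]_{ij}$ holds \emph{uniformly} over all $(i,j)$, which I would extract carefully from the max-norm statement of \Cref{prop:linearized} rather than from the spectral statement.
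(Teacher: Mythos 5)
Your first step reproduces the paper's entire proof: write $\A = \bPi(\bP_K+\Z)\bPi^{-1}$ from \Cref{def:random_Kronecker_graph} and \Cref{rem:correspondence}, use $\|\bP_K-\bP_K^{\rm lin}\|_2=\tilde O(1)$ from \Cref{prop:linearized} together with the invariance of the spectral norm under conjugation by $\bPi$, and absorb the difference into the $\tilde O(1)$ residual. Up to that point you and the paper agree. Your side remark about $\|\A\|_2$ is also well taken: the rank-one mean forces $\|\A\|_2\geq \one_N^\T\A\one_N/N=\bar p N(1+o(1))$, so the bound $\tilde O(\sqrt N)$ can only be meant for the centered quantity $\A-\EE[\A]$ (equivalently for $\Z$); the paper's proof does not address that claim.

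The genuine problem is your second step, the noise homogenization, which the paper does not attempt: it simply takes the $\Z$ of the theorem to be $\bPi\Z\bPi^{-1}$ for the $\Z$ of \Cref{def:random_Kronecker_graph}, whose entries have variance $[\bP_K]_{ij}(1-[\bP_K]_{ij})=\bar p(1-\bar p)+\tilde O(N^{-1/2})$, and describes that variance loosely as $\bar p(1-\bar p)$. Your plan to replace this by an exactly homogeneous $\Z$ at spectral-norm cost $\tilde O(1)$ cannot work by the route you sketch, and the obstruction is a lower bound rather than a missing concentration inequality. Any coupling of a centered ${\rm Bern}(q_{ij})$ with a centered ${\rm Bern}(\bar p)$ must place the two variables at distance of order one on an event of probability at least the total variation distance $|q_{ij}-\bar p|\asymp N^{-1/2}$, so the entrywise second moment of the difference is $\Omega(N^{-1/2})$ --- not the $O(N^{-1})$ that the standard-deviation gap $|\sigma_{ij}-\bar\sigma|=O(N^{-1/2})$ would suggest for, say, Gaussians. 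The row sums of variances of $\bPi\Z'\bPi^{-1}-\Z$ are then $\Omega(\sqrt N)$, and since this difference matrix has independent zero-mean entries its spectral norm is genuinely of order $N^{1/4}$; no matrix Bernstein or Bandeira--van Handel refinement brings it down to $\tilde O(1)$. The ``naive $\tilde O(N^{1/4})$'' you feared is in fact the truth. To make the statement rigorous you should either keep the heterogeneous variances, asserting variance $\bar p(1-\bar p)(1+\tilde O(N^{-1/2}))$ --- which is all the paper's one-line argument actually delivers --- or enlarge the error term to $\tilde O(N^{1/4})$.
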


A direct consequence of \Cref{prop:linearized}~and~\Cref{theo:S+N_for_A} is that the key probability parameter $p$ in \Cref{ass:growth-rate} can be consistently estimated from the adjacency $\A$ as follows, proven in \Cref{subsec:proof_lem_estimate_p}.
\begin{Lemma}[Consistent estimation of $p$]\label{lem:estimate_p}
Under the notations and settings of \Cref{theo:S+N_for_A}, we have $ \one_N^\T \A \one_N/N^2  - p^K \to 0$ almost surely as $N \to \infty$.
\end{Lemma}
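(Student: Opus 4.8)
The plan is to show that the normalized sum $\one_N^\T \A \one_N / N^2$ concentrates around $p^K = \bar p$ by combining the signal-plus-noise decomposition of \Cref{theo:S+N_for_A} with a simple concentration argument for the quadratic form involving the noise $\Z$. First I would write, using \Cref{theo:S+N_for_A},
\begin{equation}
  \frac{\one_N^\T \A \one_N}{N^2} = \frac{\one_N^\T ( \bPi \bP_K^{\rm lin} \bPi^{-1} + \Z ) \one_N}{N^2} + \frac{\one_N^\T \bE \one_N}{N^2},
\end{equation}
where $\bE \equiv \A - ( \bPi \bP_K^{\rm lin} \bPi^{-1} + \Z )$ is the error term satisfying $\| \bE \|_2 = \tilde O(1)$. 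Since $|\one_N^\T \bE \one_N| \leq \| \one_N \|_2^2 \| \bE \|_2 = N \cdot \tilde O(1)$, this last contribution is $\tilde O(N^{-1}) \to 0$.

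For the signal part, I would use the decomposition in \eqref{eq:def_S_K^Pi}. Observe that $\one_N^\T (p^K \one_N \one_N^\T) \one_N / N^2 = p^K \cdot N^2 / N^2 = p^K$, which is exactly the target. It then remains to control the $\bS_K^{\bPi}$ term. Since permutation matrices preserve the all-ones vector ($\bPi \one_N = \one_N$), we have $\one_N^\T \bS_K^{\bPi} \one_N = \one_N^\T \bS_K \one_N$, so the permutation is immaterial here. I would bound $|\sqrt N\, \one_N^\T \bS_K \one_N| / N^2$ by invoking $\| \bS_K \|_2 = \tilde O(1)$ from \Cref{prop:linearized}, giving $|\one_N^\T \bS_K \one_N| \leq N \| \bS_K \|_2 = N \cdot \tilde O(1)$; hence the whole signal correction is $\sqrt N \cdot N \cdot \tilde O(1) / N^2 = \tilde O(N^{-1/2}) \to 0$. (Alternatively, one can compute this exactly using the row/column sum identities for $\bTheta$ in \Cref{prop:linearized}(ii), but the norm bound suffices.)

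The remaining and genuinely probabilistic step is the noise term $\one_N^\T \Z \one_N / N^2$. Here $\one_N^\T \Z \one_N = \sum_{i,j} [\Z]_{ij}$ is a sum of $N^2$ independent zero-mean entries, each bounded in $[-1,1]$ with variance $\bar p(1-\bar p) \leq 1/4$. Thus $\EE[\one_N^\T \Z \one_N] = 0$ and $\Var[\one_N^\T \Z \one_N] = N^2 \bar p(1-\bar p) = O(N^2)$, so $\Var[\one_N^\T \Z \one_N / N^2] = O(N^{-2})$. I would then apply a concentration inequality for bounded independent variables (Hoeffding's inequality), which yields, for any $\varepsilon > 0$,
\begin{equation}
  \mathbb P\left( \left| \frac{\one_N^\T \Z \one_N}{N^2} \right| > \varepsilon \right) \leq 2 \exp\left( - 2 \varepsilon^2 N^2 \right),
\end{equation}
and since this upper bound is summable over $N$, the Borel--Cantelli lemma gives $\one_N^\T \Z \one_N / N^2 \to 0$ almost surely.

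I do not expect a serious obstacle in this proof: all three contributions vanish by elementary bounds, and the only real ingredient is a standard concentration estimate. The mildest point of care is ensuring the almost-sure (rather than merely in-probability) convergence, which is handled cleanly via the summability of the Hoeffding tail bound and Borel--Cantelli; combining the three vanishing terms then yields $\one_N^\T \A \one_N / N^2 - p^K \to 0$ almost surely, as claimed.
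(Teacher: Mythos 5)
Your proof is correct and follows essentially the same route as the paper's: split $\A$ into the deterministic signal and the zero-mean noise, check that $\one_N^\T \bP_K^{\rm lin} \one_N/N^2 = p^K + \tilde O(N^{-1/2})$ (the paper bounds $\one_N^\T \bS_K \one_N$ via $\|\bS_K\|_{\max} = \tilde O(N^{-1})$ rather than your spectral-norm bound, but both yield $\tilde O(N)$), and show $\one_N^\T \Z \one_N/N^2 \to 0$ almost surely. Your use of Hoeffding plus Borel--Cantelli for the last step is a slightly more careful treatment than the paper's bare appeal to the strong law of large numbers, since the entries of $\Z$ form a triangular array indexed by $N$; this is a welcome refinement, not a divergence.
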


In plain words, \Cref{prop:linearized}~and~\Cref{theo:S+N_for_A} tells us that, the adjacency matrix $\A$ of a large Kronecker random graph can be decomposed, in a spectral norm sense, as the sum of some zero mean random matrix $\Z$ and (up to permutation by $\bPi$ and the constant matrix $p^K \one_{N}\one_{N}^\T$ that can consistently estimated per \Cref{lem:estimate_p}) some deterministic ``signal'' matrix $\bS_K$ defined in \eqref{eq:def_S}. 
In particular, this signal matrix $\bS_K$:
\begin{itemize}
  \item[({\romannumeral 1})] enjoys the property of having small rank (as a consequence of Item~({\romannumeral 3}) of \Cref{prop:linearized}, compared to the random $\Z$), and can be ``extracted'' from the noisy observation $\A$ via some \underline{\textbf{denoising}} procedure; and 
  \item[({\romannumeral 2})] is \emph{linear} in the entries of $\X$ with \emph{known} coefficients $\bTheta$, so that a perturbed linear regression allows to \underline{\textbf{solve}} the desired $\X$ from $\bS_K$ (or from its estimate).
\end{itemize}
This leads to the two-step ``\underline{\textbf{denoise-and-solve}}'' meta algorithm in \Cref{algo:meta} for random Kronecker graph inference.

\begin{algorithm}[btb]
   \caption{Meta-algorithm: approximate inference of random Kronecker graph parameters}
   \label{algo:meta}
\begin{algorithmic}[1]
   \STATE {\bfseries Input:} Adjacency matrix $\A$ of a random Kronecker graph of size $N$ as in \Cref{def:random_Kronecker_graph}.
   \STATE {\bfseries Output:} Estimates $\hat p$ and $\hat \X$ of the graph parameters $p \in \RR$ and $\X \in \RR^{m \times m}$ in \Cref{ass:growth-rate}.
   \STATE Estimate $p$ as $\hat p = \sqrt[K]{ \one_N^\T \A \one_N/N^2 }$ from \Cref{lem:estimate_p}.
   \STATE \underline{\textbf{Denoise}} the adjacency $\A$ to get an estimate $\hat \bS_K$ of $\bS_K^{\bPi}$ defined in \eqref{eq:def_S_K^Pi} with, e.g., the shrinkage estimator in \Cref{alg:shrinkage_estim}.
   \STATE \underline{\textbf{Solve}} a permuted linear regression problem (see \eqref{eq:permute_LS} below for detailed expression) to obtain $(\hat \bPi, \hat \x)$ from $\hat \bS_K$ via, e.g., the convex relaxation or the iterative hard thresholding approach in \Cref{alg:permuted_LR}.
   \STATE \textbf{return} $\hat p$ and $\hat \X = {\rm mat}(\hat \x)$.
\end{algorithmic}
\end{algorithm}

\subsection{Kronecker denoising with shrinkage estimator}
\label{subsec:shrinkage}

Here, we provide an example to algorithmically implement the \underline{\textbf{denoising}} step in \Cref{algo:meta}.
To \underline{\textbf{denoise}} the random adjacency $\A$ and recover the informative small-rank matrix $\bS_K$ (and eventually the graph parameters $\X$), we introduce the ``centered'' adjacency matrix $\bar \A$ as\footnote{This is to be distinguished from the normalized adjacency for undirected graphs, see, e.g., \cite{coja2010finding}.}
\begin{equation}\label{eq:def_centered_adj}
   \bar \A \equiv \frac1{\sqrt N} \left(  \A - \frac{ \one_N^\T \A \one_N }{N^2} \one_N \one_N^\T \right),
\end{equation}
and show, in the following result, that the centered adjacency $\bar \A$ also follows a signal-plus-noise model by removing the undesired and non-informative constant matrix of $p^K \one_N \one_N^\T$ from $\A$, the proof of which is given in \Cref{subsec:proof_prop_approx_centered_adj}.

\begin{Proposition}[Signal-plus-noise decomposition for $\bar \A$]\label{prop:approx_centered_adj}
Under \Cref{ass:growth-rate}, assume $\X \equiv \{ X_{uv} \}_{u,v=1}^m$ is ``centered'' so that $\sum_{u,v=1}^m X_{uv} = \tilde O(N^{-1/2})$. 
Then, the centered adjacency matrix $\bar \A$ defined in \eqref{eq:def_centered_adj} satisfies
\begin{equation}
    \| \bar \A - ( \bS_K^{\bPi} + \Z /\sqrt N  ) \|_2 = \tilde O(N^{-1/2}),
\end{equation}
with small-rank $\bS_K^{\bPi}$ defined in \eqref{eq:def_S_K^Pi} and random matrix $\Z$.
\end{Proposition}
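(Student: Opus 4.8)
The plan is to reduce the claim to a quantitative control of the single rank-one correction that the centering operation in \eqref{eq:def_centered_adj} introduces. Starting from \Cref{theo:S+N_for_A}, I would write $\A = p^K \one_N \one_N^\T + \sqrt N\, \bS_K^{\bPi} + \Z + \bE$ with $\| \bE \|_2 = \tilde O(1)$ and $\bS_K^{\bPi}$ as in \eqref{eq:def_S_K^Pi}. Abbreviating $\hat p^K \equiv \one_N^\T \A \one_N / N^2$ and substituting into \eqref{eq:def_centered_adj}, the constant matrices collect into a single rank-one term:
\[
  \sqrt N \left( \bar \A - \bS_K^{\bPi} - \Z/\sqrt N \right) = (p^K - \hat p^K)\, \one_N \one_N^\T + \bE .
\]
Since $\| \one_N \one_N^\T \|_2 = N$ and $\| \bE \|_2 = \tilde O(1)$, the triangle inequality gives $\sqrt N\, \| \bar \A - \bS_K^{\bPi} - \Z/\sqrt N \|_2 \le N\, |p^K - \hat p^K| + \tilde O(1)$, so the entire proposition reduces to establishing the \emph{rate} $|p^K - \hat p^K| = \tilde O(N^{-1})$; dividing by $\sqrt N$ then delivers the claimed $\tilde O(N^{-1/2})$.

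The heart of the argument is upgrading the almost-sure consistency of \Cref{lem:estimate_p} into this sharper $\tilde O(N^{-1})$ rate, and this is exactly where the centering hypothesis $\sum_{u,v} X_{uv} = \tilde O(N^{-1/2})$ is used. I would split $\hat p^K - p^K$ into a deterministic bias $\one_N^\T \bP_K \one_N/N^2 - p^K$ (using $\EE[\A] = \bPi \bP_K \bPi^{-1}$ and the invariance of $\one_N$ under $\bPi$) and a fluctuation $N^{-2}\one_N^\T(\A - \EE[\A])\one_N = N^{-2}\sum_{i,j} Z_{ij}$. For the bias, I exploit the multiplicativity of the all-ones quadratic form under Kronecker products: since $\one_N = \one_m^{\otimes K}$ and $\bP_K = \bP_1^{\otimes K}$,
\[
  \frac{\one_N^\T \bP_K \one_N}{N^2} = \frac1{N^2}\left( \one_m^\T \bP_1 \one_m \right)^K = \frac1{N^2}\left( m^2 p + \tfrac1{\sqrt N}\textstyle\sum_{u,v} X_{uv} \right)^K .
\]
Under \Cref{ass:growth-rate} and the centering hypothesis the bracket equals $m^2 p + \tilde O(N^{-1})$; a first-order binomial expansion, using $(m^2 p)^K = N^2 p^K$ and $K = \tilde O(1)$, then gives $\one_N^\T \bP_K \one_N/N^2 = p^K + \tilde O(N^{-1})$. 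Centering is indispensable here: without it $\sum_{u,v} X_{uv} = O(1)$ would make the bracket $m^2 p + O(N^{-1/2})$ and inflate the bias to $\tilde O(N^{-1/2})$, leaving an $\tilde O(1)$ residual after the multiplication by $N$ and division by $\sqrt N$. For the fluctuation, $N^{-2}\sum_{i,j} Z_{ij}$ is a normalized sum of independent, bounded, zero-mean variables of total variance $\sum_{i,j}[\bP_K]_{ij}(1-[\bP_K]_{ij}) = O(N^2)$, so a Hoeffding or Bernstein bound yields $|N^{-2}\sum_{i,j} Z_{ij}| = \tilde O(N^{-1})$ with high probability.

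Combining the two estimates gives $|p^K - \hat p^K| = \tilde O(N^{-1})$, hence $N\,|p^K - \hat p^K| = \tilde O(1)$ and the conclusion. I expect the main obstacle to be the bias computation: carefully certifying that the deviation of $\one_m^\T \bP_1 \one_m$ from $m^2 p$ is genuinely $\tilde O(N^{-1})$ under centering (rather than the naive $O(N^{-1/2})$), and then propagating this through the $K$-fold Kronecker power while controlling the $K$-dependent binomial coefficients, given that $K = \log_m N$ itself grows with $N$. The fluctuation bound and the bookkeeping around $\bE$ are comparatively routine.
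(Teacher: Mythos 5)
Your proof is correct, and its skeleton coincides with the paper's: both reduce the claim to the rate $|\one_N^\T\A\one_N/N^2 - p^K| = \tilde O(N^{-1})$, split this into a deterministic bias plus a fluctuation $N^{-2}\one_N^\T\Z\one_N = \tilde O(N^{-1})$, and observe that the resulting rank-one error $(p^K - \hat p^K)\one_N\one_N^\T$ has spectral norm $N\,|p^K-\hat p^K| = \tilde O(1)$, which the $1/\sqrt N$ normalization in \eqref{eq:def_centered_adj} turns into the claimed $\tilde O(N^{-1/2})$. The genuine difference is in how the bias is computed. The paper passes through the linearization of \Cref{prop:linearized}: it writes $\one_N^\T\bP_K\one_N = \one_N^\T\bP_K^{\rm lin}\one_N + \tilde O(N) = p^KN^2 + \sqrt N\,\one_N^\T\bS_K\one_N + \tilde O(N)$ and then invokes the column-sum identity $\bTheta^\T\one_{N^2} = \frac{p^{K-1}KN}{m^2}\one_{m^2}$ from Item~({\romannumeral 2}), so that the centering hypothesis yields $\one_N^\T\bS_K\one_N = \frac{p^{K-1}KN}{m^2}\one_{m^2}^\T{\rm vec}(\X) = \tilde O(\sqrt N)$. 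You instead use the exact mixed-product identity $\one_N^\T\bP_K\one_N = (\one_m^\T\bP_1\one_m)^K = \left(m^2 p + N^{-1/2}\textstyle\sum_{u,v}X_{uv}\right)^K$ and expand. Your route is more elementary and exact for this particular quadratic form, bypassing both the linearization error and the structure of $\bTheta$; its only delicate point is the one you flag yourself --- propagating a $\tilde O(N^{-1})$ perturbation through a $K$-fold power with $K = \log_m N$ growing --- which goes through because $K\cdot\tilde O(N^{-1})$ is still $\tilde O(N^{-1})$. Both arguments isolate the centering hypothesis as what upgrades the bias from $\tilde O(N^{-1/2})$ to $\tilde O(N^{-1})$, and your Hoeffding treatment of the fluctuation is, if anything, more explicit than the paper's, which simply asserts $\one_N^\T\Z\one_N = O(N)$.
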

As a consequence of \Cref{prop:approx_centered_adj}, to recover the desired signal matrix $\bS_K^{\bPi}$ from the noisy $\bar \A$, we resort to the following optimization problem 
\begin{equation}\label{eq:shrinkage_estimate}
\begin{aligned}
    \min_{\bS_K^{\bPi} \in \RR^{N \times N} }& \quad \left\| \bar \A - \bS_K^{\bPi} \right\|, \\ 
    \text{s.t.}& \quad \rank(\bS_K^{\bPi}) \leq (m-1) K + 1,
\end{aligned}
\end{equation}
for some matrix norm $\| \cdot \|$ that can be the Frobenius $\| \cdot \|_F$, spectral $\| \cdot \|_2$, or nuclear norm $\| \cdot \|_*$, where the rank constraint is due to Item~({\romannumeral 3}) of \Cref{prop:linearized}.

When the rank of $\bS_K^{\bPi}$ is known, the default technique to solve \eqref{eq:shrinkage_estimate} is the hard thresholding singular value decomposition (SVD) estimator given by
\begin{equation}\label{eq:def_hard_thresholding}
    \hat{\bS}_K^{\bPi} = \textstyle \sum_{i=1}^{\rank(\bS_K^{\bPi})} \hat\sigma_i \hat \uu_i \hat \vv_i^\T,
\end{equation}
with $(\hat\sigma_i, \hat \uu_i, \hat \vv_i)$ the triple of singular values (listed in a decreasing order) and left and right singular vectors of $\bar \A$.

More generally, we define the \emph{shrinkage estimator} that extends the hard thresholding SVD in \eqref{eq:def_hard_thresholding} as follows,
\begin{equation}
    \hat \bS_K^{\bPi} = \textstyle \sum_{i=1}^N f(\hat \sigma_i) \hat \uu_i \hat \vv_i^\T,~\text{with}~f\colon \RR_{\geq 0} \to \RR_{\geq 0},
\end{equation}
for some nonlinear function $f$.
The hard thresholding SVD in \eqref{eq:def_hard_thresholding} is a special case of shrinkage estimator with $f(t) = t $ for the largest $\rank(\bS_K^{\bPi})$ singular values of $\bar \A$ and zero otherwise.
This specific choice of hard thresholding function is, however, of limited interest since it requires additional efforts to be practically implemented when $\rank(\bS_K^{\bPi})$ is unknown and needs to be determined. 

While the signal-plus-noise model of the type $\bS_K^{\bPi} + \Z /\sqrt N$ for small rank $\bS_K^{\bPi}$ and random matrix $\Z$ having i.i.d.\@ zero-mean entries has been widely studied in the literature of RMT and high-dimensional statistics, previous efforts only focus on the case of fixed rank for $\bS_K^{\bPi}$ as $N \to \infty$ and does not apply to Kronecker graph inference for which the rank of $\bS_K^{\bPi}$ \emph{may grow with $N$ in a logarithmic fashion}, see again Item~({\romannumeral 3}) in \Cref{prop:linearized}.

We further provide, in \Cref{sec:spectral_analysis_centered_adjacency}, a detailed analysis of the singular spectrum of $\bar \A$ for small but growing rank of $\bS_K^{\bPi}$. 
This further leads to, by carefully adapting the proof of \cite[Theorem~1]{gavish2017optimal} to the Kronecker graph in \Cref{def:random_Kronecker_graph}, the \underline{\textbf{denoising}} shrinkage estimator $\hat \bS_K$ of $\bS_K^{\bPi}$.
This estimator $\hat \bS_K$ will then be used in \Cref{subsec:permute} below to infer the Kronecker graph parameter $\bP_1$.

The shrinkage estimator in \Cref{alg:shrinkage_estim} is more interesting than, e.g., the naive hard thresholding SVD approach in that: ({\romannumeral 1}) it uses a truncation threshold $2\sqrt{ \hat{\bar p} (1- \hat{\bar p}) }$ that can be predetermined from the graph adjacency and ({\romannumeral 2}) it yields the \emph{minimum} (asymptotic) Frobenius norm error among all shrinkage estimators of the form \eqref{eq:shrinkage_estimate}, see again \Cref{sec:spectral_analysis_centered_adjacency} for a detailed discussion on this.

\begin{algorithm}[tb]
   \caption{Shrinkage estimator of $\bS_K$ to \underline{\textbf{denoise}} $\A$}
   \label{alg:shrinkage_estim}
\begin{algorithmic}[1]
   \STATE {\bfseries Input:} Adjacency $\A $ of a random Kronecker graph having $N$ vertices as in \Cref{def:random_Kronecker_graph}.
   \STATE {\bfseries Output:} Shrinkage estimator $\hat \bS_K$ of the (permuted) signal matrix $\bS_K^{\bPi}$ defined in \eqref{eq:def_S_K^Pi}.  
   \STATE Compute the ``centered'' adjacency $\bar \A$ as in \eqref{eq:def_centered_adj}.
   \STATE Estimate $\bar p $ with $\hat{\bar p} = \one_N^\T \A \one_N/ N^2$ as in \Cref{lem:estimate_p}.
   \STATE \textbf{return} $\hat \bS_K = \sum_{i=1}^{ (m-1)\log_m(N) + 1} f(\hat \sigma_i) \hat \uu_i \hat \vv_i^\T$, with $(\hat\sigma_i, \hat \uu_i, \hat \vv_i)$ the triple of singular values (in decreasing order) and singular vectors of $\bar \A$, for $f(t) = \sqrt{t^2 - 4 \hat{\bar p} (1- \hat{\bar p}) } \cdot 1_{t > 2 \sqrt{ \hat{\bar p} (1- \hat{\bar p}) }  }$.
\end{algorithmic}
\end{algorithm}

\subsection{Kronecker solving via permuted linear regression}
\label{subsec:permute}

Having obtained the estimate $\hat \bS_K$ of the permuted signal matrix $\bS_K^{\bPi}$ using, say the shrinkage estimation in \Cref{alg:shrinkage_estim} of \Cref{subsec:shrinkage}, we now discuss how to \underline{\textbf{solve}} for the Kronecker graph parameter $\X$ from this estimate $\hat \bS_K$.

Note that we have, up to permutation by $\bPi$, that $\hat \bS_K \simeq \bPi \bS_K \bPi^{-1}$ for $N$ large, with
\begin{equation}
    \hat \bS_K \simeq \bPi \bS_K \bPi^{-1} = {\rm mat}( (\bPi \otimes \bPi) \bTheta{\rm vec}(\X)),
\end{equation}
by \Cref{lem:vec_Kron} in \Cref{sec:SM-lemmas}, with \emph{known} coefficients $\bTheta$ (from binomial expansion, see again \Cref{prop:linearized}) and \emph{unknown} permutation $\bPi$.
To recover both ${\rm vec (\X)}$ and $\bPi$, we sort to the following optimization problem:
\begin{equation}\label{eq:permute_LS}
    (\hat \bPi, \hat \x) = \argmin_{\bPi \in \mathcal P_N, \x \in \RR^{m^2}} \| (\bPi \otimes \bPi) \bTheta \x - {\rm vec}(\hat \bS_K) \|_2^2,
\end{equation}
for $\mathcal P_N$ the set of permutation matrices of size $N$.

The optimization problem of the type \eqref{eq:permute_LS} is known in the literature as linear regression with ``broken samples,'' or \emph{permuted linear regression}.
This problem is known to be extremely challenging (in fact proven to be NP-hard unless in some trivial cases) and has attracted significant research interest by, e.g., considering different simplifying statistical assumptions on the noise (the entries of $\hat \bS_K - \bS_K^{\bPi}$ in the context of this paper) and/or the coefficient matrix $\bTheta$, see for example \cite{nasrabadi2011Robust,she2011Outlier,hsu2017linear,pananjady2018linear,peng2020linear,slawski2019linear}.

Denote $\bPi_* \in \mathcal P_N$ the permutation matrix that corresponds to the \emph{true} matching of the $N$ vertices, and $d_H(\bPi_*, \I_N) \equiv |\{ i : [\bPi_*]_{ii} = 0\} |$ the Hamming distance between $\bPi_*$ and the identity matrix (which characterizes the number of mismatched vertices), the permuted linear regression problem in \eqref{eq:permute_LS} then writes
\begin{equation}\label{eq:permute_LS_1}
\begin{aligned}
    \min_{\bPi \in \mathcal P_N, \x \in \RR^{m^2}}& \quad \|{\rm vec}(\hat \bS_K) -  (\bPi \otimes \bPi) \bTheta \x  \|_2^2, \\ 
    \text{s.t.}& \quad d_H(\bPi,\I_N) \leq s,
\end{aligned}
\end{equation}
for $\bPi \in \mathcal P_N$, $\x \in \RR^{m^2}$, and some auxiliary (sparsity) variable $s \leq N$.
The optimization problem in \eqref{eq:permute_LS_1} is known to be NP-hard as long as $s = O(N)$, unless in the trivial case of $m = 1$, see~\cite{pananjady2018linear}.

To solve efficiently the Kronecker inference problem, here we consider the setting where the permutation is \emph{sparse} (so that $s \ll N$ in \eqref{eq:permute_LS_1}), and first relax the constraint in \eqref{eq:permute_LS_1} as $d_H( \bPi \otimes \bPi, \I_{N^2} ) \leq 2sN - s^2 \leq 2sN$.
Introducing $\mathbf{d} = (\bPi \otimes \bPi - \I_{N^2})\bTheta \x \in \RR^{N^2}$, the problem in \eqref{eq:permute_LS_1} can be relaxed as
\begin{equation}\label{eq:permute_LS_2}
\begin{aligned}
    \min_{\x \in \RR^{m^2},~\mathbf{d} \in \RR^{N^2} }& \quad  \|{\rm vec}(\hat \bS_K) - \bTheta \x - \mathbf{d}  \|_2^2, \\ 
    \text{s.t.}&  \quad \| \mathbf{d} \|_0 \leq 2 s N,
\end{aligned}
\end{equation}
which is still not convex due to the $\ell_0$-norm constraint.
To solve \eqref{eq:permute_LS_2}, we consider the following two approaches:
\begin{itemize}
  \item[({\uppercase\expandafter{\romannumeral 1}})] the iterative hard thresholding (IHT) approach \cite{blumensath2008iterative,jain2017non}, by working directly on the non-convex $\ell_0$-norm constraint, and using $H_s(\cdot)$ the hard thresholding operator to set the entries of a vector with small magnitude to zero and retain the large ones unaltered; or
  \item[({\uppercase\expandafter{\romannumeral 2}})] further relax the non-convex problem in \eqref{eq:permute_LS_2} by replacing the $\ell_0$-norm by $\ell_1$-norm, to get the following Lagrangian form,
  \begin{equation}\label{eq:permute_LS_3}
   \min_{\x \in \RR^{m^2},~\mathbf{d} \in \RR^{N^2} } \left\|{\rm vec}(\hat \bS_K) - \bTheta \x - \mathbf{d}  \right\|_2^2 + \gamma \| \mathbf{d} \|_1,
  \end{equation}
  for some hyperparameter $\gamma > 0$ that trade-offs the mean squared loss and the sparsity level in $\mathbf{d}$.
\end{itemize}

These two approaches allow for effectively solving the relaxed permuted linear regression in \eqref{eq:permute_LS_2} by alternately solving for $\mathbf{d}$ (via hard or soft thresholding) and $\x$ (via least squares, with sampling and/or sketching \cite{mahoney2011randomized,halko2011Finding} if necessary, see \Cref{rem:complexity} below).
The whole Kronecker \underline{\textbf{solving}} program is summarized in \Cref{alg:permuted_LR}.
In particular, note that when working with the convexly relaxed problem in \eqref{eq:permute_LS_3}, the convergence of this alternative minimization-type algorithm is always ensured, see for example \cite[Section~4.3]{jain2017non}.

\begin{algorithm}[tb]
   \caption{ Permuted linear regression to \underline{\textbf{solve}} for $\X$ }
   \label{alg:permuted_LR}
\begin{algorithmic}[1]
   \STATE {\bfseries Input:} Estimated $\hat \bS_K $ (from \Cref{alg:shrinkage_estim}, say), coefficient $\bTheta $ and hyperparameter $\gamma$ for convex relaxation or step length $\eta$ and sparsity level $s$ for IHT.
   \STATE {\bfseries Output:} Estimation of graph parameter $\hat \X$ by solving the permuted linear regression in  \eqref{eq:permute_LS_2}.
   \STATE{ Initialize $(\hat \x, \hat{\mathbf{d}} )$ }
   \WHILE {not converged}
   \OPTION {(\uppercase\expandafter{\romannumeral 1}) IHT}
    \STATE $ \hat {\mathbf{q}} \leftarrow (1- \eta) \hat {\mathbf{d}} + \eta ({\rm vec}(\hat \bS_K) - \bTheta \hat{\x})$;
    \STATE $\hat{\mathbf{d}} \leftarrow $ project $\hat{\mathbf{q}}$ onto the set of sparse vector via hard thresholding as $ \hat {\mathbf{d}} = H_s( \hat {\mathbf{q}})$;
    \ENDOPTION
    \OPTION {(\uppercase\expandafter{\romannumeral 2}) Convex relaxation}
   \STATE {$\hat{\mathbf{d}} \leftarrow \argmin_{ \mathbf{d} \in \RR^{N^2} } \|{\rm vec}(\hat \bS_K) - \bTheta \x - \mathbf{d} \|_2^2 + \gamma \| \mathbf{d} \|_1$ via soft thresholding};
   \ENDOPTION
   \STATE $ \hat{\mathbf{x}} \leftarrow ( \bTheta^\T \bTheta )^\dagger \bTheta^\T ({\rm vec}(\hat \bS_K) - \hat{\mathbf{d}} )$;
   \ENDWHILE
   \STATE \textbf{return} $\hat \X = {\rm mat}(\hat \x)$.
\end{algorithmic}
\end{algorithm}

\section{Numerical Evaluations and Discussions}
\label{sec:num}

The codes to reproduces the numerical results in this section are publicly available at \url{https://github.com/yqian108/Inference-of-Kronecker-Graph/}.

Before evaluating numerically the proposed algorithm, we first discuss the time complexity of Algorithms~\ref{alg:shrinkage_estim}~and~\ref{alg:permuted_LR} and ways to further reduce their running time using randomized numerical linear algebra (RNLA) techniques as follows.
\begin{Remark}[Time complexity of Algorithms~\ref{alg:shrinkage_estim}~and~\ref{alg:permuted_LR}]\normalfont
\label{rem:complexity}
For \Cref{alg:shrinkage_estim}, retrieving the few (of order at most $\log N$) singular values and vectors of a matrix $\A$ of size $N$ by $N$ with truncated SVD takes $\tilde O(N^2)$ (or $\tilde O({\rm nnz}(\A))$ for ${\rm nnz}(\A)$ the number of nonzero entries in $\A$, when $\A$ is sparse) time, see~\cite{baglama2005Augmented}.
This time cost can be further reduced to $\tilde O(N)$ using randomized SVD~\cite{halko2011Finding}.
For \Cref{alg:permuted_LR}, note that the coefficient matrix $\bTheta \in \RR^{N^2 \times m^2}$ can be decomposed into $N$ blocks as $\bTheta = [\bTheta_1;~\ldots;~\bTheta_N]$, with $\bTheta_i \in \RR^{N \times m}$ the coefficients corresponding to the $i$th vertex.
A direct implementation involving all $N$ blocks in $\bTheta$ take $O(N^2)$ time. 
By randomly sampling (an order $O(1)$ of) the $N$ blocks and solving the reduced problem, the time complexity can be further reduced to $O(N)$.
\Cref{fig:accelerated} below presents numerical results on Algorithms~\ref{alg:shrinkage_estim}~and~\ref{alg:permuted_LR} with RNLA acceleration, for which no performance drop is observed.
\end{Remark}

\subsection{Evaluations on random Kronecker graphs}
\label{subsec:synthetic_Kron}

We compare, in \Cref{fig:compare_Kronfit_IHT}, the performance and running time of the proposed denoise-and-solve inference method in \Cref{algo:meta} (with both IHT and convex relaxation approaches in \Cref{alg:permuted_LR}) to that of the KronFit algorithm proposed in \cite{leskovec2010kronecker}, on random Kronecker graphs generated according to \Cref{def:random_Kronecker_graph}.

We observe from \Cref{fig:compare_Kronfit_IHT} that, in terms of performance, the proposed approaches marginally fall short compared to KronFit for \emph{very sparse} graphs (with an average connection probability down to $10^{-6}$) but outperforms KronFit for \emph{slightly denser} Kronecker graphs. 
The performance of the proposed \Cref{algo:meta} gets better as the graph becomes denser: This is not surprising, since it is designed for denser graphs.
In terms of time complexity, we see from~\Cref{fig:compare_Kronfit_IHT} that the running time of KronFit grows rapidly as the graph gets denser, while \Cref{algo:meta} consistently maintains a commendably low running time. 

\begin{Remark}[Numerical stability]\normalfont
\label{rem:num_stable}
Also note that for a given graph, the performance of \Cref{algo:meta} is deterministic. 
This is in contrast to KronFit, which exhibits significant variance in its performance, as a consequence of its reliance on sampling (to solve the vertices matching problem, see again \Cref{rem:correspondence}). 
This numerical instability of KronFit is further confirmed on \Cref{table::classification} below and limits its applications in downstream tasks such as graph classification.  
\end{Remark}

Turning our attention to the comparative analysis between the IHT and convex relaxation methods inside \Cref{alg:permuted_LR}.
We observe in \Cref{fig:compare_Kronfit_IHT} that these two approaches demonstrate analogous efficacy and computational efficiency, for not-so-sparse graphs. 
This observation aligns with recent trend of \emph{non-convex optimization} in ML, that advises \emph{not} to relax the non-convex problems but to solve them directly (e.g., with the non-convex IHT). 
While seemingly doomed to fail, this approach is shown to work well, both theoretically and empirically, in a series of illuminating results, if the problem has nice structure~\cite{jain2017non}.
We conjecture that such property also holds for random Kronecker graphs.

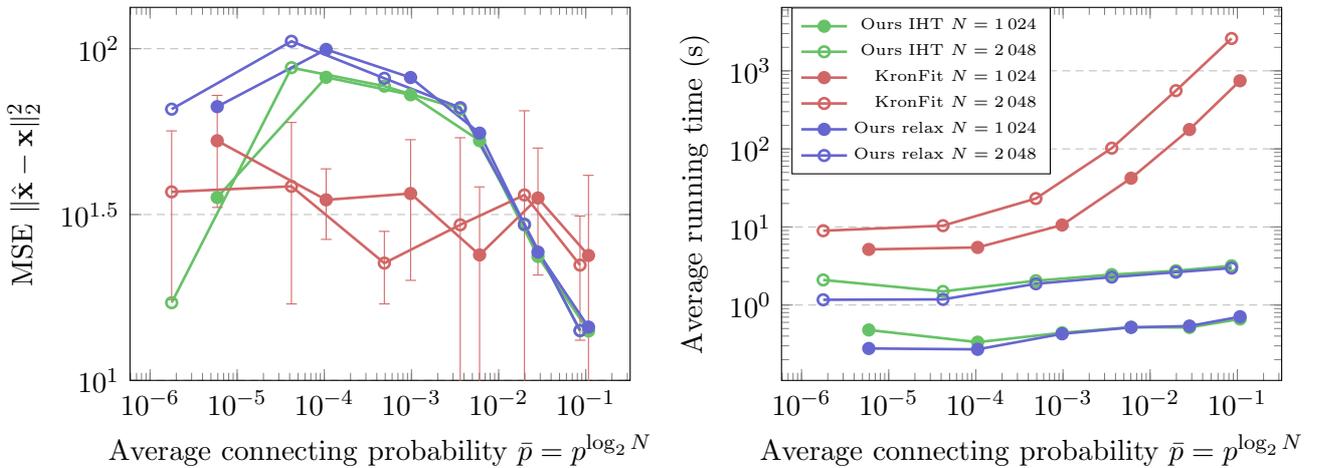
\begin{figure}[thb!]
\centering
\begin{minipage}[c]{0.48\textwidth}
\begin{tikzpicture}
\renewcommand{\axisdefaulttryminticks}{4} 
\pgfplotsset{every major grid/.style={densely dashed}}       
\tikzstyle{every axis y label}+=[yshift=-10pt] 
\tikzstyle{every axis x label}+=[yshift=5pt]
\pgfplotsset{every axis legend/.append style={cells={anchor=east},fill=white, at={(0.02,0.98)}, anchor=north west, font=\tiny}}
\begin{axis}[
width=\textwidth,
height=.8\textwidth,
ymin = 10,
ymajorgrids=true,
scaled ticks=true,
xlabel = { Average connecting probability $\bar p = p^{\log_2 N}$ },
ylabel = { MSE $\| \hat \x - \x \|_2^2$ },
scaled ticks=true,
ymode=log,
xmode=log
]
\addplot[mark=*,color=GREEN!60!white,line width=1pt] coordinates{
(0.300000^10,  35.6052)(0.400000^10, 81.9956)(0.500000^10,72.6274)(0.600000^10, 52.8076 )(0.700000^10,23.6445 )(0.800000^10,14.0928)
};
\addplot[mark=o,color=GREEN!60!white,line width=1pt] coordinates{
(0.300000^11, 17.1496)(0.400000^11, 87.6384)(0.500000^11,77.1122)(0.600000^11,  65.7416  )(0.700000^11,29.4007 )(0.800000^11,14.1258)
};
\addplot[
mark=*,color=RED!60!white,line width=1pt,
error bars/.cd,
y dir=both, y explicit,
] 
coordinates{
(0.300000^10,52.7806) +- (0,19.5609)
(0.400000^10,35.0183) +- (0,8.4080)
(0.500000^10,36.5956) +- (0,16.5740)
(0.600000^10,23.8912 ) +- (0,14.4315)
(0.700000^10,35.4720) +- (0,14.6983)
(0.800000^10,23.8048) +- (0,17.7007)
};
\addplot[
mark=o,color=RED!60!white,line width=1pt,
error bars/.cd,
y dir=both, y explicit,
] 
coordinates{
(0.300000^11, 37.0184) +- (0,19.5041)
(0.400000^11,38.4619) +- (0,21.4621)
(0.500000^11, 22.5698) +- (0, 5.5682)
(0.600000^11,  29.4469) +- (0,24.4514)
(0.700000^11, 36.2220) +- (0,28.7647)
(0.800000^11,22.2626) +- (0, 9.0481)
};
\addplot[mark=*,color=BLUE!60!white,line width=1pt] coordinates{
(0.300000^10,66.882025)(0.400000^10,99.492925)(0.500000^10,81.862078)(0.600000^10,55.635160)(0.700000^10,24.373773)(0.800000^10,14.481985)
};
\addplot[mark=o,color=BLUE!60!white,line width=1pt] coordinates{
(0.300000^11,65.721723)(0.400000^11,105.289800)(0.500000^11,81.415950)(0.600000^11,66.411783)(0.700000^11,29.563135)(0.800000^11,14.132318)
};
\end{axis}
\end{tikzpicture}
  \end{minipage}
  \hfill{}%
  \begin{minipage}[c]{0.48\textwidth}
\begin{tikzpicture}
\renewcommand{\axisdefaulttryminticks}{4} 
\pgfplotsset{every major grid/.style={densely dashed}}       
\tikzstyle{every axis y label}+=[yshift=-10pt] 
\tikzstyle{every axis x label}+=[yshift=5pt]
\pgfplotsset{every axis legend/.append style={cells={anchor=east},fill=white, at={(0.02,1.0)}, anchor=north west, font=\tiny }}
\begin{axis}[
width=\columnwidth,
height=.8\textwidth,
xlabel = { Average connecting probability $\bar p = p^{\log_2 N}$ },
ymode=log,
xmode=log,
ylabel = { Average running time (s) },
ymajorgrids=true,
scaled ticks=true,
]
\addplot[mark=*,color=GREEN!60!white,line width=1pt] coordinates{
(0.300000^10,0.4792)(0.400000^10,0.3345)(0.500000^10,0.4411)(0.600000^10,0.5202)(0.700000^10,0.5158)(0.800000^10,0.6624)
};

\addlegendentry{{ Ours IHT $N = 1\,024$ }};
\addplot[mark=o,color=GREEN!60!white,line width=1pt] coordinates{
(0.300000^11,2.0967)(0.400000^11, 1.4942)(0.500000^11,2.0474)(0.600000^11,2.4527)(0.700000^11,2.7461)(0.800000^11,3.1809)
};
\addlegendentry{{ Ours IHT $N = 2\,048$ }};
\addplot[mark=*,color=RED!60!white,line width=1pt] coordinates{
(0.300000^10,5.158000)(0.400000^10,5.466900)(0.500000^10,10.582500)(0.600000^10,42.134200)(0.700000^10,177.084600)(0.800000^10,746.147300)
};
\addlegendentry{{ KronFit $N = 1\,024$ }};
\addplot[mark=o,color=RED!60!white,line width=1pt] coordinates{
(0.300000^11,8.943200)(0.400000^11,10.383900)(0.500000^11,23.129700)(0.600000^11,102.2133)(0.700000^11,557.5088)(0.800000^11,2594.729200)
};
\addlegendentry{{ KronFit $N = 2\,048$ }};
\addplot[mark=*,color=BLUE!60!white,line width=1pt] coordinates{
(0.300000^10, 0.277700)(0.400000^10,0.270700)(0.500000^10,0.427600)(0.600000^10, 0.517100 )(0.700000^10,0.538100)(0.800000^10,0.708700)
};
\addlegendentry{{ Ours relax $N = 1\,024$ }};
\addplot[mark=o,color=BLUE!60!white,line width=1pt] coordinates{
(0.300000^11,1.167600)(0.400000^11, 1.181100 )(0.500000^11, 1.872400)(0.600000^11,2.277600)(0.700000^11,2.632800)(0.800000^11,2.973400)
};
\addlegendentry{{ Ours relax $N = 2\,048$ }};
\end{axis}
\end{tikzpicture}
\end{minipage}
\caption{{Estimation MSEs (\textbf{left}) and running time (\textbf{right}) of KronFit versus \Cref{algo:meta} (with IHT and convex relaxation), on random Kronecker graphs in \Cref{def:random_Kronecker_graph}, with $p\in [0.3, 0.8]$, $\x = [5.25, 0.25, 2.25, -7.75]$, and $20\%$ vertices randomly shuffled, for $N = 1\,024, 2\,048$, $s = 5$ in \Cref{alg:permuted_LR} for IHT. Result obtained over $10$ independent runs on the \emph{same} graph. }}
\label{fig:compare_Kronfit_IHT}
\end{figure}




We then evaluate, in \Cref{fig:stability_and_shuffle_ratios}, the performance of \Cref{algo:meta} as a function of the ratio of node perturbation.
We observe that as the number of node perturbation increases, both IHT and convex relaxation approaches exhibit a diminishing level of accuracy.
This observation is in line with our strategic departure from the original permuted linear regression in \eqref{eq:permute_LS_1}, to the relaxed formulation in \eqref{eq:permute_LS_2} that hypothesizes a sparsely permuted structure.

\begin{figure}[thb]
  \centering
  \begin{minipage}[c]{0.48\textwidth}
\begin{tikzpicture}
\renewcommand{\axisdefaulttryminticks}{4} 
\pgfplotsset{every major grid/.style={densely dashed}}       
\tikzstyle{every axis y label}+=[yshift=-10pt] 
\tikzstyle{every axis x label}+=[yshift=5pt]
\pgfplotsset{every axis legend/.append style={cells={anchor=west},fill=white, at={(0,1)}, anchor=north west, font=\tiny }}
\begin{axis}[
width=\columnwidth,
height=.8\columnwidth,
ymajorgrids=true,
scaled ticks=true,
xlabel = { $\%$ of node permutation },
ylabel = { MSE $\| \hat \x - \x \|_2^2$ },
scaled ticks=true,
ymode=log
]
\addplot[mark=*,color=GREEN!60!white,line width=1pt] coordinates{
(0.000000,16.9422)
(0.100000,19.3862)
(0.200000, 24.3231)
(0.300000,27.8984)
(0.400000,33.7782)
(0.500000, 45.6503)
(0.600000,57.1454)
(0.700000, 63.3714)
(0.800000,69.9851)
(0.900000,69.5102)
(1.000000,92.9465)
};
\addlegendentry{{ Ours IHT $N = 1\,024$ }}; 
\addplot[mark=o,color=GREEN!60!white,line width=1pt] coordinates{
(0.000000,20.3161)
(0.100000,24.4114)
(0.200000,29.0018)
(0.300000, 36.1022)
(0.400000,40.1582)
(0.500000,46.7567)
(0.600000, 57.9052)
(0.700000,60.2324)
(0.800000,68.1716)
(0.900000,81.7472)
(1.000000, 89.2492)
};
\addlegendentry{{ Ours IHT $N = 2\,048$ }}; 
\addplot[mark=*,color=BLUE!60!white,line width=1pt] coordinates{
(0.000000,17.736)
(0.100000,20.4203)
(0.200000,25.38)
(0.300000,29.0678)
(0.400000,35.1445)
(0.500000,46.817)
(0.600000,56.962)
(0.700000,64.1397)
(0.800000,70.0687)
(0.900000,70.4211)
(1.000000,92.5763)
};
\addlegendentry{{ Ours relax $N = 1\,024$ }}; 
\addplot[mark=o,color=BLUE!60!white,line width=1pt] coordinates{
(0.000000,20.3758)
(0.100000,24.4314)
(0.200000,29.043)
(0.300000,36.2337)
(0.400000,40.1913)
(0.500000,46.8165)
(0.600000,57.9653)
(0.700000,60.2415)
(0.800000,68.2045)
(0.900000,81.7779)
(1.000000,89.3011)
};
\addlegendentry{{ Ours relax $N = 2\,048$ }}; 
\end{axis}
\end{tikzpicture}
\end{minipage}
\caption{{
Estimation MSEs of \Cref{algo:meta} on Kronecker graphs as in \Cref{def:random_Kronecker_graph} with $p = 0.7$, $N = 1\,024, 2\,048$, and same $\x$ as in \Cref{fig:compare_Kronfit_IHT}, as a function of the percentage of node permutation.}}
\label{fig:stability_and_shuffle_ratios}
\end{figure}

We further assess, in \Cref{fig:accelerated}, the use of RNLA techniques (e.g, randomized SVD and random sampling in \Cref{rem:complexity}) to reduce the time complexity of \Cref{algo:meta}. 
The results in \Cref{fig:accelerated} show that the incorporation of RNLA techniques yields a significant reduction in running time (theoretically scaling down from $O(N^2)$ to $O(N)$). 
Impressively, this accelerated framework demonstrates virtually no compromise in performance, particularly for not-so-sparse Kronecker graphs. 
This outcome underscores the efficacy of RNLA techniques in further enhancing the computational efficiency without sacrificing the overall effectiveness of \Cref{algo:meta}.



\begin{figure}[h!]
\centering
\begin{minipage}[c]{0.48\textwidth}
\begin{tikzpicture}
\renewcommand{\axisdefaulttryminticks}{4} 
\pgfplotsset{every major grid/.style={densely dashed}}       
\tikzstyle{every axis y label}+=[yshift=-10pt] 
\tikzstyle{every axis x label}+=[yshift=5pt]
\pgfplotsset{every axis legend/.append style={cells={anchor=east},fill=white, at={(0.02,0.02)}, anchor=south west, font=\tiny}}
\begin{axis}[
width=\textwidth,
height=.8\textwidth,
ymin = 10,
ymajorgrids=true,
scaled ticks=true,
xlabel = { Average connecting probability $\bar p = p^{\log_2 N}$ },
ylabel = { MSE $\| \hat \x - \x \|_2^2$ },
scaled ticks=true,
ymode=log,
xmode=log
]
\addplot[
mark=triangle,color=GREEN!60!white,line width=1pt,
error bars/.cd,
y dir=both, y explicit,
] 
coordinates{
(0.300000^11, 92.9956) +- (0,0.62966)  
(0.400000^11, 85.0374) +- (0, 14.8124)
(0.500000^11,  85.1517) +- (0,5.3754)
(0.600000^11, 70.8328 ) +- (0,3.722)
(0.700000^11, 32.0473) +- (0,2.8556)
(0.800000^11,  13.501 ) +- (0,1.4383)
};
\addlegendentry{{ Ours IHT, accelerated }};
\addplot[
mark=triangle,color=BLUE!60!white,line width=1pt,
error bars/.cd,
y dir=both, y explicit,
] 
coordinates{
(0.300000^11, 99.7884) +- (0,14.912)
(0.400000^11, 93.2702) +- (0,18.7938)  
(0.500000^11,  81.0844 ) +- (0,7.0712)
(0.600000^11,82.6631) +- (0,3.5008)
(0.700000^11, 45.4322) +- (0,2.4886)
(0.800000^11,14.7817 ) +- (0,1.7803)
};
\addlegendentry{{ Ours relax, accelerated }};
\addplot[mark=o,color=GREEN!60!white,line width=1pt] coordinates{
(0.300000^11, 17.1496)(0.400000^11, 87.6384)(0.500000^11,77.1122)(0.600000^11,  65.7416  )(0.700000^11,29.4007 )(0.800000^11,14.1258)
};
\addlegendentry{{ Ours IHT }};
\addplot[mark=o,color=BLUE!60!white,line width=1pt] coordinates{
(0.300000^11,65.721723)(0.400000^11,105.289800)(0.500000^11,81.415950)(0.600000^11,66.411783)(0.700000^11,29.563135)(0.800000^11,14.132318)
};
\addlegendentry{{ Ours relax }};
\end{axis}
\end{tikzpicture}
  \end{minipage}
  \hfill{}%
  \begin{minipage}[c]{0.48\textwidth}
\begin{tikzpicture}
\renewcommand{\axisdefaulttryminticks}{4} 
\pgfplotsset{every major grid/.style={densely dashed}}       
\tikzstyle{every axis y label}+=[yshift=-10pt] 
\tikzstyle{every axis x label}+=[yshift=5pt]
\pgfplotsset{every axis legend/.append style={cells={anchor=east},fill=white, at={(0.02,1.0)}, anchor=north west, font=\tiny }}
\begin{axis}[
width=\columnwidth,
height=.8\textwidth,
xlabel = { Average connecting probability $\bar p = p^{\log_2 N}$ },
ymode=log,
xmode=log,
ylabel = { Average running time (s) },
ymajorgrids=true,
scaled ticks=true,
]
\addplot[mark=triangle,color=GREEN!60!white,line width=1pt] coordinates{
(0.300000^11,0.6979)(0.400000^11, 0.6715)(0.500000^11, 0.6603)(0.600000^11,0.6458)(0.700000^11,0.6328)(0.800000^11,0.5610)
};
\addplot[mark=triangle,color=BLUE!60!white,line width=1pt] coordinates{
(0.300000^11,0.5824)(0.400000^11, 0.6042)(0.500000^11, 0.5871)(0.600000^11,0.5944)(0.700000^11,0.5054)(0.800000^11,0.4692)
};
\addplot[mark=o,color=BLUE!60!white,line width=1pt] coordinates{
(0.300000^11,1.167600)(0.400000^11, 1.181100)(0.500000^11, 1.872400)(0.600000^11,2.277600)(0.700000^11,2.632800)(0.800000^11,2.973400)
};
\addplot[mark=o,color=GREEN!60!white,line width=1pt] coordinates{
(0.300000^11,2.0967)(0.400000^11, 1.4942)(0.500000^11,2.0474)(0.600000^11,2.4527)(0.700000^11,2.7461)(0.800000^11,3.1809)
};
\end{axis}
\end{tikzpicture}
\end{minipage}
\caption{ Estimation MSEs (\textbf{left}) and running time (\textbf{right}) of \Cref{algo:meta}, with and without RNLA acceleration in \Cref{rem:complexity}, on random Kronecker graphs as in \Cref{fig:compare_Kronfit_IHT} for $N = 2\,048$.
For randomized SVD~\cite{halko2011Finding}, we use an iteration count of $q=2$; for random sampling, we choose $100$ from $N$ blocks uniformly at random.
Result obtained over $10$ independent runs. }
\label{fig:accelerated}
\end{figure}
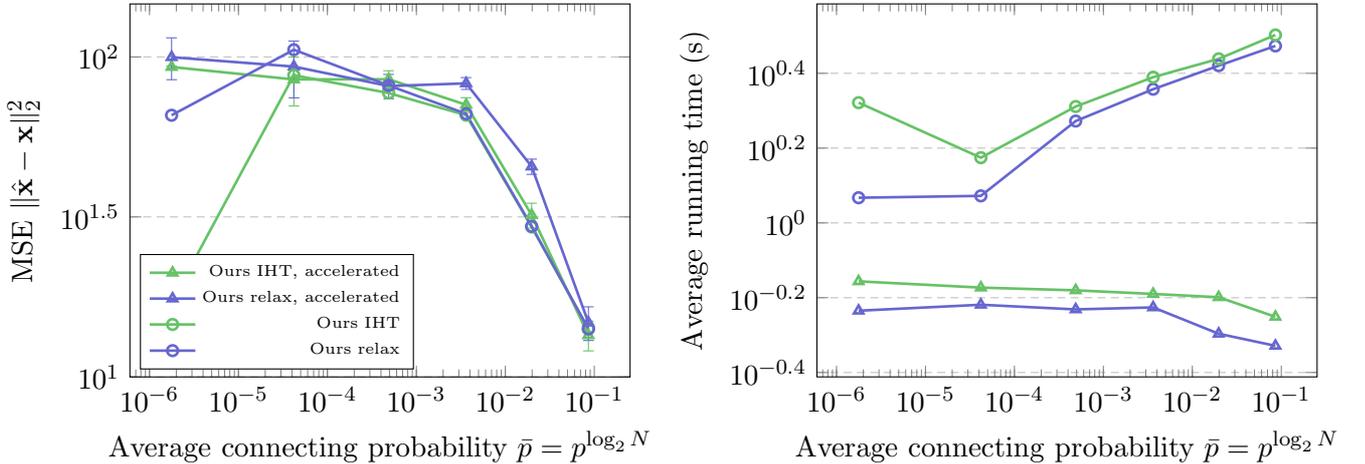


Similar conclusion can be reached when comparing \Cref{algo:meta} to the moment-based approach proposed in~\cite{gleich2012MomentBased}, and we refer the readers to \Cref{sec:additional_num} for additional numerical results on Kronecker graph inference.

\begin{table*}[htb!] 
  \caption{ Accuracy of graph classification using \Cref{algo:meta}, KronFit, and different GNN-based methods: GCN baseline in~\cite{errica2019Fair}, DGCNN~\cite{zhang2018end}, DiffPool~\cite{ying2018hierarchical}, and ECC~\cite{simonovsky2017dynamic}. For REDDIT-B, IMDB-B, COLLAB, IMDB-M and REDDIT-5K datasets, the two accuracies represent whether graph node degrees are used as additional input features. The training of ECC on REDDIT-B, COLLAB and REDDIT-5K are out of time ($> 72$ hours for a single training) and omitted.  } 
  \label{table::classification}
  \centering
  \begin{tabular}{lcccccccc}
    \toprule
    Datasets & Ours & KronFit & GCN baseline & DGCNN & DiffPool & ECC \\
    \midrule
    PROTEINS & $72.0\% \pms{3.3}$ & $65.0\% \pms{2.7}$ &  $75.8 \% \pms{3.7}$ &$72.9 \% \pms{3.5}$&$73.7 \% \pms{3.5}$& $72.3 \% \pms{3.4}$     \\
    NCI1 &$60.4\% \pms{1.9}$ & $57.6\% \pms{1.9}$ & $69.8 \% \pms{2.2}$ & $76.4\% \pms{1.7}$ & $76.9\% \pms{1.9}$ & $76.2 \% \pms{1.4}$  \\
    REDDIT-B & $81.4 \% \pms{3.0}$ & $72.7\% \pms{2.8}$ & $72.1 \% \pms{7.8}$ & $77.1\% \pms{2.9} $ & $76.6\% \pms{2.4}$ & {\sf out of time} \\
        & $80.5 \% \pms{1.7}$ & $73.1\% \pms{3.0}$ & $82.2\% \pms{3.0}$ & $87.8\% \pms{2.5}$  & $89.1\% \pms{1.6}$ & {\sf out of time} \\
    
    IMDB-B & $66.0 \% \pms{2.7}$ & $60.9\% \pms{ 3.4}$ & $50.7 \% \pms{ 2.4}$ & $53.3\% \pms{ 5.0}$ &  $68.3\% \pms{ 6.1}$ & $67.8 \% \pms{ 4.8}$ \\
          & $65.3\% \pms{ 4.6}$ & $60.0\% \pms{ 4.3}$ & $70.8\% \pms{ 5.0}$ & $69.2\% \pms{ 3.0}$ & $68.4\% \pms{ 3.3}$ & $67.7 \% \pms{ 2.8}$ \\
    ENZYMES & $35.6 \% \pms{ 7.8}$& $20.6 \% \pms{ 4.8}$ & $65.2 \% \pms{ 6.4}$ & $38.9 \% \pms{ 5.7}$ & $59.5 \% \pms{ 5.6}$ & $29.5 \% \pms{ 8.2}$\\
    COLLAB & $68.0 \% \pms{ 1.7}$ & $62.4 \% \pms{ 1.8}$ &$55.0 \% \pms{ 1.9}$& $57.4 \% \pms{ 1.9}$& $67.7 \% \pms{ 1.9}$& {\sf out of time} \\
        & $68.3 \% \pms{ 1.8}$ &$63.0 \% \pms{ 1.9}$ &$70.2 \% \pms{ 1.5}$& $71.2 \% \pms{ 1.9}$& $68.9 \% \pms{ 2.0}$ &  {\sf out of time}\\
    IMDB-M & $46.7 \% \pms{ 3.2}$ & $41.3 \% \pms{ 4.8}$ &$36.1 \% \pms{ 3.0}$ & $38.6 \% \pms{ 2.2}$& $45.1 \% \pms{ 3.2}$ & $44.8 \% \pms{ 3.1}$\\
         & $46.4 \% \pms{ 4.2}$& $42.5 \% \pms{ 3.7}$& $49.1 \% \pms{ 3.5}$& $45.6 \% \pms{ 3.4}$& $45.6 \% \pms{ 3.4}$ & $43.5 \% \pms{ 3.1}$\\
    REDDIT-5K & $42.8 \% \pms{ 1.4}$& $41.2 \% \pms{ 1.5}$ &$35.1 \% \pms{ 1.4}$ &$35.7 \% \pms{ 1.8}$ &$34.6 \% \pms{ 2.0}$ & {\sf out of time}\\
          & $43.3 \% \pms{ 1.8}$& $40.7 \% \pms{ 2.1}$ &$52.2 \% \pms{ 1.5}$ & $49.2 \% \pms{ 1.2}$& $53.8 \% \pms{ 1.4}$& {\sf out of time}\\
    \bottomrule
  \end{tabular}
\end{table*}

\subsection{Application to realistic graph classification}
\label{subsec:application_realistic}

We have conducted experiments in \Cref{subsec:synthetic_Kron} showing the effectiveness of the proposed \Cref{algo:meta}, in the inference of random Kronecker graphs.
In the following, we consider the use of Kronecker graph model and~\Cref{algo:meta}, as feature extractors for large-scale realistic graphs.

We focus on the task of (binary and multi-class) graph classification, on a range of chemical and social graphs  as in \cite{errica2019Fair}. 
See \Cref{sec:datasets} for the statistics for these datasets.
We compare, in \Cref{table::classification}, the performance of\footnote{ For \Cref{algo:meta}, we choose $m=5$ for PROTEINS and ENZYMES datasets, and $m=4$ otherwise. The graph features are standardized before classification, and we follow the pre-computed data partitions as in \cite{errica2019Fair}. }
\begin{itemize}
    \item[({\romannumeral 1})] Kronecker graph inference approaches of the proposed \Cref{algo:meta} and KronFit~\cite{leskovec2010kronecker} as graph feature extractors, followed by a single-layer MLP with ReLU activation; \emph{versus}
    \item[({\romannumeral 2})] a few popular baselines based on graph neural networks (GNNs) include: GCN baseline in~\cite{errica2019Fair}, DGCNN~\cite{zhang2018end}, DiffPool~\cite{ying2018hierarchical}, and ECC~\cite{simonovsky2017dynamic}.
\end{itemize}
Note from \Cref{table::classification} that \Cref{algo:meta}, by efficiently exploiting the graph topological information, consistently outperforms KronFit, and achieves comparable performance to popular GNN baselines on a variety of realistic graph datasets. 
This suggests that the features obtained from \Cref{algo:meta} can be used as effective representations for realistic graphs.

It is worth noting from \Cref{table::classification} that on the chemical ENZYMES dataset~\cite{schomburg2004brenda}, both the proposed \Cref{algo:meta} and KronFit exhibit a notable decrease in accuracy. 
This may be attributed to the dataset's relatively limited size and higher number of classes, for which stronger feature extractors are needed.
For social graphs such as REDDIT-B, IMDB-B, COLLAB, IMDB-M, and REDDIT-5K~\cite{yanardag2015deep}, the experiments in \cite{errica2019Fair} are conducted with and without the node degrees as the input features.
Interestingly, note that when compared to the GCN baseline in~\cite{errica2019Fair}, our \Cref{algo:meta} and KronFit consistently outperform the GCN baselines on \emph{all} aforementioned social graph datasets \emph{in the absence of} the node degree features, but fail when node degree features are present.
This numerical evidence seemingly suggests that Kronecker graph model can implicitly explores the node degree features.

\section{Conclusion}
\label{sec:con}

In this paper, we investigate the large-dimensional behavior of random Kronecker graphs
We show that the graph adjacency $\A$ is close, in spectral norm, to the sum of a small-rank signal $\bS_K$ and a random noise matrix $\Z$.
Based on this observation, we propose a ``denoise-and-solve'' \Cref{algo:meta} for graph parameters inference, and discuss its practical implementation.
Numerical experiments are provided to validate the effectiveness (in terms of performance and running time) of the proposed approach against the KronFit method, on random Kronecker graphs. 
We further propose to use Kronecker graph model and \Cref{algo:meta} as features extractors for realistic graph classification, and provide comparative analysis to a few popular GNN baselines.

\subsubsection*{Acknowledgments}
Z.~Liao would like to acknowledge the National Natural Science Foundation of China (via fund NSFC-62206101 and NSFC-12141107), the Fundamental Research Funds for the Central Universities of China (2021XXJS110), the Key Research and Development Program of Guangxi (GuiKe-AB21196034) for providing partial support.

Y.~Xiao was supported in part by the National Natural Science Foundation of China under grant 62071193, the Key R \& D Program of Hubei Province of China under grants 2021EHB015 and 2020BAA002, and the major key project of Peng Cheng Laboratory (No.\@ PCL2021A12).

\printbibliography


\clearpage


\appendix

The supplementary material is organized as follows: 
A few useful lemmas that will be consistently exploited in the proof are listed in \Cref{sec:SM-lemmas}.
The proofs of the technical results in the paper are given in \Cref{sec:appendix_proof}.
We provide in \Cref{sec:spectral_analysis_centered_adjacency} some detailed results on the spectral analysis of large Kronecker graphs that may be of independent interest.
Additional numerical results are given in \Cref{sec:additional_num}.
The statistics of the graph classification datasets used in \Cref{subsec:application_realistic} are reported in \Cref{sec:datasets}.

\section{Useful Lemmas}\label{sec:SM-lemmas}

Here we list a few lemmas that will be constantly used in the proof.

\begin{Lemma}[Weyl's inequality, {\cite[Theorem~4.3.1]{horn2012matrix}}]\label{lem:weyl}
Let $\A,\B \in \RR^{p \times p}$ be symmetric matrices and let the respective eigenvalues of $\A$, $\B$ and $\A+\B$ be arranged in decreasing order, i.e., $\lambda_1 \geq \lambda_2 \geq \ldots \geq \lambda_p$. Then, for all $i\in\{1,\ldots,p\}$,
\begin{equation}
  \lambda_{i+j-1}(\A) + \lambda_{p+1-j} (\B) \leq \lambda_i(\A + \B) \leq \lambda_{i-j} (\A) + \lambda_{j+1} (\B)
\end{equation}
In particular,
\begin{align*}
  \max_{1\leq i\leq p} |\lambda_i(\A) - \lambda_i(\B)| \leq \| \A - \B \|_2.
\end{align*}
\end{Lemma}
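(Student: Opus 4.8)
The plan is to prove the two-sided eigenvalue bound from the Courant--Fischer variational characterization combined with a subspace dimension count, and then to read off the ``in particular'' statement as a special case. First I would recall that for any symmetric $\M \in \RR^{p\times p}$ with eigenvalues $\lambda_1(\M) \ge \cdots \ge \lambda_p(\M)$ and any index $i$,
\[
\lambda_i(\M) = \max_{\dim S = i} \min_{\substack{\x \in S\\ \|\x\|_2 = 1}} \x^\T \M \x = \min_{\dim S = p - i + 1} \max_{\substack{\x \in S\\ \|\x\|_2 = 1}} \x^\T \M \x,
\]
where $S$ ranges over linear subspaces of $\RR^p$ of the indicated dimension; the two extremal subspaces are spanned by the top, respectively bottom, eigenvectors of $\M$.

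For the upper bound $\lambda_i(\A+\B) \le \lambda_{i-j}(\A) + \lambda_{j+1}(\B)$, let $\uu_1,\dots,\uu_p$ and $\vv_1,\dots,\vv_p$ be orthonormal eigenbases of $\A$ and $\B$, and set $S_1 = \spn\{\uu_{i-j},\dots,\uu_p\}$ and $S_2 = \spn\{\vv_{j+1},\dots,\vv_p\}$, of dimensions $p-i+j+1$ and $p-j$. The elementary bound $\dim(S_1\cap S_2) \ge \dim S_1 + \dim S_2 - p$ then forces $\dim(S_1\cap S_2) \ge p-i+1$. For any unit vector $\x \in S_1\cap S_2$, expanding in the respective eigenbases gives the Rayleigh-quotient estimates $\x^\T\A\x \le \lambda_{i-j}(\A)$ and $\x^\T\B\x \le \lambda_{j+1}(\B)$, hence $\x^\T(\A+\B)\x \le \lambda_{i-j}(\A) + \lambda_{j+1}(\B)$. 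Picking any subspace $T \subseteq S_1\cap S_2$ of dimension exactly $p-i+1$ and invoking the min--max form above yields the upper bound.

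The lower bound $\lambda_{i+j-1}(\A) + \lambda_{p+1-j}(\B) \le \lambda_i(\A+\B)$ is obtained by the mirror-image argument using the max--min form: taking $S_1 = \spn\{\uu_1,\dots,\uu_{i+j-1}\}$ and $S_2 = \spn\{\vv_1,\dots,\vv_{p+1-j}\}$ (dimensions summing to $i+p$, so that $\dim(S_1\cap S_2)\ge i$), every unit $\x$ in the intersection satisfies $\x^\T(\A+\B)\x \ge \lambda_{i+j-1}(\A) + \lambda_{p+1-j}(\B)$, and the max--min characterization then gives the claim.

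For the concluding inequality, I would set $\bE \equiv \A - \B$, which is symmetric, and apply the bound just established to the decomposition $\A = \B + \bE$ (with $\bE$ in the role of the second summand). Specializing the upper bound with $j = 0$ gives $\lambda_i(\A) \le \lambda_i(\B) + \lambda_1(\bE)$, while the lower bound with $j = 1$ gives $\lambda_i(\A) \ge \lambda_i(\B) + \lambda_p(\bE)$; since $\|\bE\|_2 = \max_k |\lambda_k(\bE)| = \max(\lambda_1(\bE), -\lambda_p(\bE))$ for symmetric $\bE$, these combine into $|\lambda_i(\A) - \lambda_i(\B)| \le \|\A-\B\|_2$ for each $i$, and a maximum over $i$ finishes the argument. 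The only point demanding care is the index bookkeeping: for each choice of $j$ one must check that all subscripts stay within $\{1,\dots,p\}$ (e.g.\ $1 \le i-j$ and $j+1 \le p$ for the upper bound), and that the dimension count $\dim(S_1\cap S_2) \ge \dim S_1 + \dim S_2 - p$ is applied with the correct dimensions. Beyond this routine arithmetic there is no real obstacle, as the statement is classical.
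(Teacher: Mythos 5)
Your proof is correct: the Courant--Fischer characterization combined with the dimension count $\dim(S_1\cap S_2) \ge \dim S_1 + \dim S_2 - p$ is the standard argument for Weyl's inequality, and your index bookkeeping (including the specializations $j=0$ and $j=1$ for the ``in particular'' clause) checks out. The paper itself gives no proof---it cites this as Theorem~4.3.1 of Horn and Johnson---and your argument is essentially the one found in that reference, so there is nothing further to compare.
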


\begin{Lemma}[SVD of Kronecker product, {\cite[Theorem~4.2.15]{horn1991topics}}]\label{lem:svd_kron}
Let $\A \in \RR^{p \times q}, \B \in \RR^{m \times n}$ having rank $r_{\A}, r_{\B}$, and let $ \A =  \U_{\A} \bSigma_{\A} \V_{\A}^\T$ and $ \B = \U_{\B} \bSigma_{\B} \V_{\B}^\T$ be the singular value decomposition of $\A$ and $\B$, respectively. 
Then, the singular value decomposition of the Kronecker product $\A \otimes \B$ is given by
\begin{equation}
    \A \otimes \B = (\U_{\A} \bSigma_{\A} \V_{\A}^\T) \otimes (\U_{\B} \bSigma_{\B} \V_{\B}^\T) = (\U_{\A} \otimes \U_{\B})(\bSigma_{\A} \otimes \bSigma_{\B})(\V_{\A} \otimes \V_{\B})^\T,
\end{equation}
with $\rank (\A \otimes \B) = \rank (\B \otimes \A) = r_{\A} \cdot r_{\B}$ and $ \| \A \otimes \B \|_2 = \| \A \|_2\cdot \| \B \|_2$.
\end{Lemma}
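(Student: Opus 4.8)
The plan is to verify directly that the proposed factorization $(\U_{\A} \otimes \U_{\B})(\bSigma_{\A} \otimes \bSigma_{\B})(\V_{\A} \otimes \V_{\B})^\T$ meets the three defining requirements of a singular value decomposition (orthogonal outer factors, a rectangular diagonal nonnegative middle factor, and correct reconstruction of the target matrix), and then to read off the rank and spectral norm from the structure of $\bSigma_{\A} \otimes \bSigma_{\B}$. The essential tool throughout is the mixed-product property of the Kronecker product, namely $(\M_1 \otimes \M_2)(\M_3 \otimes \M_4) = (\M_1 \M_3) \otimes (\M_2 \M_4)$ whenever the block dimensions are compatible. Applying it twice gives reconstruction immediately: $(\U_{\A} \otimes \U_{\B})(\bSigma_{\A} \otimes \bSigma_{\B})(\V_{\A} \otimes \V_{\B})^\T = (\U_{\A} \bSigma_{\A} \V_{\A}^\T) \otimes (\U_{\B} \bSigma_{\B} \V_{\B}^\T) = \A \otimes \B$.

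Next I would check that the two outer factors are orthogonal. Writing full SVDs with $\U_{\A} \in \RR^{p \times p}$, $\V_{\A} \in \RR^{q \times q}$ orthogonal (and similarly for $\B$), the mixed-product property yields $(\U_{\A} \otimes \U_{\B})^\T (\U_{\A} \otimes \U_{\B}) = (\U_{\A}^\T \U_{\A}) \otimes (\U_{\B}^\T \U_{\B}) = \I_p \otimes \I_m = \I_{pm}$, and analogously for $\V_{\A} \otimes \V_{\B}$. I would then observe that $\bSigma_{\A} \otimes \bSigma_{\B}$ is a rectangular diagonal matrix whose nonzero diagonal entries are exactly the products $\sigma_i(\A)\sigma_j(\B) \geq 0$, which is precisely the shape demanded of the central factor of an SVD.

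The one genuine subtlety, and the step I expect to need the most care, is the \emph{ordering} of the singular values: a canonical SVD lists the diagonal entries in nonincreasing order, whereas $\bSigma_{\A} \otimes \bSigma_{\B}$ produces the products $\sigma_i(\A)\sigma_j(\B)$ in lexicographic rather than sorted order. I would resolve this by introducing a permutation matrix $\bP$ that sorts these products into decreasing order and absorbing it into the orthogonal factors. Since $\bP \bP^\T = \I$, one may write $\A \otimes \B = \bigl((\U_{\A} \otimes \U_{\B})\bP^\T\bigr)\bigl(\bP(\bSigma_{\A} \otimes \bSigma_{\B})\bP^\T\bigr)\bigl((\V_{\A} \otimes \V_{\B})\bP^\T\bigr)^\T$, where the outer factors remain orthogonal and the middle factor is now in canonical form, confirming that the displayed product is a bona fide SVD (the stated factorization being this one up to the sorting permutation).

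Finally, the rank and norm claims follow by inspection of the singular value multiset. The rank of $\A \otimes \B$ equals the number of strictly positive singular values, i.e. the number of pairs $(i,j)$ with $\sigma_i(\A) > 0$ and $\sigma_j(\B) > 0$, which is $r_{\A} \cdot r_{\B}$; since $\A \otimes \B$ and $\B \otimes \A$ share the same multiset of singular values, we obtain $\rank(\A \otimes \B) = \rank(\B \otimes \A) = r_{\A} r_{\B}$. The spectral norm, being the largest singular value, equals $\sigma_1(\A)\sigma_1(\B) = \|\A\|_2 \cdot \|\B\|_2$.
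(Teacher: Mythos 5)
Your verification is correct in substance, and it is worth noting that the paper does not prove this lemma at all: it is stated in the ``Useful Lemmas'' appendix purely as a citation to Horn and Johnson, Theorem~4.2.15. Your argument is the standard textbook one — mixed-product property for reconstruction and for orthogonality of $\U_{\A}\otimes\U_{\B}$ and $\V_{\A}\otimes\V_{\B}$, then reading off rank and norm from the multiset of products $\sigma_i(\A)\sigma_j(\B)$ — and you correctly identify the reordering of singular values as the only subtle point.

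One technical wrinkle in how you handle that point: when $\A$ and $\B$ are genuinely rectangular, $\bSigma_{\A}\otimes\bSigma_{\B}$ is \emph{not} a rectangular diagonal matrix. Its nonzero entries $\sigma_i(\A)\sigma_j(\B)$ sit at positions $\bigl((i-1)m+j,\ (i-1)n+j\bigr)$, which lie off the main diagonal whenever $m\neq n$; the matrix merely has at most one nonzero entry per row and per column. Consequently the single conjugation $\bP(\bSigma_{\A}\otimes\bSigma_{\B})\bP^\T$ does not even typecheck unless $pm=qn$: you need \emph{two} permutation matrices, $\bP_1\in\RR^{pm\times pm}$ acting on rows and $\bP_2\in\RR^{qn\times qn}$ acting on columns, so that $\bP_1(\bSigma_{\A}\otimes\bSigma_{\B})\bP_2^\T$ is rectangular diagonal with sorted entries, and then $\A\otimes\B=\bigl((\U_{\A}\otimes\U_{\B})\bP_1^\T\bigr)\bigl(\bP_1(\bSigma_{\A}\otimes\bSigma_{\B})\bP_2^\T\bigr)\bigl((\V_{\A}\otimes\V_{\B})\bP_2^\T\bigr)^\T$. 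This is a two-line fix and does not affect the rank and norm conclusions, which depend only on the multiset of nonzero entries, but as written the sorting step would fail for non-square factors.
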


\begin{Lemma}
\label{lem:vec_Kron}
For matrices $\A \in \RR^{m \times n}$, $\B \in \RR^{n \times p}$, and $\C \in \RR^{p \times q}$, we have
\begin{equation}
   {\rm vec}(\A \B \C) = (\C^\T \otimes \A) {\rm vec} (\B).
\end{equation}
\end{Lemma}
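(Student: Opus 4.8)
The plan is to prove this standard ``vec-trick'' identity by the modular route of first establishing the two special cases $\C = \I$ and $\A = \I$, and then composing them with the Kronecker mixed-product property. First I would record the two building blocks, obtained by a direct block-by-block computation once one fixes the convention that ${\rm vec}(\cdot)$ stacks columns (so the $j$-th length-$m$ block of ${\rm vec}(\M)$ is the $j$-th column of $\M$) and that block $(i,j)$ of a Kronecker product $\A\otimes\B$ equals $A_{ij}\B$. Concretely, for $\B$ with $p$ columns and $\C$ applied on the right I would verify
\begin{equation}
  {\rm vec}(\A \B) = (\I_{p} \otimes \A)\, {\rm vec}(\B), \qquad {\rm vec}(\B\C) = (\C^\T \otimes \I_n)\, {\rm vec}(\B).
\end{equation}
For the first, the $j$-th block of ${\rm vec}(\A\B)$ is the column $\A\,\B_{:,j}$, while $\I_p \otimes \A$ is block-diagonal with $p$ copies of $\A$, so its action on the stacked columns of $\B$ reproduces $\A\,\B_{:,j}$ in each block. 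For the second, the $j$-th column of $\B\C$ is $\sum_l C_{lj}\,\B_{:,l}$, which matches the $j$-th block $\sum_l (\C^\T)_{jl}\,\B_{:,l}$ of $(\C^\T \otimes \I_n)\,{\rm vec}(\B)$.

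Next I would compose the two blocks. Applying the first identity to $\A(\B\C)$, in which $\B\C$ has $q$ columns, and then the second identity to ${\rm vec}(\B\C)$, gives
\begin{equation}
  {\rm vec}(\A\B\C) = (\I_{q} \otimes \A)\,{\rm vec}(\B\C) = (\I_{q} \otimes \A)(\C^\T \otimes \I_n)\,{\rm vec}(\B).
\end{equation}
The Kronecker mixed-product rule $(\mathbf{G}\otimes\mathbf{H})(\mathbf{G}'\otimes\mathbf{H}') = (\mathbf{G}\mathbf{G}')\otimes(\mathbf{H}\mathbf{H}')$ then collapses the two Kronecker factors into $(\I_q\C^\T)\otimes(\A\I_n) = \C^\T \otimes \A$, which is exactly the claimed identity.

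A fully self-contained alternative I might prefer is to expand $\B = \sum_{k,l} B_{kl}\,\e_k \e_l^\T$ in the standard basis, use bilinearity to write $\A\B\C = \sum_{k,l} B_{kl}\,(\A\e_k)(\C^\T\e_l)^\T$, apply the elementary identity ${\rm vec}(\uu\vv^\T) = \vv \otimes \uu$ together with the mixed-product rule to rewrite each summand as $B_{kl}\,(\C^\T\otimes\A)(\e_l\otimes\e_k)$, and finally recognize $\sum_{k,l} B_{kl}\,(\e_l \otimes \e_k) = {\rm vec}(\B)$, whence the factor $(\C^\T\otimes\A)$ comes out. There is no genuine obstacle in either route: the entire content is bookkeeping, and the only point that demands care is keeping the two conventions (column-stacking ${\rm vec}$ and the $A_{ij}\B$ block structure of $\A\otimes\B$) consistent, so that the index arithmetic in the block computations and in ${\rm vec}(\uu\vv^\T)=\vv\otimes\uu$ lines up correctly. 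Once these are pinned down the verification is immediate.
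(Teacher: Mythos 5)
Your proof is correct. Note that the paper itself offers no proof of this lemma: it is stated in the appendix of ``Useful Lemmas'' as a standard fact (the classical vec--Kronecker identity), so there is nothing to compare against. Both of your routes --- the factorization through the special cases ${\rm vec}(\A\B) = (\I_p \otimes \A)\,{\rm vec}(\B)$ and ${\rm vec}(\B\C) = (\C^\T \otimes \I_n)\,{\rm vec}(\B)$ combined via the mixed-product rule, and the rank-one expansion $\B = \sum_{k,l} B_{kl}\,\e_k\e_l^\T$ with ${\rm vec}(\uu\vv^\T) = \vv \otimes \uu$ --- are standard, complete, and consistent with the paper's column-stacking convention for ${\rm vec}(\cdot)$ and its $A_{ij}\B$ block convention for $\A \otimes \B$. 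One small bookkeeping check worth making explicit in the first route: in the composition you apply the first building block to the product $\A(\B\C)$, where $\B\C$ has $q$ columns, so the identity factor is $\I_q$ (not $\I_p$); you do handle this correctly.
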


\section{Mathematical Proofs}
\label{sec:appendix_proof}

\subsection{ Proof of \Cref{prop:linearized} }
\label{sec:appendix_proof_of_linearized_rank}

In the section, we present the proof of the three items (i.e, Item~({\romannumeral 1}),~({\romannumeral 2}),~and~({\romannumeral 3})) of \Cref{prop:linearized} in \Cref{subsec:SM_proof_Item_i}, \Cref{subsec:SM_proof_Item_ii}, and \Cref{subsec:SM_proof_Item_iii}, respectively.

\subsubsection{Proof of \Cref{prop:linearized} Item~(i)}
\label{subsec:SM_proof_Item_i}

First note, under Assumption~\ref{ass:growth-rate} and by the Binomial theorem that,
\begin{align}
    &[\bP_k]_{ij} = \prod_{u,v=1}^m P_{uv}^{C_{uv; ij}} = \prod_{u,v=1}^m ( p + X_{uv}/\sqrt N)^{C_{uv; ij}} \nonumber \\ 
    &= \prod_{u,v=1}^m \left( p^{ C_{uv;ij} } + C_{uv;ij} \cdot p^{ (C_{uv;ij} -1) } X_{uv}/ \sqrt N + \tilde O(N^{-1}) \right) \nonumber \\ 
    &= p^k + p^{ (k-1) } \sum_{u,v=1}^m C_{uv;ij} \cdot X_{uv}/\sqrt N + \tilde O(N^{-1}), \label{eq:approx_bP_K}
\end{align}
for $1 \leq k \leq K$ with coefficients $C_{uv; ij} \in \mathbb N$ satisfying $\sum_{u,v=1}^m C_{uv; ij} = k$.
This gives, by the definition of the Kronecker power, that in matrix form,
\begin{align*}
    \bP_{k} &= \bP_{k-1} \otimes \bP_1 \\
    &= (p^{k-1}\one_{m^{k-1}}\one_{m^{K-1}}^\T + \tilde \bS_{{k-1}} + \tilde O_{ \| \cdot \|_2 }(1) ) \otimes(p \one_m \one_m^\T + \X/\sqrt N) \\
    &= p^k\one_{m^k}\one_{m^k}^\T + p^{k-1}\one_{m^{k-1}}\one_{m^{k-1}}^\T \otimes \X/\sqrt N + p \tilde \bS_{{k-1}} \otimes (\one_m\one_m^\T) + \tilde O_{ \| \cdot \|_2 }(1),
\end{align*}
where we used in the second line $\tilde \bS_{{k-1}}$ to denote terms in \eqref{eq:approx_bP_K} that are linear in (the entries of) $\X$ with $\tilde \bS_1 = \X/\sqrt N$, and $\tilde O_{ \| \cdot \|_2 }(1)$ to denote matrices of spectral norm order $\tilde O(1)$, as a consequence of the fact that $\| \A \|_2 \leq N \| \A \|_{\max}$ for $\A \in \RR^{N \times N}$ and $\| \A \|_{\max} \equiv \max_{i,j} |A_{ij}|$; and in third line the fact that $\| \tilde \bS_{k-1} \|_2 = \tilde O(\sqrt N)$ and Lemma~\ref{lem:svd_kron} so that $\tilde \bS_{k-1} \otimes \X/\sqrt N = \tilde O_{\| \cdot \|_2}(1)$.

Note that by definition of $\bS_k$ in \eqref{eq:def_S}, $\bP_k$ can be rewritten as the following recursion on $\bS_k$,
\begin{equation}
  \bP_k = p^k\one_{m^k}\one_{m^k}^\T + \sqrt N \bS_k + \tilde O_{ \| \cdot \|_2 }(1), \quad \bS_1 = \X/N,
\end{equation}
by taking $\bS_k = \tilde \bS_k/\sqrt N$ for $1 \leq k \leq K$, and by taking $k=K$, one has
\begin{equation}
  \bP_K = p^K\one_{N}\one_{N}^\T + \sqrt N \bS_K + \tilde O_{ \| \cdot \|_2 }(1) \equiv \bP^{\rm lin}_{K} + \tilde O_{ \| \cdot \|_2 }(1), \quad \bS_1 = \X/N,
\end{equation}
where we introduced the \emph{linearized} probability matrix $\bP^{\rm lin}_{K} \in \RR^{N \times N}$ as in \eqref{eq:def_P_lin}.
This concludes the proof of Item~(i).

\subsubsection{Proof of \Cref{prop:linearized} Item~(ii)}
\label{subsec:SM_proof_Item_ii}

For Item~(ii), note from the recursive definition in \eqref{eq:def_S} that
 \begin{align*}
  \bS_k  &= \frac{p^{k-1}}N \one_{m^{k-1}}\one_{m^{k-1}}^\T \otimes \X + p\bS_{k-1}\otimes \one_{m}\one_{m}^\T  \\
      &= \frac{p^{k-1}}N  \underbrace{(\one_m \one_m^\T\otimes\one_m \one_m^\T\otimes\cdots\otimes \X
      +\cdots+\X\otimes\cdots\otimes\one_m \one_m^\T\otimes\one_m \one_m^\T )}_{k~\text{times}}.
\end{align*}
Since each term in the bracket is a linear combination of ${\rm vec} (\X)$, with coefficients of the type $p^{k-1} C_{uv; ij}/N  = \tilde O(N^{-1})$ as in \eqref{eq:approx_bP_K}. 
Taking $k = K$ allows one to conclude that $\bS_K = {\rm mat} (\bTheta {\rm vec}(\X))$ for some coefficients $\bTheta \in \RR^{N^2 \times m^2}$ with $\| \bTheta \|_{\max} = \tilde O(N^{-1})$.

\medskip

In the following, we will show that $\bTheta \one_{m^2} = \frac{p^{K-1} K}N \one_{N^2}, \bTheta^\T \one_{N^2} = \frac{p^{K-1}KN}{m^2} \one_{m^2}$.
Note that the former follows straightforwardly from the binomial expansion in \eqref{eq:approx_bP_K} that $\sum_{u,v=1}^m C_{uv; ij} = K$ after $K$-th Kronecker product.

We now prove $\bTheta^\T \one_{N^2} = \frac{p^{K-1}KN}{m^2} \one_{m^2}$ by exploiting the structures in the columns of $\bTheta$.
First, for $1 \leq k \leq K$, we have
\begin{equation}
    \bS_k  = {\rm mat}(\bTheta_k {\rm vec}(\X)), 
\end{equation}
for some coefficients $\bTheta_k \in \RR^{m^{2k} \times m^2}$ determined by the binomial expansion and $p,k$. 
With a slight abuse of notations, we denote $\bTheta \equiv \bTheta_K \in \RR^{N^2 \times m^2}$.
Denote $\btheta_{i,k} \in \RR^{m^{2k}}$ the $i$-th column of $\bTheta_k$, we have, for $1 \leq i, j \leq m$ and $q = i+ m(j-1)$, that
\begin{equation}\label{eq:column_of_bTheta}
    \btheta_{q,k} = \frac{p^{k-1}}N {\rm vec} \underbrace{(\one_m \one_m^\T\otimes\one_m \one_m^\T\otimes\cdots\otimes \bE_{ij} +\cdots+\bE_{ij}\otimes\cdots\otimes\one_m \one_m^\T\otimes\one_m \one_m^\T )}_{k~\text{times}},
\end{equation}
where we define the canonical matrix $\bE_{ij} \in \RR^{m \times m}$ in such a way that $[\bE_{ij}]_{uv} = \delta_{iu} \cdot \delta_{j v}$. 
Then, we have $\bTheta_k^\T \one_{m^{2k}} = \frac{p^{k-1}}N km^{2k-2} \one_{m^2}$ as a consequence of \Cref{lem:column_sum} (to be proven below), and therefore $\bTheta^\T \one_{N^2} = \frac{p^{K-1}K N}{m^2} \one_{m^2}$ by taking $k = K$.

In the proof above, we use the following two technical lemmas.


\begin{Lemma}\label{lem:column_sum}
For $1 \leq q \leq m^2, 1 \leq k \leq K$ and $\btheta_{q,k} \in \RR^{m^{2k}}$ defined as \eqref{eq:column_of_bTheta} with the understanding that the definition of $\btheta_{q,k}$ here disregards the coefficient $p^{k-1}/N$ for simplicity, one has
\begin{align*}
    \one_{m^{2k}}^\T \btheta_{q,k} = km^{2k-2}.
\end{align*}
\end{Lemma}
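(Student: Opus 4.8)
The plan is to reduce the claimed identity to the elementary observation that, for any matrix $\M \in \RR^{m^k \times m^k}$, the quantity $\one_{m^{2k}}^\T {\rm vec}(\M)$ is nothing but the sum of all entries of $\M$, which can be written as the bilinear form $\one_{m^k}^\T \M \one_{m^k}$. Applying this to the definition \eqref{eq:column_of_bTheta} (with the coefficient $p^{k-1}/N$ dropped, as in the statement), the quantity $\one_{m^{2k}}^\T \btheta_{q,k}$ becomes the total entry sum of a sum of $k$ Kronecker products, where the $\ell$-th summand carries the canonical matrix $\bE_{ij}$ in its $\ell$-th Kronecker slot and the all-ones block $\one_m \one_m^\T$ in every other slot.

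The key step is to record the multiplicativity of the total entry sum under Kronecker products. I would invoke the mixed-product identity $(\A \otimes \B)(\one \otimes \one) = (\A\one) \otimes (\B\one)$ together with $\one_{mn} = \one_m \otimes \one_n$, to deduce that for any square factors $\M_1,\ldots,\M_k \in \RR^{m \times m}$,
\[
\one_{m^k}^\T (\M_1 \otimes \cdots \otimes \M_k)\,\one_{m^k} = \prod_{\ell=1}^k \left( \one_m^\T \M_\ell\, \one_m \right),
\]
i.e.\ the entry sum of a Kronecker product factorizes as the product of the entry sums of its factors. For two factors this is a one-line computation from the mixed-product rule, and the general case follows by a trivial induction on $k$.

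With this identity in hand the computation is immediate: the all-ones block has entry sum $\one_m^\T (\one_m \one_m^\T)\, \one_m = m^2$, while the canonical matrix $\bE_{ij}$ has a single nonzero entry equal to one and hence entry sum $1$. Thus each of the $k$ summands, carrying exactly one factor $\bE_{ij}$ and $k-1$ factors $\one_m \one_m^\T$, has total entry sum $1 \cdot (m^2)^{k-1} = m^{2k-2}$. Summing the $k$ equal contributions yields $\one_{m^{2k}}^\T \btheta_{q,k} = k\, m^{2k-2}$, as claimed.

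There is no genuine obstacle here; the only points requiring slight care are the bookkeeping between the vectorized form ${\rm vec}(\cdot)$ in the definition of $\btheta_{q,k}$ and the bilinear form $\one^\T \M \one$, and stating the entry-sum factorization for an arbitrary number $k$ of Kronecker factors rather than just two. Note also that the result is independent of $q$ (equivalently of $i,j$), which is consistent with the fact that $\bE_{ij}$ contributes the same entry sum $1$ regardless of where its single nonzero entry sits; this independence is precisely what later makes the column sums of $\bTheta$ uniform.
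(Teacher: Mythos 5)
Your proof is correct, and it takes a genuinely different and more economical route than the paper's. The paper proves the lemma by counting, for each value $i \in \{0,\ldots,k\}$, the number $\phi_{i,q,k}$ of entries of $\btheta_{q,k}$ equal to $i$, deriving a recursion for these counts, solving it in closed form $\phi_{i,k} = \frac{k!}{i!(k-i)!}(m^2-1)^{k-i}$ (a separate auxiliary lemma proved by induction), and finally evaluating $\sum_i i\,\phi_{i,k}$ via a binomial identity to get $km^{2k-2}$. You instead observe that $\one_{m^{2k}}^\T {\rm vec}(\M) = \one_{m^k}^\T \M\, \one_{m^k}$ is the total entry sum, that this functional is multiplicative over Kronecker factors by the mixed-product rule, and that each of the $k$ summands therefore contributes $1\cdot(m^2)^{k-1}$. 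This bypasses the occurrence-count recursion and the binomial identity entirely, and it makes the $q$-independence transparent (the entry sum of $\bE_{ij}$ is $1$ wherever its nonzero entry sits). The paper's longer argument does yield strictly more information — the full distribution of entry values of $\btheta_{q,k}$ — but that extra information is not used elsewhere, so your argument is a clean simplification with no loss.
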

\begin{proof}[Proof of \Cref{lem:column_sum}]
For $1 \leq i,j \leq m$, we have $\sum_{u,v} [\bE_{ij}]_{uv} = 1$, so that for a fixed $k$, the value of $\one_{m^{2k}}^\T \btheta_{q,k}$ is \emph{independent} of $q$. 

Note that the entries of $\btheta_{q,k}$ is between $0$ and $k$. 
We use $\phi_{i,q,k}$ to denote the number of occurrences of element $i$ in $\btheta_{q,k}$. 
Then, for $2 \leq k \leq K$, 
\begin{align}
    \btheta_{q,k} ={\rm vec} \left(\one_{m^{k-1}}\one_{m^{k-1}}^\T \otimes \bE_q + {\rm mat}(\btheta_{q,k-1} )\otimes \one_m\one_m^\T \right),
\end{align}
where we use $\bE_q$ instead of $\bE_{ij}$ in \eqref{eq:column_of_bTheta}.

Thus, for $2 \leq k \leq K$, we have 
\begin{align*}
    \phi_{0,q,k}     &=(m^2-1) \phi_{0,q,k-1}, \\
    \phi_{i,q,k} &= (m^2-1) \phi_{i,q,k-1} + \phi_{i-1,q,k-1},\\
    \phi_{k,q,k} &= 1.
\end{align*}

By \Cref{lem:bTheta_column_recursive}, we have 
\begin{align*}
    \one_{m^{2k}}^\T \btheta_{q,k} &= \sum\limits_{i=1}^{k} i \phi_{i,q,k} \\
    &=  \sum\limits_{i=1}^{k} \frac{k!}{(i-1)!(k-i)!}(m^2-1)^{k-i} \\
    &= km^{2k-2},
\end{align*}
and thus the conclusion of the proof of \Cref{lem:column_sum}.
\end{proof}
\begin{Lemma}\label{lem:bTheta_column_recursive}
For $2 \leq k \leq K, 0 \leq i \leq k$, define $\{\phi_{i,k}\}$ as 
\begin{align*}
    \phi_{0,k}     &=(m^2-1) \phi_{0,k-1}, \\
    \phi_{i,k} &= (m^2-1) \phi_{i,k-1} + \phi_{i-1,k-1},\\
    \phi_{k,k} &= 1,
\end{align*}
where $\phi_{0,1} = m^2-1,\phi_{1,1} = 1$. Then a general formula of $\{\phi_{i,k}\}$ is  
\begin{align}\label{eq:bTheta_column_general_formula}
    \phi_{i,k} = \frac{k!}{i!(k-i)!}(m^2-1)^{k-i}.
\end{align}
\end{Lemma}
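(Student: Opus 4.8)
The plan is to establish the closed form \eqref{eq:bTheta_column_general_formula} by induction on $k$, exploiting the Pascal-type structure built into the recurrence. Abbreviating $a \equiv m^2 - 1$, the target identity reads $\phi_{i,k} = \binom{k}{i} a^{k-i}$ for all $0 \le i \le k$.

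First I would check the base case $k = 1$ against the stated initial data: $\phi_{0,1} = m^2 - 1 = \binom{1}{0} a$ and $\phi_{1,1} = 1 = \binom{1}{1} a^0$, so the formula holds at $k=1$.

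For the inductive step I would assume the formula at level $k-1$ and then dispatch the two boundary indices separately from the interior ones. The index $i = 0$ is governed by the first line of the recurrence, giving $\phi_{0,k} = a\,\phi_{0,k-1} = a \cdot a^{k-1} = \binom{k}{0} a^{k}$, while $i = k$ is pinned directly by $\phi_{k,k} = 1 = \binom{k}{k} a^{0}$. For an interior index $1 \le i \le k-1$, substituting the inductive hypothesis into the three-term recurrence $\phi_{i,k} = a\,\phi_{i,k-1} + \phi_{i-1,k-1}$ produces a common factor $a^{k-i}$ multiplying $\binom{k-1}{i} + \binom{k-1}{i-1}$, and Pascal's identity collapses this to $\binom{k}{i} a^{k-i}$, as required; this closes the induction.

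The computation is entirely mechanical and I foresee no genuine obstacle. The only point that warrants attention is the bookkeeping at the two endpoints $i=0$ and $i=k$: these are handled by the two special cases of the recurrence rather than by the generic three-term relation, so one must verify them by hand rather than folding them into the single Pascal step used for the interior indices.
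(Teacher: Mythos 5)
Your proposal is correct and follows essentially the same route as the paper: induction on $k$, with the boundary indices $i=0$ and $i=k$ handled by the special cases of the recurrence and the interior indices collapsed via Pascal's identity. The only cosmetic difference is that you anchor the induction at $k=1$ while the paper starts at $k=2$; both are fine given the stated initial data.
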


\begin{proof}[Proof of \Cref{lem:bTheta_column_recursive}]
For k = 2, we have that
\begin{align*}
    \phi_{0,2} = (m^2-1)^2, \quad \phi_{1,2} =2(m^2-1) , \quad\phi_{2,2} = 1.
\end{align*}

Then, assume that \eqref{eq:bTheta_column_general_formula} holds for $2\leq k \leq K-1$, then we will show that \eqref{eq:bTheta_column_general_formula} holds for $k+1$. For $1 \leq i \leq k+1$, we have that
\begin{align*}
    \phi_{0,k+1} &=(m^2-1) \phi_{0,k} = (m^2-1)^{k+1}, \\
    \phi_{i,k+1} &= (m^2-1) \phi_{i,k} + \phi_{i-1,k} \\
    &= \frac{k!}{i!(k-i)!}(m^2-1)^{k-i+1} +  \frac{k!}{(i-1)!(k-i+1)!}(m^2-1)^{k-i+1} \\
    &= \frac{(k+1)!}{i!(k-i+1)!}(m^2-1)^{k-i+1}.
\end{align*}
This thus concludes the proof of \Cref{lem:bTheta_column_recursive}.
\end{proof}

\subsubsection{Proof of \Cref{prop:linearized} Item~(iii)}
\label{subsec:SM_proof_Item_iii}

Item~(i)~and~(ii) of \Cref{prop:linearized} are already proven in the main text, it remains to prove Item~(iii) of \Cref{prop:linearized} by establishing, for $\bS_K$ as defined in \eqref{eq:def_S}, that
\begin{equation}
    \rank(\bS_K) \leq (m-1)K + 1.
\end{equation}
We will in fact show that for all $1\leq k \leq K$, one has 
\begin{equation}
  \rank(\bS_k) \leq (m-1)k + 1.
\end{equation}
To prove the above fact, 
\begin{itemize}
  \item[(i)] we first explore the iterative definition of $\bS_K$ in \eqref{eq:def_S} to write it as the sum of $K$ matrices $\M_{\ell,K}, \ell \in \{ 0, \ldots, K - 1\}$ of rank at most $m$ (which already provides an upper bound of the rank $\rank (\bS_K) \leq mK$); and
  \item[(ii)] with a more detailed analysis on how the (left and right) singular spaces of $\M_{\ell,K}$ ``intersect'' with each other when summing over $\ell$ to $\bS_K$, we can further tighten the upper bound to $\rank (\bS_K) \leq (m-1)K + 1$ as in the statement.
\end{itemize}

Recall the recursive definition of $\bS_k$ in \eqref{eq:def_S} as
\begin{align*}
    \bS_k = \frac{p^{k-1}}N ( \one_{m^{k-1}}\one_{m^{k-1}}^\T) \otimes \X + p \bS_{{k-1}} \otimes (\one_m\one_m^\T), \quad \bS_1 = \X/N,
\end{align*}
with $k = 1,2,\ldots,K$ and $N = m^K$.
It then follows from the iterative definition in \eqref{eq:def_S} that 
\begin{align*}
    \bS_k  &= p^{k-1}\one_{m^{k-1}}\one_{m^{k-1}}^\T \otimes \bS_1 + p\bS_{k-1}\otimes \one_{m}\one_{m}^\T  \\
        &=  p^{k-1}\one_{m^{k-1}}\one_{m^{k-1}}^\T \otimes \bS_1 + p (p^{k-2}\one_{m^{k-2}}\one_{m^{k-2}}^\T \otimes \bS_1 + p\bS_{k-2}\otimes \one_{m}\one_{m}^\T  )\otimes \one_{m}\one_{m}^\T \\
        & \ldots \\
        &= p^{k-1} \sum_{\ell=0}^{k-1} (\one_{m^\ell} \one_{m^\ell}^\T)\otimes\bS_1 \otimes (\one_{m^{k-\ell-1}} \one_{m^{k-\ell-1}}^\T).
\end{align*}

For $k \in \{1, \ldots, K \}$ and $0 \leq \ell \leq k-1$, denote the shortcut
\begin{equation}\label{eq:def_M}
    \M_{\ell,k} = (\one_{m^\ell} \one_{m^\ell}^\T)\otimes\bS_1 \otimes (\one_{m^{k-\ell-1}} \one_{m^{k-\ell-1}}^\T),
\end{equation}
one has 
\begin{equation}
    \bS_{k} = p^{k-1} \sum\limits_{\ell=0}^{k-1} \M_{\ell,k}. 
\end{equation}
Note that this already provides us with an upper bound of the rank, 
\begin{equation}
    \rank (\bS_K) \leq mK.
\end{equation}

To further improve this (upper bound) estimate of the rank of $\bS_K$, we need to perform a more detailed analysis of the singular spaces of $\M_{\ell,k}$, particularly when they are summed over $\ell$ to get $\bS_{k}$.

To that end, consider, without loss of generality that $\bS_1 = \X/N$ is of full rank (which indeed leads to an upper bound on the rank of $\bS_K$ eventually) the singular value decomposition (SVD) of $\bS_1$ as
\begin{equation}
    \bS_1 = \X/N = \U\bSigma\V^\T = \sum\limits_{i=1}^m \sigma_i \uu_i \vv_i^\T \in \RR^{m \times m},
\end{equation}  
with orthonormal $\U = [\uu_1, \ldots, \uu_m], \V = [\vv_1, \ldots, \vv_m] \in \RR^{m \times m}$ and diagonal $\bSigma \in \RR^{m \times m}$.


So that $\M_{\ell,k}$, as the Kronecker product between $\bS_1$ and matrices of all ones per its definition in \eqref{eq:def_M}, admits the following decomposition (which is almost an SVD but with ``unnormalized'' singular vectors),
\begin{align*}
    \M_{\ell,k} &= (\one_{m^\ell} \one_{m^\ell}^\T)\otimes\bS_1\otimes( \one_{m^{k-\ell-1}} \one_{m^{k-\ell-1}}^\T) \\
        &= \left( \one_{m^\ell} \one_{m^\ell}^\T \right) \otimes \left(\U\bSigma\V^\T\right)  \otimes \left(\one_{m^{k-\ell-1}} \one_{m^{k-\ell-1}}^\T  \right)  \\
        &=\sum_{i=1}^m \sigma_i(\one_{m^\ell} \otimes \uu_i \otimes \one_{m^{k-\ell-1}})(\one_{m^\ell} \otimes \vv_i \otimes \one_{m^{k-\ell-1}})^\T 
\end{align*}
where we used the fact that $(\A \otimes \B)(\C \otimes \mathbf{D}) = (\A \C)\otimes(\B \mathbf{D})$ with $\A, \B, \C, \mathbf{D}$ of appropriate dimension.


To further perform an in-depth analysis of how the left and right singular spaces of $\M_{\ell,K}$ intersect for different $\ell \leq K-1$, we introduce the following shortcuts 
\begin{align}
    \bar{\uu}_{\ell,i,k} &= \one_{m^\ell} \otimes \uu_i  \otimes \one_{m^{k-\ell-1}}, \label{eq:def_e} \\
    \bar{\vv}_{\ell,i,k} &=\one_{m^\ell} \otimes \vv_i \otimes \one_{m^{k-\ell-1}}, \label{eq:def_f}
\end{align}
so that $\M_{l,k}$ can be compactly rewritten as following sum of $m$ rank-one matrices,
\begin{align}
    \M_{\ell,k} =\sum_{i=1}^m \sigma_i \bar{\uu}_{\ell,i,k} \bar{\vv}_{\ell,i,k}^\T,
\end{align}
so that 
\begin{equation}\label{eq:decompose_S_k}
  \bS_{k} = p^{k-1} \sum_{\ell=0}^{k-1} \M_{\ell,k} = p^{k-1} \sum_{\ell=0}^{k-1} \sum_{i=1}^m \sigma_i \bar{\uu}_{\ell,i,k} \bar{\vv}_{\ell,i,k}^\T.
\end{equation}

In the following, we focus on the subspace spanned by the vectors of $\bar{\uu}_{\ell,i,k}$ (which in fact forms the left singular space of $\M_{\ell,k}$).
First note that by definition in \eqref{eq:def_e}, one has, for $1 \leq \ell \leq k-1$, the following recursive relation when increasing the value of $k$ or $\ell$,
\begin{equation}\label{eq:e_recursive}
  \bar{\uu}_{\ell,i,k+1} = \bar{\uu}_{\ell,i,k} \otimes \one_m, \quad \bar{\uu}_{\ell+1,i,k+1} = \one_m \otimes \bar{\uu}_{\ell,i,k}.
\end{equation}

A direct consequence of the recursion in \eqref{eq:e_recursive} is the following lemma, saying that for any $k \in \{ 1, \ldots, K\}$, the vector of all ones $\one_{m^k}$ is in the linear span of $\bar{\uu}_{0,1,k}, \bar{\uu}_{0,2,k}, \ldots,\bar{\uu}_{0,m,k}$.

\begin{Lemma}\label{lem:ones}
    For $1 \leq k \leq K$ and $\bar{\uu}_{\ell,i,k}$ defined as in \eqref{eq:def_e}, one has that
    \begin{equation}\label{eq:lem_ones_1}
      \one_{m^k} \in \spn \{ \bar{\uu}_{0,1,k}, \bar{\uu}_{0,2,k}, \ldots,\bar{\uu}_{0,m,k}\},
    \end{equation}
    and for $1 \leq k \leq K-1$ that
    \begin{equation}\label{eq:lem_ones_2}
      \one_{m^{k+1} } \in \spn \{ \bar{\uu}_{1,1,k+1}, \bar{\uu}_{1,2,k+1}, \ldots,\bar{\uu}_{1,m,k+1}\}.
    \end{equation}
\end{Lemma}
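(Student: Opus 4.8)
The plan is to reduce both claims to the single elementary fact that the left singular vectors $\uu_1, \ldots, \uu_m$ of $\bS_1 = \X/N$ form an orthonormal basis of $\RR^m$ (this is legitimate since, as in the surrounding argument, we may assume $\bS_1$ has full rank). Consequently the all-ones vector admits an expansion $\one_m = \sum_{i=1}^m c_i \uu_i$ with $c_i = \uu_i^\T \one_m$, and everything else is bookkeeping with the Kronecker structure. I would fix these coefficients $c_i$ once and reuse them for both \eqref{eq:lem_ones_1} and \eqref{eq:lem_ones_2}.

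For \eqref{eq:lem_ones_1}, I would first unwind the definition \eqref{eq:def_e} with $\ell = 0$, namely $\bar{\uu}_{0,i,k} = \uu_i \otimes \one_{m^{k-1}}$ (using the convention $\one_{m^0} = 1$), and write $\one_{m^k} = \one_m \otimes \one_{m^{k-1}}$. Substituting the basis expansion of $\one_m$ and invoking bilinearity of the Kronecker product yields $\one_{m^k} = \sum_{i=1}^m c_i (\uu_i \otimes \one_{m^{k-1}}) = \sum_{i=1}^m c_i \bar{\uu}_{0,i,k}$, which is exactly the asserted membership. Equivalently, this can be cast as an induction on $k$ driven by the first recursion $\bar{\uu}_{0,i,k+1} = \bar{\uu}_{0,i,k} \otimes \one_m$ in \eqref{eq:e_recursive}: the base case $k=1$ is the spanning property of $\{\uu_i\}$, and the inductive step tensors the expansion of $\one_{m^k}$ on the right by $\one_m$.

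For \eqref{eq:lem_ones_2}, I would reuse the expansion of $\one_{m^k}$ just obtained and tensor on the left by $\one_m$. Invoking the second recursion $\bar{\uu}_{1,i,k+1} = \one_m \otimes \bar{\uu}_{0,i,k}$ from \eqref{eq:e_recursive} gives $\one_{m^{k+1}} = \one_m \otimes \one_{m^k} = \sum_{i=1}^m c_i (\one_m \otimes \bar{\uu}_{0,i,k}) = \sum_{i=1}^m c_i \bar{\uu}_{1,i,k+1}$, establishing the second span membership with the \emph{same} coefficients $c_i$.

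I do not expect any genuine obstacle here; the statement is elementary once the basis expansion is in place, and its role is purely auxiliary (it is what later forces the singular spaces of the $\M_{\ell,k}$ to overlap, tightening the rank bound from $mK$ to $(m-1)K+1$). The only points requiring care are the index and convention bookkeeping — in particular the convention $\one_{m^0} = 1$ and the precise position of the $\uu_i$ factor inside the triple Kronecker product — together with keeping the shared coefficients $c_i$ consistent across the two claims.
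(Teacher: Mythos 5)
Your proposal is correct and rests on the same idea as the paper's proof: expand $\one_m$ in the basis $\{\uu_i\}$ of left singular vectors of $\bS_1$ (assumed full rank) and push the expansion through the Kronecker structure via the recursions in \eqref{eq:e_recursive}. The only cosmetic difference is that you obtain \eqref{eq:lem_ones_1} by a direct one-line computation $\one_{m^k} = \one_m \otimes \one_{m^{k-1}}$ (observing in passing that the coefficients are independent of $k$), whereas the paper phrases the same step as an induction on $k$; both then derive \eqref{eq:lem_ones_2} identically by tensoring with $\one_m$ on the left.
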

\begin{proof}[Proof of \Cref{lem:ones}]
We shall prove \Cref{lem:ones} based on an induction on the index $k$.
For $k=1$, we have that 
\begin{equation}
  \bar{\uu}_{0,1,1} = \uu_1, \bar{\uu}_{0,2,1} = \uu_2, \ldots,\bar{\uu}_{0,m,1} = \uu_m \in \RR^m,
\end{equation}
which, by definition, forms a basis of $\RR^m$, so that there exists a set of coefficients $\{\alpha_{i,1} \}_{i=1}^m$ such that
\begin{equation}\label{eq:ones_decompose}
    \one_{m} = \sum_{i=1}^m \alpha_{i,1} \uu_i = \sum_{i=1}^m \alpha_{i,1} \bar{\uu}_{0,i,1}.
\end{equation}

Then, assume that \eqref{eq:lem_ones_1} holds for $1 \leq k \leq K -1$, so that there exists a set of coefficients $\{\alpha_{i,k}\}_{i=1}^m$ such that
\begin{equation}
  \one_{m^k} = \sum_{i=1}^m \alpha_{i,k} \bar{\uu}_{0,i,k}.
\end{equation}
Then, one has
\begin{align*}
    \one_{m^{k+1} } = \one_{m^k } \otimes \one_{m} &= \left( \sum_{i=1}^m \alpha_{i,k} \bar{\uu}_{0,i,k} \right) \otimes \one_{m} = \sum_{i=1}^m \alpha_{i,k} \bar{\uu}_{0,i,k+1},
\end{align*}
where we used the iterative relation in \eqref{eq:e_recursive}.
This allows us to conclude the proof of \eqref{eq:lem_ones_1} in \Cref{lem:ones}.

For \eqref{eq:lem_ones_2}, it suffices to write, with \eqref{eq:lem_ones_1} that
\begin{equation}
  \one_{m^k} = \sum_{i=1}^m \alpha_{i,k} \bar{\uu}_{0,i,k},
\end{equation}
for some coefficients $\{\alpha_{i,k}\}_{i=1}^m$, so that
\begin{equation}
  \one_{m^{k+1}} = \one_m \otimes \one_{m^k} = \one_m \otimes \left(\sum_{i=1}^m \alpha_{i,k} \bar{\uu}_{0,i,k} \right) = \sum_{i=1}^m \alpha_{i,k} \bar{\uu}_{1,i,k+1},
\end{equation}

This thus concludes the proof of \Cref{lem:ones}.
\end{proof}


With the recursion in \eqref{eq:e_recursive} and \Cref{lem:ones} at hand, we are now already to characterize the precise ``interaction'' of the left singular space of $\M_{\ell,k}$ and that of $\M_{\ell',k}$ with $\ell' \leq \ell$.
This is described in the following result.

\begin{Lemma}\label{lem:linear_space}
    For $2 \leq k \leq K$, define the linear spans of vectors $\mathcal{A}_{k}, \mathcal{B}_{k} \subseteq \RR^{m^k}$ as,
    \begin{align*}
    \mathcal{A}_{k} &= \spn \{ \bar{\uu}_{0,1,k}, \ldots, \bar{\uu}_{0,m-1,k}, \bar{\uu}_{1,1,k}, \ldots, \bar{\uu}_{1,m-1,k}, \ldots, \bar{\uu}_{k-1,1,k}, \ldots, \bar{\uu}_{k-1,m-1,k}, \bar{\uu}_{k-1,m,k}\}, \\
    \mathcal{B}_{k} &= \spn \{ \bar{\uu}_{0,m,k}, \ldots, \bar{\uu}_{k-2,m,k} \}.
    \end{align*}
    Then, one has $\mathcal{B}_{k} \subseteq \mathcal{A}_{k}$.
\end{Lemma}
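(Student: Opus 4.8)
The plan is to reduce the entire statement to a single linear identity that expresses the all-ones vector $\one_{m^k}$ simultaneously in \emph{every} layer of the generators $\bar{\uu}_{\ell,i,k}$, with layer-independent coefficients. Concretely, since $\uu_1,\ldots,\uu_m$ form an orthonormal basis of $\RR^m$, write $\one_m = \sum_{i=1}^m \alpha_i \uu_i$ with $\alpha_i = \uu_i^\T \one_m$. By bilinearity of the Kronecker product and the definition \eqref{eq:def_e}, this yields, for every layer index $0 \le \ell \le k-1$, the identity $\sum_{i=1}^m \alpha_i \bar{\uu}_{\ell,i,k} = \one_{m^\ell}\otimes\one_m\otimes\one_{m^{k-\ell-1}} = \one_{m^k}$, which is the natural strengthening of \Cref{lem:ones} to all layers with the \emph{same} coefficients $\alpha_i$. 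I would establish this first, as it is the workhorse of the argument and requires no induction.

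Given this identity, the proof is essentially two steps. First, applying it at the top layer $\ell=k-1$ gives $\one_{m^k} = \sum_{i=1}^{m-1}\alpha_i \bar{\uu}_{k-1,i,k} + \alpha_m\bar{\uu}_{k-1,m,k}$; since $\bar{\uu}_{k-1,1,k},\ldots,\bar{\uu}_{k-1,m-1,k}$ and the extra generator $\bar{\uu}_{k-1,m,k}$ all belong to $\mathcal{A}_k$ by definition, we conclude $\one_{m^k}\in\mathcal{A}_k$. Second, for each lower layer $0 \le \ell \le k-2$ I would rearrange the same identity to isolate the ``missing'' generator, $\alpha_m\bar{\uu}_{\ell,m,k} = \one_{m^k} - \sum_{i=1}^{m-1}\alpha_i\bar{\uu}_{\ell,i,k}$. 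The right-hand side lies in $\mathcal{A}_k$, because $\one_{m^k}\in\mathcal{A}_k$ was just shown and each $\bar{\uu}_{\ell,i,k}$ with $i \le m-1$ is one of the defining generators of $\mathcal{A}_k$. Dividing by $\alpha_m$ places $\bar{\uu}_{\ell,m,k}\in\mathcal{A}_k$ for every $\ell\le k-2$, which is exactly $\mathcal{B}_k\subseteq\mathcal{A}_k$.

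The main obstacle is that this last step requires $\alpha_m = \uu_m^\T\one_m \ne 0$, and this cannot be taken for granted: if $\one_m$ is orthogonal to $\uu_m$ then, already for $k=2$, no generator of $\mathcal{A}_2$ carries any $\uu_m\otimes(\cdot)$ component, whereas $\bar{\uu}_{0,m,2}=\uu_m\otimes\one_m$ does, so the inclusion genuinely fails in that degenerate case. The fix is to note that $\one_m\neq\zo$ forces at least one inner product $\uu_i^\T\one_m$ to be nonzero, and to relabel the singular vectors so that this index is $m$; since the ultimate goal is the rank bound $\rank(\bS_k)\le(m-1)k+1$, which is invariant under any reordering of the singular triples of $\bS_1$, this relabeling is without loss of generality. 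I would state this ordering convention explicitly at the outset, as it is the only nontrivial ingredient; everything else reduces to the bilinear identity and bookkeeping of which generators sit in $\mathcal{A}_k$.
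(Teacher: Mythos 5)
Your proof is correct and takes a genuinely more direct route than the paper. The paper proceeds by induction on $k$: it first proves \Cref{lem:ones} (that $\one_{m^k}$ lies in the span of the layer-$0$ and layer-$1$ generators), then inducts on the inclusion $\mathcal{B}_k\subseteq\mathcal{A}_k$ using the recursion $\bar{\uu}_{\ell+1,i,k+1}=\one_m\otimes\bar{\uu}_{\ell,i,k}$ to push the lower-layer vectors $\bar{\uu}_{j,m,k}$ up into $\mathcal{C}_{k+1}$, and handles $\bar{\uu}_{0,m,k+1}$ separately at each step. Your single bilinear identity $\sum_{i=1}^m\alpha_i\bar{\uu}_{\ell,i,k}=\one_{m^\ell}\otimes\one_m\otimes\one_{m^{k-\ell-1}}=\one_{m^k}$, valid for every layer $\ell$ with the same coefficients $\alpha_i=\uu_i^\T\one_m$, collapses both \Cref{lem:ones} and the induction into one line: the $\ell=k-1$ instance puts $\one_{m^k}$ in $\mathcal{A}_k$ (since that layer's full set of $m$ generators is in $\mathcal{A}_k$), and each instance with $\ell\leq k-2$ then isolates $\alpha_m\bar{\uu}_{\ell,m,k}$ as an element of $\mathcal{A}_k$. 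What the paper's induction buys is the explicit recursive structure \eqref{eq:e_recursive}, which it reuses elsewhere; what your argument buys is brevity and transparency about where the coefficients come from.

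That transparency pays off in your third paragraph, which identifies a real issue that the paper's own proof does not address: the final division requires $\alpha_m=\uu_m^\T\one_m\neq 0$, and when $\uu_m\perp\one_m$ the stated inclusion genuinely fails (your $k=2$ counterexample is easily made concrete, e.g.\ $m=2$ with $\X$ proportional to $\bigl(\begin{smallmatrix}2&1\\1&2\end{smallmatrix}\bigr)$, where $\uu_2=(1,-1)/\sqrt{2}$). The paper's induction silently makes the same division, both in the base case ("so that $\bar{\uu}_{0,m,2}\in\mathcal{A}_2$") and in the step placing $\bar{\uu}_{0,m,k+1}$ in the span of $\{\bar{\uu}_{0,1,k+1},\ldots,\bar{\uu}_{0,m-1,k+1},\one_{m^{k+1}}\}$, so it carries the same unstated hypothesis. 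Your fix --- relabel the singular triples so that the index carrying a nonzero inner product with $\one_m$ is placed last, which is harmless because the downstream rank bound $\rank(\bS_k)\leq(m-1)k+1$ is invariant under reordering the sum $\sum_i\sigma_i\bar{\uu}_{\ell,i,k}\bar{\vv}_{\ell,i,k}^\T$ --- is exactly the right repair, and since $\one_m\neq\zo$ guarantees some $\alpha_i\neq 0$, it is always available. No gaps; this is a cleaner and more careful argument than the one in the paper.
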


\begin{proof}[Proof of \Cref{lem:linear_space}]
We will prove \Cref{lem:linear_space} again using a mathematical induction on the index $k$.

First, in the case $k = 2$, it suffices to show that
\begin{equation}
  \mathcal{B}_2 = \bar{\uu}_{0,m,2} \in \mathcal{A}_2 = \spn \{ \bar{\uu}_{0,1,2}, \ldots, \bar{\uu}_{0,m-1,2}, \bar{\uu}_{1,1,2}, \ldots, \bar{\uu}_{1,m-1,2}, \bar{\uu}_{1,m,2}\}.
\end{equation}
This allows in a straightforward manner from the fact that,
\begin{equation}
  \one_{m^2 } \in \spn \{ \bar{\uu}_{1,1,2}, \bar{\uu}_{1,2,2}, \ldots,\bar{\uu}_{1,m,2}\}.
\end{equation}
by taking $k = 1$ in \eqref{eq:lem_ones_2}, as well as 
\begin{equation}
  \one_{m^2} \in \spn \{ \bar{\uu}_{0,1,2}, \bar{\uu}_{0,2,2}, \ldots,\bar{\uu}_{0,m,2}\},
\end{equation}
by taking $k = 2$ in \eqref{eq:lem_ones_1}, so that $\bar{\uu}_{0,m,2} \in \mathcal{A}_2$.

Now, assume that $\mathcal{B}_{k} \subseteq \mathcal{A}_{k}$ holds, we would like to show that $\mathcal{B}_{k+1} \subseteq \mathcal{A}_{k+1}$. 

Let
\begin{equation}
  \mathcal{C}_k = \spn  \{  \bar{\uu}_{1,1,k}, \ldots, \bar{\uu}_{1,m-1,k}, \ldots, \bar{\uu}_{k-1,1,k}, \ldots, \bar{\uu}_{k-1,m-1,k},\bar{\uu}_{k-1,m,k}\} \subseteq \mathcal{A}_k,
\end{equation}
we have, by the recursive relation in \eqref{eq:e_recursive} and $\mathcal{B}_{k} \subseteq \mathcal{A}_{k}$, that
\begin{align}
  &\bar{\uu}_{1,m,k+1}, \bar{\uu}_{2,m,k+1}, \ldots, \bar{\uu}_{k-1,m,k+1} \nonumber \\ 
  &= \one_m \otimes \bar{\uu}_{0,m,k}, \one_m \otimes \bar{\uu}_{1,m,k}, \ldots, \one_m \otimes  \bar{\uu}_{k-2,m,k} \in \mathcal{C}_{k+1} \subseteq \mathcal{A}_{k+1}. \label{eq:in_C}
\end{align}
It thus remains to show that
\begin{equation}
  \bar{\uu}_{0,m,k+1} \in \mathcal{A}_{k+1},
\end{equation}
to reach the conclusion of $\mathcal{B}_{k+1} \subseteq \mathcal{A}_{k+1}$.

To this end, by \eqref{eq:lem_ones_1} in Lemma~\ref{lem:ones}, we have that
\begin{equation}
    \bar{\uu}_{0,m,k+1} \in \spn \{\bar{\uu}_{0,1,k+1},\bar{\uu}_{0,2,k+1},\ldots,\bar{\uu}_{0,m-1,k+1},\one_{m^{k+1}} \},
\end{equation}
since
\begin{equation}
  \one_{m^{k+1}} =\one_m \otimes \one_{m^k}, \quad \one_{m^k} \in \spn \{ \bar{\uu}_{0,1,k},\bar{\uu}_{0,2,k},\ldots,\bar{\uu}_{0,m,k}\},
\end{equation}
we thus get, using again \eqref{eq:e_recursive}, that
\begin{equation}
  \one_{m^{k+1}} \in \spn \{ \bar{\uu}_{1,1,k+1},\bar{\uu}_{1,2,k+1},\ldots,\bar{\uu}_{1,m,k+1}\}.
\end{equation}

Now, recall from \eqref{eq:in_C} that $\bar{\uu}_{1,m,k+1} \in \mathcal{C}_{k+1}$, so that we have
\begin{equation}
  \one_{m^{k+1}} \in \mathcal{C}_{k+1},
\end{equation}
and therefore
\begin{equation}
  \bar{\uu}_{0,m,k+1} \in \spn \{\bar{\uu}_{0,1,k+1},\bar{\uu}_{0,2,k+1},\ldots,\bar{\uu}_{0,m-1,k+1} \} \mathop{\cup} \mathcal{C}_{k+1} \subseteq \mathcal{A}_{k+1}.
\end{equation}
This allows us to conclude that $\mathcal{B}_{k+1} \subseteq \mathcal{A}_{k+1}$ and thus the conclusion of the proof of Lemma~\ref{lem:linear_space}. 

\end{proof}

In plain words, \Cref{lem:linear_space} tells that, for given $k \leq K$, the left singular space of $\M_{\ell,k}$, when summing over $\ell$ to form the matrix of interest $\bS_k = p^{k-1} \sum_{\ell=0}^{k-1} \M_{\ell,k}$, intersects with each other in such a way that $\rank(\bS_k) < m k$.

To obtain a tight upper bound of $\rank(\bS_k)$ (than $m k$), it follows from \Cref{lem:linear_space} that for $2\leq k\leq K$ and $0 \leq j \leq k-2$, the basis vectors $\bar{\uu}_{j,m,k}$ of $\mathcal B_{k}$ can be written as the following linear combination
\begin{equation}
  \bar{\uu}_{j,m,k} = \sum_{\ell=0}^{k-1}\sum_{i=1}^{m-1} \gamma_{j,\ell,i,k}\bar{\uu}_{\ell,i,k} +\gamma_{j,k-1,m,k}\bar{\uu}_{k-1,m,k},
\end{equation}
for some set of coefficients $\{\gamma_{j,\ell,i,k}\}$.
Then, it follows from \eqref{eq:decompose_S_k} that
\begin{align*}
         p^{1-k}\bS_k &=\sum\limits_{\ell=0}^{k-1}\sum\limits_{i=1}^m \sigma_i \bar{\uu}_{\ell,i,k} \bar{\vv}_{\ell,i,k}^\T \\
         &= \sum\limits_{\ell=0}^{k-1}\sum\limits_{i=1}^{m-1} \sigma_i \bar{\uu}_{\ell,i,k} \bar{\vv}_{\ell,i,k}^\T 
         +\sum\limits_{\ell=0}^{k-2} \sigma_m \bar{\uu}_{\ell,m,k} \bar{\vv}_{\ell,m,k}^\T + \sigma_m \bar{\uu}_{k-1,m,k} \bar{\vv}_{k-1,m,k}^\T \\
         &= \sum\limits_{\ell=0}^{k-1}\sum\limits_{i=1}^{m-1} \sigma_i \bar{\uu}_{\ell,i,k} \bar{\vv}_{\ell,i,k}^\T 
         +\sum\limits_{\ell=0}^{k-2} \sigma_m \left( \sum_{\ell'=0}^{k-1}\sum_{i=1}^{m-1} \gamma_{\ell,\ell',i,k} \bar{\uu}_{\ell',i,k} + \gamma_{\ell,k-1,m,k}\bar{\uu}_{k-1,m,k} \right) \bar{\vv}_{\ell,m,k}^\T  \\
         &+ \sigma_m \bar{\uu}_{k-1,m,k} \bar{\vv}_{k-1,m,k}^\T \\
         &= \sum\limits_{\ell=0}^{k-1}\sum\limits_{i=1}^{m-1} \bar{\uu}_{\ell,i,k}\left(\sigma_i\bar{\vv}_{\ell,i,k}^\T + \sum\limits_{\ell'=0}^{k-2} \sigma_m \gamma_{\ell',\ell,i,k}\bar{\vv}_{\ell',m,k}^\T \right) \\ 
         &+ \sigma_m\bar{\uu}_{k-1,m,k}\left(\bar{\vv}_{k-1,m,k}^\T + \sum\limits_{\ell'=0}^{k-2} \gamma_{\ell',k-1,m,k}\bar{\vv}_{\ell',m,k}^\T \right),
\end{align*}
where in the last equality we exchanged the index $\ell$ and $\ell'$ for the ease of exposition, so that $\bS_k$, as the sum of $(m-1)k + 1$ matrices of rank-one, satisfies 
\begin{equation}
  \rank(\bS_k) \leq (m-1)k + 1.
\end{equation}
Also, note that in passing we have shown that $\one_{m^{k}} \in \mathcal{A}_{k}$, so that we have similarly that 
\begin{equation}
    \rank(\bP_K^{\rm lin}) = \rank( p^K \one_{N} \one_{N}^\T + \sqrt N \bS_K ) \leq (m-1)K + 1.
\end{equation}
with $N = m^K$.
This thus allows us to conclude of the proof of Item~(iii) in Proposition~\ref{prop:linearized}.

\subsection{Proof of \Cref{theo:S+N_for_A}}
\label{subsec:proof_info_plus_noise_A}

By \Cref{def:random_Kronecker_graph}, we have $\A = \bPi (\bP_K + \Z) \bPi^{-1} = \bPi \bP_K^{\rm lin} \bPi^{-1} + \Z + \tilde O_{\| \cdot \|_2} (1) $, where we used the fact that $ \| \bP_K - \bP_K^{\rm lin} \|_2 = \tilde O(1)$ from \Cref{prop:linearized} and that the distribution of $\Z$ is invariant after permuted by $\bPi$.
This concludes the proof of \Cref{theo:S+N_for_A}.

\subsection{Proof of \Cref{lem:estimate_p}}
\label{subsec:proof_lem_estimate_p}

By \Cref{theo:S+N_for_A}, we have $\A = \bPi \bP_K \bPi^{-1} + \Z $.
First, we can note that $\one_N^\T \bPi \bP_K \bPi^{-1} \one_N = \one_N^\T \bP_K \one_N = p^K N^2 + \sqrt N \one_N^\T \bS_K \one_N + \tilde O(N) = p^K N^2 + \tilde O(N^{3/2})$,
where we used $\| \bS_K \|_{\max} = \tilde O(N^{-1})$ so that $\sqrt N \one_N^\T \bS_K \one_N = \tilde O(N^{3/2})$.
It then follows from the strong law of large numbers that $\frac1{N^2} \one_N^\T \Z \one_N \to 0$ almost surely as $N \to \infty$, and thus the conclusion.

\subsection{Proof of \Cref{prop:approx_centered_adj}}
\label{subsec:proof_prop_approx_centered_adj}

Note from the proof of \Cref{lem:estimate_p} that $\one_N^\T \A \one_N = \one_N^\T \bP_K \one_N + O(N) = \one_N^\T \bP_K^{\rm lin} \one_N + \tilde O(N) = p^K N^2 + \sqrt N \one_N^\T \bS_K \one_N + \tilde O(N)$.
Moreover, by Item~(ii) of \Cref{prop:linearized} and the assumption that $\one_m^\T \X \one_m = \tilde O(N^{-1/2})$, we have $\one_N^\T \bS_K \one_N = \one_{N^2}^\T \bTheta {\rm vec}(\X) = \frac{p^{K-1} KN}{m^2} \one_{m^2}^\T{\rm vec}(\X)  = \tilde O(\sqrt N)$, so that $\frac1{N^2} \one_N^\T \A \one_N = p^K + \tilde O(N^{-1})$ and $\frac{ \one_N^\T \A \one_N }{N^2} \one_N \one_N^\T = p^K \one_N \one_N^\T + \tilde O_{\| \cdot \|_2}(1)$.
This concludes the proof of \Cref{prop:approx_centered_adj}.


\section{Spectral Analysis of Random Kronecker Graphs}
\label{sec:spectral_analysis_centered_adjacency}

In this section, we provide some additional theoretical and empirical results on the spectra of large random Kronecker graphs.
With \Cref{theo:S+N_for_A}~and~\Cref{prop:approx_centered_adj} at hand, we have the following result on the asymptotic singular spectral characterization of the centered adjacency $\bar \A = \frac1{\sqrt N} \left(  \A - \frac{ \one_N^\T \A \one_N }{N^2} \one_N \one_N^\T \right)$ defined in \eqref{eq:def_centered_adj}.

\begin{Theorem}[Asymptotic characterization of adjacency spectrum]\label{theo:bar_A}
Under the notations and setting of \Cref{prop:approx_centered_adj}, the empirical singular value distribution $\mu_{\bar \A}$, defined as the normalized countering measure $\mu_{\bar \A} \equiv \frac1N \sum_{i=1}^N \delta_{\hat \sigma_i}$ of $\hat \sigma_i$, the singular values (listed in a decreasing order) of the centered adjacency $\bar \A $ in \eqref{eq:def_centered_adj} with $p^K \to \bar p \in (0,1)$, converges weakly to
\begin{equation}\label{eq:QC_law}
    \mu(dx) = \frac{\sqrt{ 4 \bar p (1-\bar p) - {x^2 }} }{\bar p(1-\bar p) \pi}  \cdot 1_{ \left[0,2\sqrt{\bar p(1-\bar p)} \right]}(x) \,dx,
\end{equation}
with probability approaching one as $N \to \infty$, known as the (rescaled) quarter-circle law~\cite{bai2010spectral}.
Moreover, let $ \ell_i = \lim_{N \to \infty} \sigma_i(\bS_K^{\bPi})/\sqrt{ \bar p(1- \bar p) }$ and $\ell_1 > \ell_2 > \cdots > \ell_{\rank(\bS_K^{\bPi})}$ with $\sigma_i(\bS_K^{\bPi})$ the $i$th largest singular value of $\bS_K^{\bPi}$ defined in \eqref{eq:def_S_K^Pi}, with associated left and right singular vectors $\uu_i$ and $\vv_i$, then, the top singular values as well as the associated (left and right) singular vector triples $(\hat\sigma_i, \hat \uu_i, \hat \vv_i)$ of $\bar \A$ establish the following \emph{phase transition} behavior
\begin{equation}\label{eq:PT_sigma_A}
    \hat \sigma_i \to \begin{cases} \sqrt{ \bar p (1-\bar p) (2 + \ell_i^2 + \ell_i^{-2}) }, & \ell_i >1, \\ 2\sqrt{\bar p (1-\bar p)}, & \ell_i \leq 1; \end{cases}
\end{equation}
and for $1 \leq i \leq \rank(\bS_K^{\bPi})$, $1 \leq j \leq N$,
\begin{equation}
    (\uu_i^\T \hat \uu_j)^2 \to (1 - \ell_i^{-2}) \cdot 1_{\ell_i \geq 1} \cdot 1_{i=j}, \quad  (\vv_i^\T \hat \vv_j)^2 \to (1 - \ell_i^{-2}) \cdot 1_{\ell_i \geq 1} \cdot 1_{i=j}.
\end{equation}
\end{Theorem}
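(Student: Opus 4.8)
The plan is to treat $\bar \A$ as a low-rank ``signal'' perturbation of a random noise matrix and to invoke the Benaych-Georges--Nadakuditi theory of low-rank additive perturbations, extended to the regime of slowly growing rank. Throughout, write $v \equiv \bar p(1-\bar p)$ and $W \equiv \Z/\sqrt N$, so that $W$ has independent, zero-mean, uniformly bounded entries of variance $v/N$. By \Cref{prop:approx_centered_adj} we have $\bar \A = \bS_K^{\bPi} + W + E$ with $\| E \|_2 = \tilde O(N^{-1/2}) \to 0$, and by Item~(iii) of \Cref{prop:linearized} the rank of $\bS_K^{\bPi}$ is $r_N \le (m-1)\log_m N + 1 = O(\log N)$.

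First I would establish the bulk. Since $\bS_K^{\bPi}$ has rank $r_N = o(N)$, the rank inequality for empirical singular value distributions gives $\| F^{W+E} - F^{\bar \A}\|_\infty \le r_N/N \to 0$, while the vanishing perturbation $E$ only shifts singular values by $\|E\|_2 \to 0$ (the singular-value analogue of \Cref{lem:weyl}); hence $\mu_{\bar \A}$ and $\mu_W$ share the same weak limit. Since the eigenvalue distribution of $WW^\T$ converges to the Marchenko--Pastur law with ratio $1$ on $[0,4v]$, the singular-value distribution of $W$ converges to the rescaled quarter-circle law \eqref{eq:QC_law} \cite{bai2010spectral}, yielding the bulk claim. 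Here the bounded entries of $\Z$ trivially satisfy a Lindeberg condition, so the fact that the entries are merely independent (rather than identically distributed) is immaterial.

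Next I would pin down the outliers and the vector alignment. Passing to the Hermitian dilation of $\bar \A$, one obtains an additive rank-$2r_N$ perturbation of a symmetric random matrix whose bulk is the semicircle law of radius $2\sqrt v$ and whose resolvent admits a deterministic equivalent governed by the semicircle Stieltjes transform $g(\rho)=\frac1{2v}\bigl(\rho - \sqrt{\rho^2 - 4v}\bigr)$. Writing $\bS_K^{\bPi} = \sum_i \theta_i \uu_i \vv_i^\T$ with $\theta_i = \sigma_i(\bS_K^{\bPi})$, the outlier singular values solve the secular equation $g(\rho) = 1/\theta_i$, whose solution is $\rho = \theta_i + v/\theta_i = \sqrt v\,(\ell_i + \ell_i^{-1})$; this root is real and exceeds the edge $2\sqrt v$ exactly when $\ell_i = \theta_i/\sqrt v > 1$. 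Since $\ell_i + \ell_i^{-1} = \sqrt{2 + \ell_i^2 + \ell_i^{-2}}$, this yields \eqref{eq:PT_sigma_A}, together with the stick-to-the-edge behaviour when $\ell_i \le 1$. The squared overlaps follow from differentiating the secular equation (equivalently, a derivative/residue of $g$ at $\rho$), which returns the classical cosine $1 - \ell_i^{-2}$ for both the left and the right singular vectors, the dilation eigenvector splitting its mass equally between $\hat \uu_i$ and $\hat \vv_i$.

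The main obstacle is that $r_N = O(\log N) \to \infty$, so the fixed-rank Benaych-Georges--Nadakuditi argument does not apply verbatim: the $r_N \times r_N$ secular determinant must be shown to be asymptotically diagonal. Concretely, I would need uniform concentration of the bilinear resolvent forms $\uu_i^\T (\rho^2 \I_N - WW^\T)^{-1} \uu_j$, and the analogous right-vector forms, toward a deterministic multiple of $\delta_{ij}$ for all $1 \le i,j \le r_N$ simultaneously. Because the entries of $\Z$ are bounded, these quadratic forms concentrate with exponential tails, so a union bound over the $O(r_N^2) = \polylog(N)$ pairs costs only a $\polylog$ factor, comfortably absorbed since $r_N/\sqrt N \to 0$. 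This uniform control is precisely what \cite{gavish2017optimal} develops for optimal singular-value shrinkage, and adapting its proof to the bounded, independent (non-i.i.d.) entries of the Kronecker noise $\Z$ and to the growing rank $r_N$ is the technical heart of the argument: once the off-diagonal terms are shown to be $o(1)$ uniformly, the secular determinant factorizes, the per-spike analysis above applies to each $\theta_i$, and the three claims follow.
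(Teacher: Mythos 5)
Your proposal is correct and follows essentially the same route as the paper: decompose $\bar \A = \bS_K^{\bPi} + \Z/\sqrt N + \tilde O_{\|\cdot\|_2}(N^{-1/2})$ via \Cref{prop:approx_centered_adj}, apply the standard spiked-model/deterministic-equivalent machinery (the paper cites \cite[Theorem~2.4]{couillet2022RMT4ML} and works with $\bar\A\bar\A^\T$ rather than the Hermitian dilation, but the secular-equation computation and the resulting formulas are identical), and handle the $O(\log N)$ rank by noting that the boundedness of the entries of $\Z$ makes the resolvent bilinear forms concentrate well enough that a union bound over $\polylog(N)$ terms costs only a vanishing polylogarithmic degradation of the error rates. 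Your write-up is in fact more explicit than the paper's own sketch on the key technical point (uniform off-diagonal control of the secular determinant for growing rank), and your algebra for the outlier locations and overlaps checks out.
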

\begin{proof}[Proof of \Cref{theo:bar_A}]
The singular values $\sigma_i(\bar \A)$ of $\bar \A \in \RR^{N \times N}$ are the square root of the corresponding eigenvalues $\lambda_i(\bar \A \bar \A^\T)$ of $ \bar \A \bar \A^\T$, i.e., $\sigma_i(\bar \A) = \sqrt{ \lambda_i(\bar \A \bar \A^\T) }$, and it thus suffices to evaluate the eigenvalues and the corresponding eigenvectors of the positive semi-definite matrix $ \bar \A \bar \A^\T$ and $ \bar \A^\T \bar \A$.
It then follows from \Cref{prop:approx_centered_adj} that $\bar \A$ can be decomposed, for $N$ large, as the sum of a zero-mean random matrix $\Z/\sqrt N$ and a small-rank deterministic signal matrix $\bS_K^{\bPi}$ as $ \bar \A  = \Z/\sqrt N + \bS_K^{\bPi} + \tilde O_{\| \cdot \|_2}(N^{-1/2})$.
The asymptotic characterization of eigenvalues and eigenvectors of sample covariance matrices $\bar \A \bar \A^\T$ or $\bar \A^\T \bar \A$ are rather standard in the random matrix literature, but \emph{only} when the the rank of the signal matrix $\bS_K^{\bPi}$ is \emph{fixed} with respect to its dimension $N \to \infty$, see for example \cite{johnstone2001distribution,peche2006largesta,baik2006eigenvalues,benaych2012singular,bai2012sample,hachem2012large,gavish2017optimal,donoho2018optimal,couillet2022RMT4ML} and the references therein.

Here, we are in the setting where the rank of $\bS_K^{\bPi}$ grows with the dimension $N$, but very slowly in the sense that $ \rank( \bS_K^{\bPi} )\leq (m-1)K + 1 = (m-1) \log_m(N) + 1 = o(N)$, as shown in Item~({\romannumeral 3}) of \Cref{prop:linearized}.
And it suffices to apply the deterministic equivalent result, e.g., \cite[Theorem~2.4]{couillet2022RMT4ML}, and note that the resulting approximation errors are of the order $O(\log(N)N^{-1/4}) $ for $\Z$ having bounded and thus sub-gaussian entries.
This concludes the proof of \Cref{theo:bar_A}.
\end{proof}

Let $\bS_K^{\bPi} = \sum_{i=1}^{\rank(\bS_K)} \sigma_i(\bS_K^{\bPi}) \uu_i \vv_i^\T$ denote the singular value decomposition (SVD) of $\bS_K^{\bPi}$, a first estimate of $\bS_K^{\bPi}$ is to apply the \emph{hard thresholding} (HS) on the SVD of the noisy centered adjacency $\bar \A$ as in \eqref{eq:def_hard_thresholding}
\begin{equation}
    \hat \bS_{K}^{\rm HS} = \textstyle \sum_{i=1}^{\rank(\bS_K)} \hat \sigma_i \hat \uu_i \hat \vv_i^\T,
\end{equation}
with $(\hat\sigma_i, \hat \uu_i, \hat \vv_i)$ the singular values (listed in a decreasing order) and singular vector triples of $\bar \A$.
We know, however from \Cref{theo:bar_A} that this first estimate, despite taking a simple form and minimizes the spectral norm difference $\| \hat \bS_{K}^{\rm HS} - \bar \A \|_2$ under the constraint of having rank $\rank(\bS_K)$ (as a consequence of the Eckart--Young--Mirsky theorem, see \cite{eckart1936approximation,mirsky1960symmetric}), is a ``biased'' estimate of the object of interest $\bS_K^{\bPi}$ for $N$ large, in the following sense:
\begin{itemize}
    \item[(i)] when the signal-to-noise-ratio (SNR) $\ell_i = \sigma_i(\bS_K^{\bPi})/\sqrt{\bar p (1- \bar p)}$ of $\bS_K^{\bPi}$ defined in \Cref{theo:bar_A} is \emph{below} the phase transition threshold $1$, the corresponding $\hat\sigma_i$ is \emph{independent} of $\sigma_i(\bS_K^{\bPi})$, with singular vectors asymptotically orthogonal to the true $\uu_i$ and $\vv_i$; and
    \item[(ii)] even for SNR \emph{above} the threshold, one still has $\hat \sigma_i \neq \sigma_i( \bS_K^{\bPi} )$ and that there is a non-trivial ``angle'' between $\hat \uu_i$ and $\uu_i$ (and similarly between $\hat \vv_i$ and $\vv_i$), unless the SNR $\ell_i \to \infty$. 
\end{itemize}

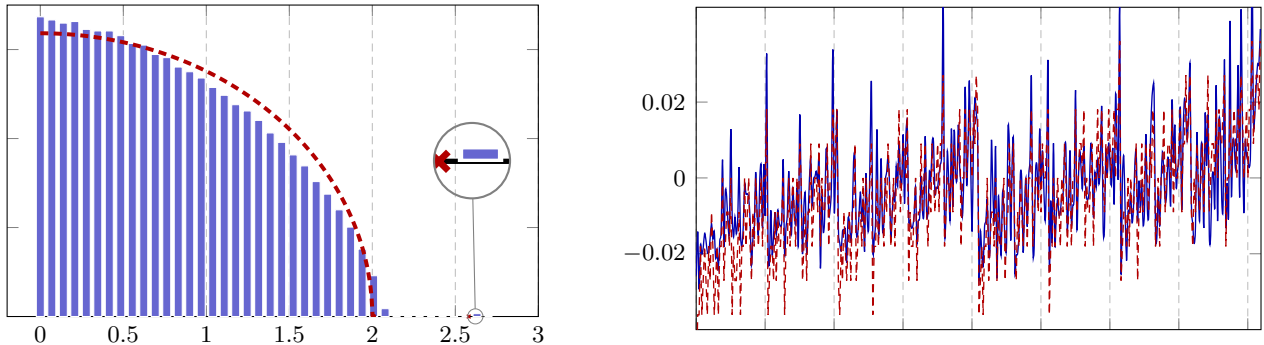
\begin{figure}[thb]
  \centering
  \begin{minipage}[c]{0.42\textwidth}
  \centering
  \begin{tikzpicture}[font=\footnotesize,spy using outlines]
    \renewcommand{\axisdefaulttryminticks}{4} 
    \pgfplotsset{every major grid/.append style={densely dashed}}          
    \tikzstyle{every axis y label}+=[yshift=-10pt] 
    \tikzstyle{every axis x label}+=[yshift=5pt]
    \pgfplotsset{every axis legend/.style={cells={anchor=west},fill=white,at={(0.98,0.98)}, anchor=north east, font=\footnotesize}}
    \begin{axis}[
      width=1.2\linewidth,
      height=.8\linewidth,
      xmin=-.2,xmax=3,
      ymin=0,ymax=0.7,
        yticklabels = {},
        bar width=3pt,
        grid=major,
        ymajorgrids=false,
        scaled ticks=false,
        xlabel={},
        ylabel={}
        ]
        \addplot+[ybar,mark=none,color=white,fill=BLUE!60!white,area legend] coordinates{
        (0.000000,0.673557)(0.069231,0.666504)(0.138462,0.659451)(0.207692,0.662977)(0.276923,0.645345)(0.346154,0.641819)(0.415385,0.641819)(0.484615,0.631239)(0.553846,0.613607)(0.623077,0.610080)(0.692308,0.588921)(0.761538,0.581868)(0.830769,0.560710)(0.900000,0.550130)(0.969231,0.536024)(1.038462,0.514865)(1.107692,0.497233)(1.176923,0.476074)(1.246154,0.461968)(1.315385,0.440809)(1.384615,0.412598)(1.453846,0.391439)(1.523077,0.363227)(1.592308,0.338542)(1.661538,0.303277)(1.730769,0.275065)(1.800000,0.239800)(1.869231,0.201009)(1.938462,0.151638)(2.007692,0.091688)(2.076923,0.017632)(2.146154,0.000000)(2.215385,0.000000)(2.284615,0.000000)(2.353846,0.000000)(2.423077,0.000000)(2.492308,0.000000)(2.561538,0.000000)(2.630769,0.006526)(2.700000,0.000000)
        };
        \addplot[densely dashed,RED,samples=300,domain=0:2.05,line width=1.5pt] {sqrt(max( 4-x^2 ,0))/pi};
        \addplot+[only marks,mark=x,RED,line width=.5pt,mark size=1pt] plot coordinates{(15.9836, 0)};
        \coordinate (spypoint1) at (axis cs:2.62,0);
        \coordinate (magnifyglass1) at (axis cs:2.6,0.35);
        \addplot+[only marks,mark=x,RED,line width=.5pt,mark size=1pt] plot coordinates{(2.5806, 0)};
        \end{axis}
        \spy[black!50!white,size=1cm,circle,connect spies,magnification=5] on (spypoint1) in node [fill=none] at (magnifyglass1);
  \end{tikzpicture}
  \end{minipage}
  \hfill{}%
  \begin{minipage}[c]{0.53\textwidth}
  \centering
  \begin{tikzpicture}[font=\footnotesize]
    \renewcommand{\axisdefaulttryminticks}{4} 
    \pgfplotsset{every major grid/.append style={densely dashed}}          
    \tikzstyle{every axis y label}+=[yshift=-10pt] 
    \tikzstyle{every axis x label}+=[yshift=5pt]
    \pgfplotsset{every axis legend/.style={cells={anchor=west},fill=white,at={(0,1)}, anchor=north west, font=\footnotesize}}
    \begin{axis}[
      width=1\linewidth,
      height=.65\linewidth,
      xmin=0,xmax=4096,
      ymin=-0.04,ymax=0.045,
        ytick = {-0.02, 0, 0.02},
        yticklabels = {$-0.02$, $0$, $0.02$},
        xticklabels = {},
        grid=major,
        ymajorgrids=false,
        scaled ticks=false,
        xlabel={ },
        ylabel={ }
        ]
        \addplot[smooth,BLUE,line width=.5pt] plot coordinates{
        (1,-0.025723)(11,-0.014164)(21,-0.029371)(31,-0.017719)(41,-0.020135)(51,-0.018371)(61,-0.014631)(71,-0.016498)(81,-0.019613)(91,-0.019569)(101,-0.015997)(111,-0.012018)(121,-0.016170)(131,-0.014025)(141,-0.021186)(151,-0.018814)(161,-0.023728)(171,-0.016589)(181,-0.013967)(191,0.004813)(201,-0.018200)(211,-0.003664)(221,-0.004915)(231,-0.012415)(241,-0.013808)(251,0.012898)(261,-0.010016)(271,-0.015598)(281,-0.010347)(291,-0.014777)(301,-0.010771)(311,0.002830)(321,-0.014657)(331,-0.017995)(341,-0.013121)(351,-0.001712)(361,-0.004237)(371,-0.004153)(381,-0.005304)(391,-0.011189)(401,-0.023057)(411,-0.016185)(421,-0.013698)(431,0.001558)(441,-0.008753)(451,-0.014994)(461,-0.000352)(471,0.003196)(481,-0.013282)(491,0.003634)(501,-0.009409)(511,0.032806)(521,-0.017398)(531,-0.022137)(541,-0.008524)(551,-0.020217)(561,-0.015294)(571,-0.009282)(581,-0.014886)(591,-0.008463)(601,-0.011610)(611,-0.012731)(621,-0.011126)(631,0.004889)(641,-0.020120)(651,-0.013256)(661,-0.009675)(671,0.001789)(681,-0.008306)(691,-0.014878)(701,-0.002893)(711,-0.009332)(721,-0.012119)(731,-0.002525)(741,-0.017877)(751,0.016730)(761,-0.002274)(771,-0.020090)(781,-0.007478)(791,-0.004236)(801,-0.019890)(811,-0.011062)(821,-0.000579)(831,0.005068)(841,-0.007306)(851,-0.000486)(861,0.000318)(871,0.001783)(881,-0.001052)(891,0.005071)(901,-0.023839)(911,-0.004921)(921,-0.014132)(931,-0.011073)(941,0.006916)(951,0.004536)(961,-0.009613)(971,-0.008250)(981,-0.001343)(991,0.033863)(1001,-0.008365)(1011,0.002410)(1021,0.017441)(1031,-0.012843)(1041,-0.017133)(1051,-0.019233)(1061,-0.014089)(1071,-0.013845)(1081,-0.013654)(1091,-0.022298)(1101,-0.010920)(1111,-0.011620)(1121,-0.016507)(1131,-0.011594)(1141,0.004208)(1151,0.005287)(1161,-0.013764)(1171,-0.014978)(1181,-0.009518)(1191,-0.003634)(1201,-0.011512)(1211,-0.006036)(1221,-0.019451)(1231,0.001223)(1241,-0.011137)(1251,-0.013091)(1261,0.007248)(1271,0.025081)(1281,-0.018091)(1291,-0.016053)(1301,-0.011057)(1311,0.012775)(1321,-0.013516)(1331,-0.011006)(1341,0.005120)(1351,-0.013452)(1361,-0.011770)(1371,-0.000972)(1381,0.001504)(1391,0.005429)(1401,-0.000703)(1411,-0.016078)(1421,-0.010037)(1431,-0.003810)(1441,-0.006176)(1451,0.003156)(1461,-0.002235)(1471,0.009684)(1481,-0.000058)(1491,-0.001189)(1501,0.011465)(1511,-0.006757)(1521,0.011160)(1531,0.013675)(1541,-0.010289)(1551,-0.011162)(1561,-0.013204)(1571,-0.012659)(1581,-0.009455)(1591,-0.002416)(1601,-0.017242)(1611,-0.011483)(1621,-0.010220)(1631,0.000540)(1641,-0.008708)(1651,-0.009392)(1661,0.010995)(1671,-0.009212)(1681,-0.008806)(1691,0.001679)(1701,-0.000796)(1711,0.009152)(1721,0.009476)(1731,-0.004070)(1741,-0.002554)(1751,-0.002578)(1761,0.001221)(1771,0.003572)(1781,0.008485)(1791,0.048542)(1801,-0.012313)(1811,-0.009980)(1821,0.006933)(1831,-0.000652)(1841,-0.011009)(1851,0.012999)(1861,-0.007643)(1871,0.012695)(1881,0.001297)(1891,-0.000322)(1901,0.014009)(1911,0.002419)(1921,-0.002001)(1931,-0.010049)(1941,0.000731)(1951,0.024020)(1961,-0.001634)(1971,0.007361)(1981,0.025664)(1991,0.002984)(2001,-0.014127)(2011,0.009752)(2021,0.021023)(2031,0.017007)(2041,0.011152)(2051,-0.026555)(2061,-0.014085)(2071,-0.019574)(2081,-0.023310)(2091,-0.012988)(2101,-0.011272)(2111,-0.005703)(2121,-0.018570)(2131,-0.018244)(2141,0.009757)(2151,-0.002816)(2161,-0.011395)(2171,0.008425)(2181,-0.026483)(2191,-0.005127)(2201,-0.011377)(2211,-0.020426)(2221,-0.004215)(2231,-0.008097)(2241,-0.017915)(2251,-0.017218)(2261,0.012687)(2271,0.008450)(2281,-0.016496)(2291,0.003917)(2301,0.009692)(2311,-0.024200)(2321,-0.020431)(2331,-0.019912)(2341,-0.014405)(2351,0.000993)(2361,-0.000684)(2371,-0.019678)(2381,-0.007937)(2391,-0.000007)(2401,-0.012864)(2411,0.000397)(2421,0.005491)(2431,0.026587)(2441,-0.017191)(2451,-0.018448)(2461,0.001450)(2471,0.003365)(2481,-0.005635)(2491,0.012988)(2501,-0.001965)(2511,0.007196)(2521,-0.002834)(2531,-0.011245)(2541,0.001000)(2551,0.031123)(2561,-0.023920)(2571,-0.005994)(2581,-0.016901)(2591,0.009133)(2601,-0.017756)(2611,-0.009975)(2621,-0.005636)(2631,-0.009749)(2641,-0.009073)(2651,-0.004607)(2661,-0.004400)(2671,0.006968)(2681,0.008071)(2691,-0.013529)(2701,-0.005039)(2711,0.007231)(2721,-0.012135)(2731,-0.007408)(2741,-0.004479)(2751,0.023149)(2761,-0.013246)(2771,-0.000933)(2781,0.006976)(2791,0.002657)(2801,-0.002844)(2811,0.008521)(2821,-0.000754)(2831,-0.005188)(2841,-0.005669)(2851,-0.002774)(2861,0.008859)(2871,0.004634)(2881,-0.010422)(2891,0.001429)(2901,-0.002546)(2911,0.004448)(2921,-0.001655)(2931,0.003256)(2941,0.018689)(2951,-0.012918)(2961,-0.013041)(2971,0.007255)(2981,0.000600)(2991,0.009291)(3001,0.003704)(3011,-0.001964)(3021,0.007924)(3031,-0.006990)(3041,0.021872)(3051,0.011395)(3061,0.000632)(3071,0.046262)(3081,-0.024526)(3091,-0.000266)(3101,0.005633)(3111,-0.009199)(3121,-0.021777)(3131,-0.009213)(3141,-0.009154)(3151,-0.004050)(3161,-0.011765)(3171,-0.007930)(3181,0.003519)(3191,0.006999)(3201,-0.011444)(3211,-0.000731)(3221,-0.007917)(3231,-0.001411)(3241,-0.012327)(3251,-0.008572)(3261,0.011537)(3271,-0.007432)(3281,-0.010439)(3291,0.005115)(3301,0.017749)(3311,0.023629)(3321,0.022021)(3331,-0.013994)(3341,-0.015598)(3351,-0.002662)(3361,-0.012785)(3371,-0.001529)(3381,0.006892)(3391,0.001900)(3401,-0.006239)(3411,-0.004409)(3421,0.005356)(3431,0.001425)(3441,-0.007016)(3451,0.020674)(3461,0.004093)(3471,0.014035)(3481,-0.002439)(3491,0.002095)(3501,0.004988)(3511,0.018783)(3521,-0.008459)(3531,0.012001)(3541,0.020826)(3551,0.016881)(3561,0.010135)(3571,0.016992)(3581,0.029818)(3591,-0.006621)(3601,-0.012100)(3611,-0.017275)(3621,-0.012664)(3631,0.012084)(3641,-0.003051)(3651,-0.011238)(3661,0.007753)(3671,0.006763)(3681,-0.013205)(3691,0.016958)(3701,0.003268)(3711,0.016364)(3721,-0.015082)(3731,-0.015196)(3741,-0.000369)(3751,0.000998)(3761,-0.007608)(3771,0.024450)(3781,-0.011378)(3791,0.014560)(3801,0.012309)(3811,-0.005606)(3821,-0.007024)(3831,0.031224)(3841,-0.012382)(3851,-0.001366)(3861,-0.009494)(3871,0.041406)(3881,-0.004234)(3891,0.009189)(3901,0.009332)(3911,0.002500)(3921,0.011302)(3931,0.029765)(3941,-0.005142)(3951,0.044454)(3961,0.014779)(3971,-0.009279)(3981,0.009118)(3991,0.005926)(4001,0.004543)(4011,0.002834)(4021,0.009353)(4031,0.058156)(4041,0.015480)(4051,0.016703)(4061,0.024396)(4071,0.029868)(4081,0.030112)(4091,0.039283)
        };
        \addplot[densely dashed,RED,line width=0.5pt] plot coordinates{
        (1,-0.054127)(11,-0.036084)(21,-0.036084)(31,-0.018042)(41,-0.036084)(51,-0.027063)(61,-0.018042)(71,-0.027063)(81,-0.036084)(91,-0.018042)(101,-0.027063)(111,-0.009021)(121,-0.018042)(131,-0.036084)(141,-0.027063)(151,-0.018042)(161,-0.036084)(171,-0.018042)(181,-0.018042)(191,-0.000000)(201,-0.027063)(211,-0.018042)(221,-0.009021)(231,-0.009021)(241,-0.018042)(251,-0.000000)(261,-0.036084)(271,-0.018042)(281,-0.027063)(291,-0.027063)(301,-0.018042)(311,-0.009021)(321,-0.036084)(331,-0.018042)(341,-0.018042)(351,-0.000000)(361,-0.018042)(371,-0.009021)(381,-0.000000)(391,-0.018042)(401,-0.027063)(411,-0.009021)(421,-0.018042)(431,0.000000)(441,-0.009021)(451,-0.018042)(461,-0.009021)(471,-0.000000)(481,-0.018042)(491,0.000000)(501,0.000000)(511,0.018042)(521,-0.036084)(531,-0.027063)(541,-0.018042)(551,-0.018042)(561,-0.027063)(571,-0.009021)(581,-0.027063)(591,-0.009021)(601,-0.018042)(611,-0.018042)(621,-0.009021)(631,-0.000000)(641,-0.036084)(651,-0.018042)(661,-0.018042)(671,-0.000000)(681,-0.018042)(691,-0.009021)(701,-0.000000)(711,-0.009021)(721,-0.018042)(731,-0.000000)(741,-0.009021)(751,0.009021)(761,0.000000)(771,-0.027063)(781,-0.018042)(791,-0.009021)(801,-0.027063)(811,-0.009021)(821,-0.009021)(831,0.009021)(841,-0.018042)(851,-0.009021)(861,-0.000000)(871,0.000000)(881,-0.009021)(891,0.009021)(901,-0.018042)(911,0.000000)(921,-0.009021)(931,-0.009021)(941,0.000000)(951,0.009021)(961,-0.018042)(971,0.000000)(981,0.000000)(991,0.018042)(1001,0.000000)(1011,0.009021)(1021,0.018042)(1031,-0.027063)(1041,-0.036084)(1051,-0.018042)(1061,-0.027063)(1071,-0.009021)(1081,-0.018042)(1091,-0.027063)(1101,-0.018042)(1111,-0.009021)(1121,-0.027063)(1131,-0.009021)(1141,-0.009021)(1151,0.009021)(1161,-0.027063)(1171,-0.018042)(1181,-0.009021)(1191,-0.009021)(1201,-0.018042)(1211,0.000000)(1221,-0.018042)(1231,0.000000)(1241,-0.009021)(1251,-0.009021)(1261,0.000000)(1271,0.009021)(1281,-0.036084)(1291,-0.018042)(1301,-0.018042)(1311,-0.000000)(1321,-0.018042)(1331,-0.009021)(1341,0.000000)(1351,-0.009021)(1361,-0.018042)(1371,0.000000)(1381,-0.009021)(1391,0.009021)(1401,0.000000)(1411,-0.018042)(1421,-0.009021)(1431,0.000000)(1441,-0.018042)(1451,0.000000)(1461,0.000000)(1471,0.018042)(1481,-0.009021)(1491,0.000000)(1501,0.009021)(1511,0.009021)(1521,0.000000)(1531,0.018042)(1541,-0.027063)(1551,-0.009021)(1561,-0.018042)(1571,-0.018042)(1581,-0.009021)(1591,-0.000000)(1601,-0.027063)(1611,-0.009021)(1621,-0.009021)(1631,0.009021)(1641,-0.009021)(1651,-0.000000)(1661,0.009021)(1671,-0.009021)(1681,-0.018042)(1691,0.000000)(1701,-0.009021)(1711,0.009021)(1721,0.000000)(1731,-0.009021)(1741,0.000000)(1751,0.009021)(1761,-0.009021)(1771,0.009021)(1781,0.009021)(1791,0.027063)(1801,-0.018042)(1811,-0.009021)(1821,0.000000)(1831,0.000000)(1841,-0.009021)(1851,0.009021)(1861,-0.009021)(1871,0.009021)(1881,0.000000)(1891,0.000000)(1901,0.009021)(1911,0.018042)(1921,-0.018042)(1931,0.000000)(1941,0.000000)(1951,0.018042)(1961,0.000000)(1971,0.009021)(1981,0.018042)(1991,0.009021)(2001,0.000000)(2011,0.018042)(2021,0.009021)(2031,0.027063)(2041,0.018042)(2051,-0.036084)(2061,-0.027063)(2071,-0.018042)(2081,-0.036084)(2091,-0.018042)(2101,-0.018042)(2111,-0.000000)(2121,-0.027063)(2131,-0.018042)(2141,-0.009021)(2151,-0.009021)(2161,-0.018042)(2171,-0.000000)(2181,-0.027063)(2191,-0.009021)(2201,-0.018042)(2211,-0.018042)(2221,-0.009021)(2231,-0.000000)(2241,-0.027063)(2251,-0.009021)(2261,-0.009021)(2271,0.009021)(2281,-0.009021)(2291,-0.000000)(2301,0.009021)(2311,-0.018042)(2321,-0.027063)(2331,-0.009021)(2341,-0.018042)(2351,-0.000000)(2361,-0.009021)(2371,-0.018042)(2381,-0.009021)(2391,-0.000000)(2401,-0.018042)(2411,0.000000)(2421,-0.000000)(2431,0.018042)(2441,-0.018042)(2451,-0.009021)(2461,-0.000000)(2471,0.000000)(2481,-0.009021)(2491,0.009021)(2501,-0.009021)(2511,0.009021)(2521,0.000000)(2531,0.000000)(2541,0.009021)(2551,0.018042)(2561,-0.036084)(2571,-0.018042)(2581,-0.018042)(2591,-0.000000)(2601,-0.018042)(2611,-0.009021)(2621,-0.000000)(2631,-0.009021)(2641,-0.018042)(2651,-0.000000)(2661,-0.009021)(2671,0.009021)(2681,-0.000000)(2691,-0.018042)(2701,-0.009021)(2711,-0.000000)(2721,-0.018042)(2731,0.000000)(2741,-0.000000)(2751,0.018042)(2761,-0.009021)(2771,-0.000000)(2781,0.009021)(2791,0.009021)(2801,-0.000000)(2811,0.018042)(2821,-0.018042)(2831,0.000000)(2841,-0.009021)(2851,-0.009021)(2861,0.000000)(2871,0.009021)(2881,-0.018042)(2891,0.000000)(2901,-0.000000)(2911,0.018042)(2921,0.000000)(2931,0.009021)(2941,0.018042)(2951,0.000000)(2961,-0.009021)(2971,0.009021)(2981,0.000000)(2991,0.018042)(3001,0.009021)(3011,0.000000)(3021,0.009021)(3031,0.018042)(3041,0.000000)(3051,0.018042)(3061,0.018042)(3071,0.036084)(3081,-0.027063)(3091,-0.018042)(3101,-0.009021)(3111,-0.009021)(3121,-0.018042)(3131,-0.000000)(3141,-0.018042)(3151,-0.000000)(3161,-0.009021)(3171,-0.009021)(3181,0.000000)(3191,0.009021)(3201,-0.027063)(3211,-0.009021)(3221,-0.009021)(3231,0.009021)(3241,-0.009021)(3251,-0.000000)(3261,0.009021)(3271,-0.000000)(3281,-0.009021)(3291,0.009021)(3301,0.000000)(3311,0.018042)(3321,0.009021)(3331,-0.018042)(3341,-0.009021)(3351,-0.000000)(3361,-0.018042)(3371,0.000000)(3381,-0.000000)(3391,0.018042)(3401,-0.009021)(3411,-0.000000)(3421,0.009021)(3431,0.009021)(3441,0.000000)(3451,0.018042)(3461,-0.009021)(3471,0.009021)(3481,0.000000)(3491,0.000000)(3501,0.009021)(3511,0.018042)(3521,-0.009021)(3531,0.009021)(3541,0.009021)(3551,0.027063)(3561,0.009021)(3571,0.018042)(3581,0.027063)(3591,-0.009021)(3601,-0.018042)(3611,-0.000000)(3621,-0.009021)(3631,0.009021)(3641,-0.000000)(3651,-0.009021)(3661,0.000000)(3671,0.009021)(3681,-0.009021)(3691,0.009021)(3701,0.009021)(3711,0.027063)(3721,-0.009021)(3731,-0.000000)(3741,0.009021)(3751,0.009021)(3761,0.000000)(3771,0.018042)(3781,0.000000)(3791,0.018042)(3801,0.009021)(3811,0.009021)(3821,0.018042)(3831,0.027063)(3841,-0.018042)(3851,0.000000)(3861,-0.000000)(3871,0.018042)(3881,0.000000)(3891,0.009021)(3901,0.018042)(3911,0.009021)(3921,0.000000)(3931,0.018042)(3941,0.009021)(3951,0.027063)(3961,0.018042)(3971,0.000000)(3981,0.009021)(3991,0.018042)(4001,0.000000)(4011,0.018042)(4021,0.018042)(4031,0.036084)(4041,0.009021)(4051,0.018042)(4061,0.027063)(4071,0.027063)(4081,0.018042)(4091,0.036084)
        };
        \end{axis}
  \end{tikzpicture}
\end{minipage}
 \caption{ \textbf{(Left)} Histogram of singular values of $\bar \A/\sqrt{\bar p (1- \bar p)}$ ({\BLUE \textbf{blue}}) versus the limiting quarter-circle law spectrum and spikes ({\RED \textbf{red}}). 
 \textbf{(Right)} Left singular vector associated to the largest singular value of $\bar \A$ ({\BLUE \textbf{blue}}), versus the (rescaled, according to \Cref{theo:bar_A}) top left singular vector of $\bS_K^{\bPi = \I_N}$ ({\RED \textbf{red}}).
 A similar observation can be made for right singular vectors, but with larger random fluctuation.
 With $m = 2$, $K= 12$ so that $N = m^K =  4\,096$, $p = 0.7$ and ${\rm vec}(\X) = [-5.5,5.5,-1.5,1.5]^\T$.
 }
\label{fig:spectrum}
\end{figure}

The asymptotic behavior of the singular values and vectors in \Cref{theo:bar_A} are numerically confirmed in \Cref{fig:spectrum} for $K = 12$ and $N=4\,096$.
We observe, in the case of \Cref{fig:spectrum}, that one singular value of $\bar \A$ (due to the small-rank $\bS_K^{\bPi=\I_N}$) isolates from the limiting quarter-circle law, with the associated singular vector a noisy and rescaled version of that of $\bS_K^{\I_N}$.
We also see that the top singular vector of $\bS_K^{\I_N}$ establishes a clear pattern, as a consequence of the linear relation in \eqref{eq:S_linear_propo}.
This property will be exploited later for approximate inference of the graph parameters $\X$.

\begin{Remark}[On small-rank perturbation of random matrices]\normalfont
\label{rem:small_rank}
The spiked model of the form $\bar \A$ in \Cref{prop:approx_centered_adj} has attracted significant research interest in the literature of large-dimensional random matrix theory, see for example~\cite{johnstone2001distribution,baik2006eigenvalues,benaych2012singular,bai2012sample}.
To the best of our knowledge, the only previous efforts that have studied the case of small but increasing rank (with $\rank(\bS) = o(N)$ for $\bS$ the signal matrix) are \cite{peche2006largesta} for deformed complex Gaussian Wigner matrices and \cite{huang2018Mesoscopic} under both additive and multiplicative perturbation models of the type $\Z \Z^\T + \bS \bS^\T$ or $(\I + \bS)^{\frac12} \Z \Z^\T (\I + \bS)^{\frac12}$ for random $\Z$ and small-rank signal $\bS$. 
However, these results do not directly as the model under study here in different from that in~\cite{huang2018Mesoscopic}.
In this vein, we extend the technical results in~\cite{huang2018Mesoscopic} to characterize the adjacency singular spectra of random Kronecker graph models as in \Cref{def:random_Kronecker_graph}.
\end{Remark}

In the following result, we provide asymptotic theoretical guarantee on the shrinkage estimator used in \Cref{alg:shrinkage_estim}, by adapting the proof from~\cite{gavish2017optimal} to the Kronecker graph model.

\begin{Corollary}[Shrinkage estimation of small-rank $\bS_K$]\label{coro:opt_shrinkage}
Under the notation and setting of \Cref{theo:bar_A}, define the following shrinkage estimator, 
\begin{equation}\label{eq:def_hat_S_K}
    \hat \bS_K = \textstyle \sum_{i=1}^N f(\hat \sigma_i) \hat \uu_i \hat \vv_i^\T,
\end{equation}
for $f(t) = \sqrt{t^2 - 4 \bar p (1- \bar p) } \cdot 1_{t > 2 \sqrt{\bar p (1 - \bar p)}}$ and $(\hat\sigma_i, \hat \uu_i, \hat \vv_i)$ the triple of singular values (listed in a decreasing order) and  singular vectors of $\bar \A$.
Suppose all singular values of $\bS_K^{\bPi}$ that are greater than $\sqrt{ \bar p(1- \bar p) }$ are all distinct, one has
\begin{equation*}
     \textstyle \| \bS_K^{\bPi} - \hat \bS_K \|_F^2 - \sum_{i=1}^{\rank(\bS_K)} g \left( \sigma_i (\bS_K^{\bPi}) \right) \to 0,
\end{equation*}
almost surely as $N \to \infty$, with
\begin{equation*}
  g(t) = 
  \begin{cases}
    \bar p (1 - \bar p) \left(2 - \bar p (1 - \bar p) t^{-2} \right), &  t >  \sqrt{\bar p(1 - \bar p)} \\ 
    t^2, & t \leq \sqrt{\bar p(1 - \bar p)}.
  \end{cases}
\end{equation*}
\end{Corollary}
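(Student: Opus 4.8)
The plan is to expand the squared Frobenius error directly and substitute the limiting singular-value and singular-vector statistics provided by \Cref{theo:bar_A}, following the template of \cite{gavish2017optimal} but tracking the slowly growing rank. Write $r \equiv \rank(\bS_K^{\bPi}) \leq (m-1)K+1 = \tilde O(1)$ and let $\bS_K^{\bPi} = \sum_{i=1}^r \sigma_i \uu_i \vv_i^\T$ be its SVD, with $\sigma_i \equiv \sigma_i(\bS_K^{\bPi})$ and $\ell_i = \sigma_i/\sqrt{\bar p(1-\bar p)}$. Expanding the norm,
\begin{equation}
\| \bS_K^{\bPi} - \hat \bS_K \|_F^2 = \sum_{i=1}^r \sigma_i^2 - 2 \sum_{i=1}^r \sum_{j=1}^N \sigma_i f(\hat\sigma_j) (\uu_i^\T \hat\uu_j)(\vv_i^\T \hat\vv_j) + \sum_{j=1}^N f(\hat\sigma_j)^2,
\end{equation}
so that the whole computation reduces to controlling the inner products $(\uu_i^\T \hat\uu_j)$, $(\vv_i^\T \hat\vv_j)$ and the shrunk values $f(\hat\sigma_j)$.

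First I would dispose of the bulk: by the quarter-circle limit in \Cref{theo:bar_A}, all singular values $\hat\sigma_j$ not associated with a supercritical spike concentrate below the truncation level $2\sqrt{\bar p(1-\bar p)}$, whence $f(\hat\sigma_j)=0$ for those indices and only the (at most $\tilde O(1)$) spike directions survive. Next, for a supercritical spike $i$ (with $\ell_i>1$) the phase-transition formula gives $\hat\sigma_i^2 - 4\bar p(1-\bar p) \to \bar p(1-\bar p)(\ell_i-\ell_i^{-1})^2$, hence $f(\hat\sigma_i) \to \sqrt{\bar p(1-\bar p)}(\ell_i-\ell_i^{-1})$, while the alignment identities give $(\uu_i^\T \hat\uu_j)^2, (\vv_i^\T \hat\vv_j)^2 \to (1-\ell_i^{-2})1_{\ell_i>1}1_{i=j}$; in particular every off-diagonal ($i\neq j$) contribution vanishes and, with consistent sign conventions, the diagonal product of cosines converges to $1-\ell_i^{-2}$.

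Substituting these limits into the per-spike contribution $\sigma_i^2 - 2\sigma_i f(\hat\sigma_i)(\uu_i^\T\hat\uu_i)(\vv_i^\T\hat\vv_i) + f(\hat\sigma_i)^2$ and using the identity $\ell_i^{-2}(\ell_i^2-1)^2 = \ell_i^2 - 2 + \ell_i^{-2} = (\ell_i-\ell_i^{-1})^2$, the cross term collapses to $2\bar p(1-\bar p)(\ell_i-\ell_i^{-1})^2$, so the per-spike error telescopes to $\bar p(1-\bar p)(2-\ell_i^{-2}) = \bar p(1-\bar p)\left(2 - \bar p(1-\bar p)\sigma_i^{-2}\right) = g(\sigma_i)$, matching the first branch of $g$. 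For a subcritical spike ($\ell_i\leq 1$, i.e.\ $\sigma_i \leq \sqrt{\bar p(1-\bar p)}$) the estimator contributes nothing and the alignment vanishes, so the error is simply $\sigma_i^2 = g(\sigma_i)$, matching the second branch. Summing over $i=1,\dots,r$ then yields the claimed limit.

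The main obstacle is that $r$ grows (logarithmically) with $N$, so I cannot invoke the fixed-rank results of \cite{gavish2017optimal} term by term: I must upgrade the convergence in \Cref{theo:bar_A} to hold \emph{uniformly} over the $\tilde O(1)$ spikes, and I must control the top bulk singular value so that no spurious bulk term crosses the threshold $2\sqrt{\bar p(1-\bar p)}$. Concretely, each per-spike fluctuation from the random-matrix limit is $\tilde O(N^{-1/4})$ (as recorded in the proof of \Cref{theo:bar_A}), and since there are only $O(\log N)$ spikes the accumulated error is still $\tilde O(\log N \cdot N^{-1/4}) = o(1)$; making this rigorous requires a union bound over the spikes together with the separation assumption that the supercritical singular values of $\bS_K^{\bPi}$ are distinct, which guarantees non-degenerate gaps and hence controlled singular-vector perturbations. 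Verifying that the off-diagonal alignments are uniformly negligible across this growing family of spikes is the delicate point; everything else is the algebraic simplification above.
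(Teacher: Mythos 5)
Your proposal is correct and follows essentially the same route as the paper's proof: expand the squared Frobenius norm and substitute the asymptotic singular value and singular-vector characterization of \Cref{theo:bar_A}, exactly as in the adaptation of \cite[Theorem~1]{gavish2017optimal}. If anything, you are more explicit than the paper about the two points it glosses over --- the uniformity of the convergence over the $O(\log N)$ spikes and the sign consistency of the left/right singular-vector overlaps in the cross term --- so there is no gap to report.
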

\begin{proof}[Proof of \Cref{coro:opt_shrinkage}]
Here we prove \Cref{coro:opt_shrinkage} following the line of arguments in as in the proof of \cite[Theorem~1]{gavish2017optimal}.
Note that by expanding the Frobenius norm, we get, for $\sigma_i \equiv \sigma_i(\bS_K^{\bPi})$ the ordered singular values of $\bS_K^{\bPi}$ and $f(t)$ defined in \eqref{eq:def_hat_S_K} that, 
\begin{align*}
    &\| \bS_K^{\bPi} - \hat \bS_K \|_F^2 = \textstyle \sum_{i=1}^r \left[ \left( \sigma_i \right)^2 + \left( f(\hat \sigma_i) \right)^2 \right] - 2 \sum_{i,j=1}^r \sigma_i f(\hat \sigma_i) (\uu_i^\T \hat \uu_j) (\vv_i^\T \hat \vv_j) +o(1) \\ 
    &= \textstyle \sum_{i=1}^r \left[ \left( \sigma_i \right)^2 - 2 \sigma_i f(\hat \sigma_i) (\uu_i^\T \hat \uu_i)  (\vv_i^\T \hat \vv_i) + \left( f(\hat \sigma_i) \right)^2 \right] + o(1), \\ 
    &= \textstyle \sum_{i=1}^r \left[ \bar p (1 - \bar p) (2 - \bar p (1 - \bar p) \sigma_i^{-2} ) \cdot 1_{\sigma_i > \sqrt{\bar p(1 - \bar p)}} + \sigma_i^2 \cdot 1_{\sigma_i \leq \sqrt{\bar p(1 - \bar p)}} \right] + o(1)
\end{align*}
where we used in the first equality the fact that there are at most $r \equiv \rank(\bS_K)$ singular values $\hat \sigma_i$ of $\bar \A$ greater than (the right edge of the quarter-circle law) $2 \sqrt{\bar p (1 - \bar p)}$ by \Cref{theo:bar_A}, and the asymptotic singular vector characterization in \Cref{theo:bar_A} in the second and third line.
It can be shown that the nonlinear shrinkage estimator $\hat \bS_K$ introduced in \Cref{coro:opt_shrinkage} yields the \emph{minimum} (asymptotic) Frobenius norm error among all estimators of the form $\hat \bS_K = \textstyle \sum_{i=1}^N f(\hat \sigma_i) \hat \uu_i \hat \vv_i^\T$ with $f\colon \RR_{\geq 0} \to \RR_{\geq 0}$, see for detail in \cite[Theorem~1]{gavish2017optimal}.
\end{proof}

\section{Additional Numerical Results}
\label{sec:additional_num}


\Cref{fig:compare_convex_moment} compares the performance and running time of the proposed \Cref{algo:meta}, the moment-based approach proposed in~\cite{gleich2012MomentBased}, and the KronFit algorithm in~\cite{leskovec2010kronecker}. 

\begin{Remark}[On moment-based method]\normalfont
\label{rem:moment}
The moment-based approach proposed in~\cite{gleich2012MomentBased} has the following limitations in Kronecker graph inference:
\begin{enumerate}
    \item[(i)] it is applicable only when the Kronecker initiator $\bP_1$ has a dimension of $m = 2$; and
    \item[(ii)] it only applies to undirected graphs. 
\end{enumerate}
It employs three strategies to solve for the Kronecker graph initiator: the direct minimization approach, the grid-search approach, and the leading-term-matching approach. 

The direct minimization approach is (believed to be) able to achieve similar performance as the grid-search approach, albeit with significantly reduced computational time, see \cite[Section~5.2]{gleich2012MomentBased}.
Conversely, the leading-term-matching approach can offer a noteworthy computational speed advantage, owing to its distinctive solution methodology. 
Its use cases are limited to Kronecker graphs satisfying some technical conditions, which may not always hold in practical scenarios, as detailed in \cite[Section~4.3]{gleich2012MomentBased}. 
\end{Remark}

As a consequence of the discussions in \Cref{rem:moment}, we adopt the direct minimization procedure when employing the moment-based approach, and test these methods on undirected Kronecker graphs.
We observe from \Cref{fig:compare_convex_moment} that:
\begin{itemize}
  \item[(i)]  for sparse graphs, the proposed \Cref{algo:meta} and the KronFit algorithm outperform the moment-based method; while for dense graphs, the moment-based approach exhibits a slight performance advantage over the two approaches; and
  \item[(ii)] the moment-based approach demonstrates a running time much lower than KronFit and even than standard \Cref{algo:meta} for sparse graphs (in fact even to that of accelerated \Cref{algo:meta} using RNLA techniques, so \Cref{rem:complexity}~and~\Cref{fig:accelerated} for further discussions and illustrations); and
  \item[(iii)] the running time of the moment-based approach, however, grows rapidly as the graph becomes denser, while the running time of the proposed approach stays within a reasonably acceptable range. 
\end{itemize}

 
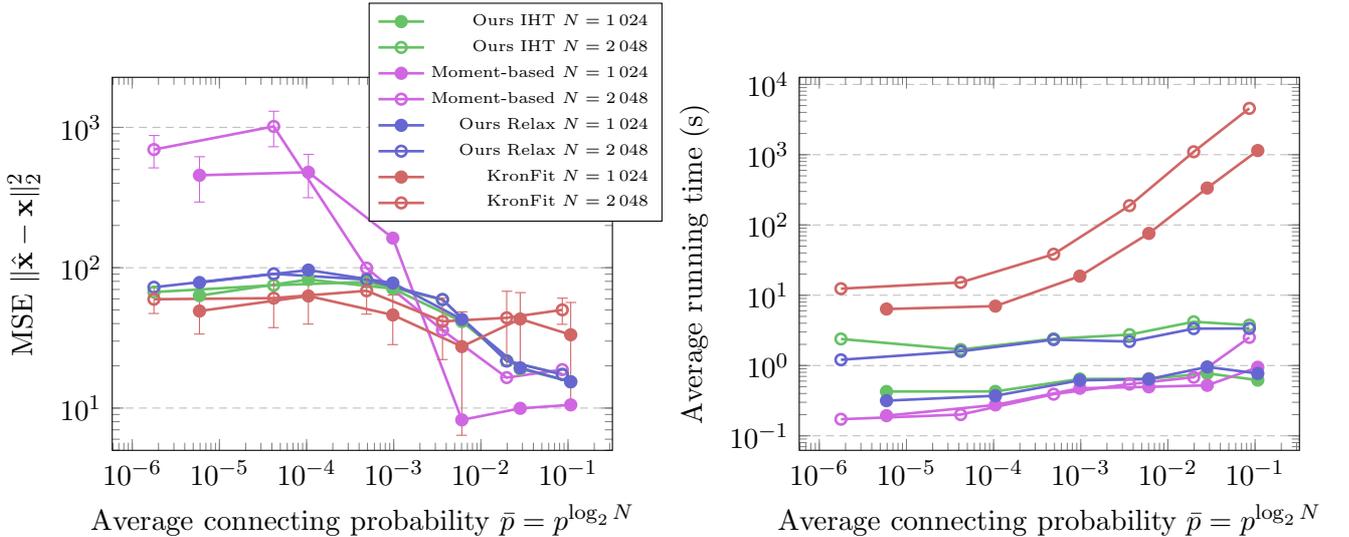
\begin{figure}[thb]
  \centering
  \begin{minipage}[b]{0.48\textwidth}
  \begin{tikzpicture}
\renewcommand{\axisdefaulttryminticks}{4} 
\pgfplotsset{every major grid/.style={densely dashed}}       
\tikzstyle{every axis y label}+=[yshift=-10pt] 
\tikzstyle{every axis x label}+=[yshift=5pt]
\pgfplotsset{every axis legend/.append style={cells={anchor=east},fill=white, at={(1.1,1.2)}, anchor=north east, font=\tiny}}
\begin{axis}[
width=\columnwidth,
height=.8\columnwidth,
ymin = 5,
ymajorgrids=true,
scaled ticks=true,
xlabel = { Average connecting probability $\bar p = p^{\log_2 N}$ },
ylabel = { MSE $\| \hat \x - \x \|_2^2$ },
scaled ticks=true,
ymode=log,
xmode=log
]
\addplot[mark=*,color=GREEN!60!white,line width=1pt] coordinates{
(0.300000^10,   63.2729 )(0.400000^10,  82.3895)(0.500000^10, 70.9075)(0.600000^10,   41.3417  )(0.700000^10, 19.0156)(0.800000^10,15.4009)
};
\addlegendentry{{Ours IHT $N = 1\,024$ }};
\addplot[mark=o,color=GREEN!60!white,line width=1pt] coordinates{
(0.300000^11,   67.0518)(0.400000^11, 75.0707)(0.500000^11, 79.2137)(0.600000^11,  59.2202)(0.700000^11, 21.6673)(0.800000^11,17.425)
};
\addlegendentry{{Ours IHT $N = 2\,048$ }};
\addplot[
mark=*,color=PURPLE!60!white,line width=1pt,
error bars/.cd,
y dir=both, y explicit,
] 
coordinates{
(0.300000^10, 455.8554 ) +- (0,162.0216)
(0.400000^10, 478.7571) +- (0,163.3968)
(0.500000^10,162.5238) +- (0,0.001616)
(0.600000^10, 8.2383) +- (0,0.0010407)
(0.700000^10,  9.957  ) +- (0,0.0021972)
(0.800000^10,10.5416) +- (0,0.0013998)
};
\addlegendentry{{Moment-based $N = 1\,024$ }};
\addplot[
mark=o,color=PURPLE!60!white,line width=1pt,
error bars/.cd,
y dir=both, y explicit,
] 
coordinates{
(0.300000^11,694.7213) +- (0,180.3916)
(0.400000^11, 1015.2309) +- (0,287.5895)
(0.500000^11,99.3327) +- (0,0.0088801)
(0.600000^11, 35.8615) +- (0,	0.073546)
(0.700000^11, 16.4876 ) +- (0,	0.0019986)
(0.800000^11, 18.8002  ) +- (0,0.0026839)
};
\addlegendentry{{Moment-based $N = 2\,048$ }};
\addplot[mark=*,color=BLUE!60!white,line width=1pt] coordinates{
(0.300000^10,78.4424)(0.400000^10,96.2016)(0.500000^10,77.7389)(0.600000^10,42.8566)(0.700000^10,19.3596)(0.800000^10,15.3863)
};
\addlegendentry{{ Ours Relax $N = 1\,024$ }};
\addplot[mark=o,color=BLUE!60!white,line width=1pt] coordinates{
(0.300000^11,72.5987)(0.400000^11,90.4576)(0.500000^11,82.3157)(0.600000^11,59.3903)(0.700000^11,21.6684)(0.800000^11,17.3659)
};
\addlegendentry{{ Ours Relax $N = 2\,048$ }};
\addplot[
mark=*,color=RED!60!white,line width=1pt,
error bars/.cd,
y dir=both, y explicit,
] 
coordinates{
(0.300000^10, 49.1478) +- (0,15.4691)
(0.400000^10, 62.8903) +- (0,23.2333)
(0.500000^10,  46.0704) +- (0,17.7445)
(0.600000^10,  27.4228) +- (0,21.0177)
(0.700000^10,   43.1384  ) +- (0,23.3751)
(0.800000^10, 33.2888) +- (0,23.2627)
};
\addlegendentry{{KronFit $N = 1\,024$ }};
\addplot[
mark=o,color=RED!60!white,line width=1pt,
error bars/.cd,
y dir=both, y explicit,
] 
coordinates{
(0.300000^11,59.6146) +- (0,12.4059)
(0.400000^11, 60.7025) +- (0,23.2638)
(0.500000^11,  68.5044) +- (0,	21.8136)
(0.600000^11,  41.4569) +- (0,	19.2902)
(0.700000^11, 43.9688 ) +- (0,	23.9756)
(0.800000^11,  50.0855) +- (0,10.5742)
};
\addlegendentry{{KronFit $N = 2\,048$ }};
\end{axis}
\end{tikzpicture}
  \end{minipage}
  \hfill{}%
  \begin{minipage}[b]{0.48\textwidth}
\begin{tikzpicture}
\renewcommand{\axisdefaulttryminticks}{4} 
\pgfplotsset{every major grid/.style={densely dashed}}       
\tikzstyle{every axis y label}+=[yshift=-10pt] 
\tikzstyle{every axis x label}+=[yshift=5pt]
\pgfplotsset{every axis legend/.append style={cells={anchor=east},fill=white, at={(0.02,0.00)}, anchor=north west, font=\tiny }}
\begin{axis}[
width=\columnwidth,
height=.8\columnwidth,
xlabel = { Average connecting probability $\bar p = p^{\log_2 N}$ },
ymode=log,
xmode=log,
ylabel = { Average running time (s) },
ymajorgrids=true,
scaled ticks=true,
]
\addplot[mark=*,color=GREEN!60!white,line width=1pt] coordinates{
(0.300000^10,0.4268)(0.400000^10, 0.4282)(0.500000^10,0.6441)(0.600000^10, 0.6471)(0.700000^10,0.7736)(0.800000^10,0.6200)
};

\addplot[mark=o,color=GREEN!60!white,line width=1pt] coordinates{
(0.300000^11, 2.3875)(0.400000^11,  1.6876)(0.500000^11,2.4012)(0.600000^11, 2.7513)(0.700000^11,4.1941)(0.800000^11,3.7581 )
};

\addplot[mark=*,color=PURPLE!60!white,line width=1pt] coordinates{
(0.300000^10,0.193400)(0.400000^10,0.275000)(0.500000^10,0.475000)(0.600000^10,0.496900)(0.700000^10,0.522600)(0.800000^10,0.945100)
};
\addplot[mark=o,color=PURPLE!60!white,line width=1pt] coordinates{
(0.300000^11,0.172100)(0.400000^11,0.200400)(0.500000^11,0.391100)(0.600000^11,0.544100)(0.700000^11,0.680100)(0.800000^11,2.5110)
};
\addplot[mark=*,color=BLUE!60!white,line width=1pt] coordinates{
(0.300000^10, 0.316300)(0.400000^10,0.371200)(0.500000^10,0.61600)(0.600000^10, 0.639600 )(0.700000^10,0.955100)(0.800000^10,0.770700)
};
\addplot[mark=o,color=BLUE!60!white,line width=1pt] coordinates{
(0.300000^11,1.206800)(0.400000^11, 1.592000 )(0.500000^11, 2.334600)(0.600000^11,2.195300)(0.700000^11,3.348400)(0.800000^11,3.369100)
};
\addplot[mark=*,color=RED!60!white,line width=1pt] coordinates{
(0.300000^10, 6.3676)(0.400000^10,6.9988)(0.500000^10,18.7124)(0.600000^10, 75.8326 )(0.700000^10,335.5459)(0.800000^10,1145.1141)
};
\addplot[mark=o,color=RED!60!white,line width=1pt] coordinates{
(0.300000^11,12.3988)(0.400000^11, 15.1995 )(0.500000^11, 38.3759)(0.600000^11,188.4528)(0.700000^11,1100.3826)(0.800000^11,4556.6589)
};
\end{axis}
\end{tikzpicture}
\end{minipage}
\caption{{Estimation MSEs (\textbf{left}) and running time (\textbf{right}) of the moment-based method, the KronFit algorithm and the proposed approach on random undirected Kronecker graphs
with $p$ ranging from $0.3$ to $0.8$, $\x = [4.75, 1.75, 1.75, -8.25]$, and $20\%$ vertices randomly shuffled, for $N = 1\,024$ and $2\,048$. Result obtained over $10$ independent runs.}}
\label{fig:compare_convex_moment}
\end{figure}

\section{Dataset Statistics}
\label{sec:datasets}

We present \Cref{table::dataset_statistics} below the statistics of the graph classification datasets used in \Cref{subsec:application_realistic}.
PROTEINS \cite{borgwardt2005protein}, NCI1 \cite{wale2008comparison}, and ENZYMES are chemical graphs, whereas IMDB-B, REDDIT-B, COLLAB, IMDB-M, and REDDIT-5K are social graphs.

\begin{table*}[h!] 
  \caption{Statistics of different graph datasets from~\cite{KKMMN2016}} 
  \label{table::dataset_statistics}
  \centering
  \begin{tabular}{lcccccccc}
    \toprule
    Dataset & Graphs & Classes & Average Number of Nodes & Average Number of Edges  \\
    \midrule
    PROTEINS & $1\,113$ & $2$ & $39.06$ & $72.82$ \\
    NCI1  & $4\,110$ & $2$ & $29.87$ & $32.30$ \\
    REDDIT-B & $2\,000$ & $2$ & $429.63$ & $497.75$ \\
    IMDB-B  & $1\,000$ & $2$ & $19.77$ & $96.53$ \\
    ENZYMES & $600$ & $6$ & $32.63$ & $62.14$ \\
    COLLAB & $5\,000$ & $3$ & $74.49$ & $2457.78$ \\
    IMDB-M & $1\,500$ & $3$ & $13.00$ & $65.94$ \\
    REDDIT-5K  & $4\,999$ & $5$ & $508.52$ & $594.87$ \\
    \bottomrule
  \end{tabular}
\end{table*}
\end{document}